\newcommand{\pushright}[1]{\ifmeasuring@#1\else\omit\hfill$\displaystyle#1$\fi\ignorespaces}
\newcommand{\pushleft}[1]{\ifmeasuring@#1\else\omit$\displaystyle#1$\hfill\fi\ignorespaces}
\newtheorem{theorem}{Theorem}
\newtheorem{proposition}{Proposition}
\newtheorem{definition}{Definition}
\newtheorem{lemma}{Lemma}
\renewenvironment{proof}{\noindent{\textbf{Proof:}}}{$\blacksquare$\vskip\belowdisplayskip}
\newenvironment{prevproof}[2]{\noindent {\bf {Proof ({#1}~\ref{#2}):}}}{$\blacksquare$\vskip \belowdisplayskip}
\definecolor{Red}{rgb}{1,0,0}
\definecolor{Blue}{rgb}{0,0,1}
\definecolor{Olive}{rgb}{0.41,0.55,0.13}
\definecolor{Green}{rgb}{0,1,0}
\definecolor{MGreen}{rgb}{0,0.8,0}
\definecolor{DGreen}{rgb}{0,0.55,0}
\definecolor{Yellow}{rgb}{1,1,0}
\definecolor{Cyan}{rgb}{0,1,1}
\definecolor{Magenta}{rgb}{1,0,1}
\definecolor{Orange}{rgb}{1,.5,0}
\definecolor{Violet}{rgb}{.5,0,.5}
\definecolor{Purple}{rgb}{.75,0,.25}
\definecolor{Brown}{rgb}{.75,.5,.25}
\definecolor{Grey}{rgb}{.5,.5,.5}
\definecolor{Black}{rgb}{0,0,0}
\newcommand{\gcal}{G}
\newcommand{\xcal}{\mathcal{X}}
\newcommand{\eps}{\varepsilon}
\newcommand{\ind}{\mathbbm{1}}
\newcommand{\E}{\mathbb{E}}
\renewcommand{\P}{\mathbb{P}}
\newcommand{\PP}{\widetilde{\P}}
\newcommand{\EE}{\widetilde{\E}}
\def\constf{C_{f}}
\def\constj{C_{j}}
\newcommand{\remove}[1]{{}}
\newcommand{\mergeforest}{\texttt{\textbf{coalesce}}}
\newcommand{\msc}{\texttt{\textbf{MSC}}}
\newcommand{\currentvertex}{v_{\rm c}}
\newcommand{\mred}[1]{\mathcal{M}_{{\rm R}#1}}
\newcommand{\mquant}[1]{\mathcal{M}_{{\rm Q}#1}}
\newcommand{\const}[1]{c_{#1}}
\newcommand{\Econdred}{\bar{\mathbb{E}}}
\newcommand{\Pcondred}{\bar{\mathbb{P}}}
\newcommand{\ve}{\varepsilon}
\renewcommand{\P}{\mathbb{P}}
\begin{document}

\title{\vspace{0cm}Coalescent-based species tree estimation:\\ a stochastic Farris transform
\footnote{Keywords: Phylogenetic Reconstruction, 
	Coalescent, Gene Tree/Species Tree, Distance Methods, Data Requirement.}
}

\author{
Gautam Dasarathy\footnote{Department of Electrical and Computer Engineering at Rice University. \texttt{gautamd@rice.edu}}
\and Elchanan Mossel\footnote{Department of Mathematics and IDS at the Massachusetts Institute of Technology. \texttt{elmos@mit.edu}}
\and Robert Nowak\footnote{Department of Electrical and Computer Engineering at the University of Wisconsin--Madison. \texttt{rdnowak@wisc.edu}}
\and
Sebastien Roch\footnote{Department of Mathematics at the University of Wisconsin--Madison.
Supported by NSF grants DMS-1149312 (CAREER) and DMS-1614242. \texttt{roch@math.wisc.edu}}
}
\date{}
\maketitle

\begin{abstract}
The reconstruction of a species phylogeny from genomic data faces two significant hurdles: 1) the trees describing the evolution of each individual gene---i.e., the gene trees---may differ from the species phylogeny and 2) the molecular sequences corresponding to each gene often provide limited information about the gene trees themselves. 
In this paper we consider an approach to species tree reconstruction that addresses both these hurdles. Specifically, we propose an algorithm for phylogeny reconstruction under the multispecies coalescent model with a standard model of site substitution. 
The multispecies coalescent is commonly used
to model gene tree discordance due to incomplete
lineage sorting, a well-studied population-genetic effect.

In previous work, an information-theoretic trade-off was derived in this context between the number of loci, $m$, needed for an accurate reconstruction and the length of the locus sequences, $k$. It was shown that to reconstruct
an internal branch of length $f$, one needs $m$
to be of the order of $1/[f^{2} \sqrt{k}]$. That previous result was obtained under the molecular clock assumption, i.e., under the assumption that mutation rates (as well as population sizes) are constant across the species phylogeny.

Here we generalize this result beyond the restrictive molecular clock assumption, and obtain a new reconstruction algorithm that has the same data requirement (up to log factors). Our main contribution
is a novel reduction to the molecular clock case under the multispecies coalescent. As a corollary, we also obtain
a new identifiability result of independent 
interest: for any species tree with
$n \geq 3$ species, the rooted species tree can be 
identified from the distribution of its unrooted
{\em weighted} gene trees even in the absence of
a molecular clock. 
\end{abstract}

\thispagestyle{empty}

\clearpage


\section{Introduction}

Modern molecular sequencing technology has provided a wealth of data to assist biologists in the inference of evolutionary relationships between species.  Not only is it now possible to quickly sequence a single gene across a wide range of species, but in fact thousands of genes---or entire genomes---can be sequenced simultaneously. With this abundance of data comes new algorithmic and statistical challenges. One such challenge arises because phylogenomic inference entails dealing with the interplay of \emph{two} processes, as we now explain.

While the tree of life (also referred to as a species phylogeny) depicts graphically the history of speciation of living organisms, each gene within the genomes of these organisms \textit{has its own history}. That history is captured by a gene tree. In practice, by contrasting the DNA sequences of a common gene across many current species, one can reconstruct the corresponding gene tree. Indeed the accumulation of mutations along the gene tree reflects, if imperfectly, the underlying history. Much is known about the reconstruction of single-gene trees, a subject with a long history; see~\cite{SempleSteel:03,Felsenstein:04,yang2014molecular,Steel:16,Warnow:u} for an overview. The theoretical computer science literature, in particular, has contributed a deep understanding of the computational complexity and data requirements of the problem, under standard stochastic models of sequence evolution on a tree. See, e.g.,~\cite{GrahamFoulds:82,ABFPT:99,FarachKannan:99,ErStSzWa:99a,ErStSzWa:99b,Atteson:99,CrGoGo:02,SteelSzekely:02,KiZhZh:03,Mossel:03,Mossel:04a,ChorTuller:06,Roch:06,MosselRoch:06,BoChMoRo:06,Mihaescu2009,DaMoRo:11a,DaMoRo:11b,AnDaHaRo:12,GronauMS12,MiHiRa:13,DaskalakisRoch:13,RochSly:u}. 

But a gene tree is only an approximation to the species phylogeny. Indeed various evolutionary mechanisms lead to \textit{discordance} between gene trees and species phylogenies. These include the transfer of genetic material between unrelated species, hybrid speciation events and a population-genetic effect known as incomplete lineage sorting~\cite{Maddison:97}. The wide availability of genomic datasets has brought to the fore the major impact these discordances have on phylogenomic inference~\cite{DeBrPh:05,DegnanRosenberg:09}. 
As a result, in addition to the stochastic process governing the evolution of DNA sequences on a fixed gene tree, one is led to model the structure of the gene tree \textit{itself}, in relation to the species phylogeny, through a separate stochastic process. The inference of these complex, two-level evolutionary models is an active area of research. See the recent monographs~\cite{huson2010phylogenetic,Steel:16,Warnow:u} for
an introduction.

In this paper, we focus on incomplete lineage sorting (from hereon ILS) and consider the reconstruction of a species phylogeny
from multiple genes (or loci) under a standard population-genetic
model known as the multispecies coalescent~\cite{RannalaYang:03}. 
The problem is of
great practical interest
in computational evolutionary biology
and is currently the subject of intense study;
see e.g.~\cite{LYK+:09,DegnanRosenberg:09,ALP:+:12,Nakhleh:13} for a survey.
There is in particular a growing body of theoretical results in this area~\cite{DegnanRosenberg:06,DeDGiBr+:09,DeGiorgioDegnan:10,MosselRoch:10a,LiYuPe:10,AlDeRh:11,AlDeRh:11b,Roch13,DaNoRo:14,RochSteel:14,DeGDe:14,CHIFMAN2015,RochWarnow:15,MosselRoch:15,AlDeRh:17,ShekharRochMirarab:u}, although
much remains to be understood. This inference problem
is also closely related to another active
area of research, the reconstruction
of demographic history in population genetics.
See e.g.~\cite{MyFePa:08,BhaskarSong:14,KMR+:15}
for some recent theoretical results.

A significant fraction of prior rigorous work on species phylogeny estimation in the presence of ILS has been aimed at the case where ``true'' gene trees are assumed to be available. However, in reality, one needs to estimate gene trees from DNA sequences, leading to reconstruction errors, and indeed there has been a recent thrust towards understanding the effect of this important source of error in phylogenomic inference, both from empirical~\cite{KubatkoDegnan:07,MiBaWa:16} and theoretical~\cite{MosselRoch:10a,DeGDe:14,RochSteel:14,RochWarnow:15,ShekharRochMirarab:u} standpoints. Another option, which we adopt here, is
to bypass the reconstruction of gene trees altogether and infer the species history directly from sequence data~\cite{dasarathy2015data,MosselRoch:15,CHIFMAN2015}. 

In previous work on this latter approach~\cite{MosselRoch:15}, an optimal information-theoretic 
trade-off was derived between the
number of genes $m$ needed to accurately reconstruct a species phylogeny
and the length of the genes $k$ (which is linked to the quality of the phylogenetic signal that can be extracted from each
separate gene). Specifically, it was shown that $m$ needs to scale like $1/[f^{2} \sqrt{k}]$, where $f$ is the length of the shortest branch in the tree (which controls the extent of the ILS). This result was obtained under a restrictive \emph{molecular clock} assumption, where the leaves are equidistant from the root; in essence, it was assumed that the mutation rates and population sizes do not vary across the species phylogeny, which is rarely the case in practice.

In the current work, we design and analyze a new reconstruction algorithm that achieves the same optimal data requirement (up to log factors) beyond the molecular clock assumption. Our key contribution is of independent interest: we show how to transform sequence data to appear as though it was generated under the multispecies coalescent with a molecular clock. We achieve this through a novel reduction which we call a \emph{stochastic Farris transform}. Our construction relies on
a new identifiability result: for any species phylogeny with
$n \geq 3$ species, the rooted species tree can be 
identified from the distribution of the unrooted
{\em weighted} gene trees even in the absence of
a molecular clock. 

We state our main results formally in Section~\ref{sec:back} and describe our new reduction in Section~\ref{sec:reduction-step}. The proofs are in Sections~\ref{sec:ident-proofs},~\ref{sec:main-proof},~\ref{sec:app-redux} and~\ref{sec:app-quantile}.

\section{Background and main results}
\label{sec:back}

In this section, we state formally our main results and provide a high-level view of the proof.

\subsection{Basic definitions}

We begin with a brief description of our modeling assumptions. More details on the models, which are standard in the phylogenetic literature (see e.g.~\cite{Steel:16}), are provided in Section~\ref{section:models}.

\paragraph{Species phylogeny v. gene trees}
A species phylogeny is a graphical depiction of the evolutionary history of a set of species. The leaves of the tree correspond to extant species while internal vertices indicate a speciation event. Each edge (or branch) corresponds to an ancestral population and will be described here by two numbers: one that indicates the amount of time that the corresponding population lived, and a second one that specifies the rate of genetic mutation in that population. Formally, we define the species phylogeny (or tree) as follows. 
\begin{definition}[Species phylogeny]
\label{def:species-tree+metric}
	A species phylogeny $S = (V_s, E_s; r, \vec{\tau},\vec{\mu})$ is a directed tree rooted at $r\in V_s$ with vertex set $V_s$, edge set $E_s$, and $n$ labelled leaves $L = \left\{ 1,2,\ldots,n \right\}$ such that (a) the degree of all internal vertices is $3$ except for the root $r$ which has degree $2$, and (b) each edge $e\in E_s$ is associated with a length $\tau_e\in (0,\infty)$ and a mutation rate $\mu_e \in (0,\infty)$. 
\end{definition}
\noindent To be more precise, as is standard in coalescent theory (see, e.g., \cite{Steel:16}), the length $\tau_e$ of a branch $e\in E_s$ is expressed in coalescent time units, which is the duration of the branch $t_e$ divided by its population size $N_e$. That is, $\tau_e = t_e/N_e$. 
We pictorially represent species phylogenies as thick shaded trees; see Fig.~\ref{fig:weighted-vs-unweighted-species-tree} for an example with $n=3$ leaves. 
\begin{figure*}[h!!]
    \centering
        \includegraphics[scale=2.8]{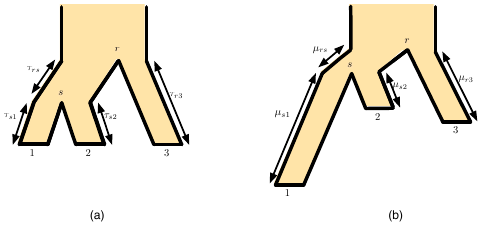}
\caption{(a) A species phylogeny with $n=3$ leaves $\{1,2,3\}$ and two internal vertices $r$ and $s$. The branch lengths are denoted by $\{\tau_{s1}, \tau_{s2}, \tau_{rs}, \tau_{r3}\}$. Depicted above is the special case where all population sizes are the same, in which case the species phylogeny is ``ultrametric,'' that is, all leaves are the same distance from the root under $\vec{\tau}$. (b) The same phylogeny with branches stretched by the corresponding mutation rates. The mutation rate-weighted branch lengths are denoted by $\{\mu_{s1}, \mu_{s2}, \mu_{rs}, \mu_{r3}\}$, with respect to which ultrametricity is in general lost---our focus here.}
\label{fig:weighted-vs-unweighted-species-tree}
\end{figure*}

While a species phylogeny describes the history of speciation, each gene has its own history which is captured by a gene tree.  
\begin{definition}[Gene trees]
\label{def:gene-tree}
A gene tree $G^{(i)} = (V^{(i)}, E^{(i)}; r, \vec{\delta}^{(i)})$	corresponding to gene $i$ is a directed tree rooted at $r$ with vertex set $V^{(i)}$ and edge set $E^{(i)}$, and the same labeled leaf set $L = \left\{ 1,2,\ldots, n \right\}$ as $S$ such that (a) the degree of each internal vertex is $3$, except the root $r$ whose degree is $2$, and (b) each branch $e\in E^{(i)}$ is associated with a branch length $\delta_e^{(i)}\in (0,\infty)$.
\end{definition}
\noindent In essence, these gene trees ``evolve'' on the species phylogeny. More specifically,
following \cite{rannala2003bayes}, we assume that a multispecies coalescent (MSC) process produces $m$ independent random gene trees $G^{(1)}, G^{(2)}, \ldots, G^{(m)}$. This process is parametrized by the species phylogeny $S$. 
In words, proceeding backwards in time, in each population, every pair of lineages entering from descendant populations merge at a unit exponential rate. We describe this process formally in Algorithm~\ref{alg:msc-process} in Section~\ref{app:msc-details}. For the present discussion, it suffices to think of the MSC as a random process generating ``noisy versions'' of the species phylogeny. We highlight one key feature of the gene trees: their topology may be \emph{distinct from that of the species phylogeny}.
This discordance, which in this context is referred to as incomplete lineage sorting (see e.g.~\cite{DegnanRosenberg:09}), is a major challenge for species tree estimation from multiple genes. See Figure~\ref{fig:msc-example-alg} for an illustration in the case $n=3$.
\begin{figure*}[h!]
	\begin{center}
		\includegraphics[scale = 1.75]{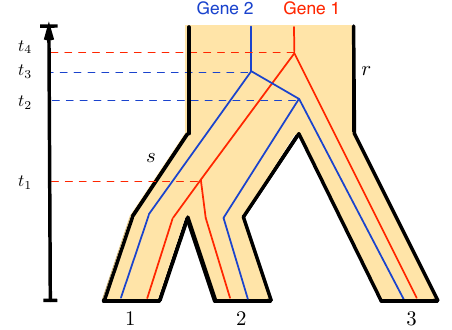}
	\end{center}
	\caption{Two sample draws from the MSC on a species phylogeny with $n=3$ leaves.  The (rooted) topology of Gene 1 (red gene) agrees with the topology of the underlying species phylogeny (i.e., species $1$ and $2$ are closest), while the topology of Gene 2 (blue gene) does not (here species $2$ and $3$ are closest instead). 
	}
	\label{fig:msc-example-alg}
\end{figure*}

\paragraph{Sequence data and inference problem}
The gene trees are not observed directly. Rather, they are typically inferred from sequence data ``evolving'' on the gene trees.
We model this sample generation process according to the standard Jukes-Cantor (JC) model (see, e.g., \cite{Steel:16}). That is, given a gene tree $G^{(i)} = (V^{(i)},E^{(i)}; r, \vec{\delta}^{(i)})$, we associate to each $e\in E^{(i)}$, a probability 
\begin{equation*}
p_{e}^{(i)} = \frac{3}{4}\left( 1 - e^{-\frac{4}{3}\delta_{e}^{(i)}} \right),	
\end{equation*}
where $\delta_{e}^{(i)}$ is the mutation rate-weighted edge length.
In words, the corresponding gene $i$ is a sequence of length $k$ in $\left\{{\tt A, T, G, C}\right\}^k$. Each position in the sequence evolves independently, starting from a uniform state in $\left\{{\tt A, T, G, C}\right\}$ at the root. Moving away from the root,
a substitution occurs on edge $e$ with probability $p_{e}^{(i)}$,
in which case the state changes to state chosen uniformly among the remaining states. After repeating this process for all positions, one obtains a sequence of length $k$ for each leaf of $G^{(i)}$, for each $i \in [m]$---that is our input.
A full algorithmic description of the Jukes-Cantor process is provided as Algorithm~\ref{alg:jukes-cantor} in Section~\ref{section:models}.

For gene $i$, we will let $\left\{ \xi^{ij}_x \,:\, j\in [k], x\in L \right\}$ denote the data generated at the leaves $L$ of the tree $G^{(i)}$ per the Jukes-Cantor process, the superscript $j$ runs across the positions of the gene sequence. To simplify the notation, we denote $\xi^{ij} = (\xi^{ij}_x)_{x \in L}$. 
The species phylogeny estimation problem can then be stated as follows:~
\begin{quote}
	{\em The $n\times m\times k$ data array $\left\{ \xi^{ij} \right\}_{i\in [m], j\in [k]}$ is generated according to the Jukes-Cantor process on the $m$ gene trees, each of which in turn is generated according to the multi-species coalescent on $S$. The goal is to recover the topology of the species phylogeny $S$ from  $\left\{ \xi^{ij} \right\}_{i\in [m], j\in [k]}$.}
\end{quote}
We abbreviate this \emph{two-step} data generation process by saying that $\left\{ \xi^{ij} \right\}_{i\in [m], j\in [k]}$ is generated according to the MSC-JC$(m,k)$ process on $S$. 

\subsection{Main result}
\label{sec:main-result}

\begin{algorithm}[h!]
	\caption{Quantile-based triplet test}
	\label{alg:questAdditiveQuantile}
	\begin{algorithmic}[1]
		\REQUIRE Sequence output by Algorithm~\ref{alg:questAdditiveReduction} $\{\xi_{x,N}^{ij}\,:\, x\in \mathcal{X} = \left\{ 1,2,3 \right\}, i\in \mquant{} ,j\in [k]\}$. A partition of the set of genes $[m] = \mred{} \sqcup \mquant{}$, where $\mred{} = \mred1\sqcup\mred2$ and $\mquant{} = \mquant1\sqcup \mquant2$ satisfy the conditions of Proposition~\ref{thm:main-full}.
		\STATE For each $x,y \in \mathcal{X}$ and $i\in \mquant{}$, let $\widehat{q}_{xy}^i = \sum_{j=1}^k \ind\{\xi_{x,N}^{ij}\neq \xi_{y,N}^{ij}\}$.
		\STATE Set $\alpha \triangleq \max\left\{m^{-1} \log m,k^{-0.5} \sqrt{\log k}\right\}$, and partition $\mquant{} = \mquant{1} \sqcup \mquant{2}$ such that $\left| \mquant{1} \right|, \left| \mquant{2} \right|$ satisfy the conditions in Proposition~\ref{thm:main-full}.
		\vspace{4mm}		
		\STATEx \underline{\bf Ultrametric Quantile Test on $\left\{ \widehat{q}_{xy}^i : {x,y \in \mathcal{X}, i\in \mquant{1}}\right\}$}
		\STATE For each pair of leaves $x,y \in \mathcal{X}$, compute $\widehat{q}^{(\const{3} \alpha)}_{xy}$, the $\const{3} \alpha$-th quantile with respect to the data $\left\{ \widehat{q}_{xy}^i: i\in \mquant{1} \right\}$. 
		The constant $\const{3}$ is as in Proposition~\ref{prop:quantileBehavior}. Define 
		\begin{equation*}
		\widehat{q}_{\ast} \triangleq \max_{x,y \in \left\{ 1,2, 3 \right\}}\widehat{q}^{(\const{3} \alpha)}_{xy}.
		\end{equation*}
		\STATE Next, for $x,y \in \mathcal{X}$, define a similarity measure
		\begin{equation*}
		\widehat{s}_{xy} \triangleq \frac{1}{\left| \mquant{2} \right|} \left|\left\{ i\in \mquant{2}: \widehat{q}^{i}_{xy}\leq \widehat{q}_{\ast} \right\}\right|.
		\end{equation*}
		\STATEx {\hspace{-7mm}\bf Return} Declare that the topology is $xy|z$ if $\widehat{s}_{xy} > \max\left\{ \widehat{s}_{xz}, \widehat{s}_{yz} \right\}.$
	\end{algorithmic}
\end{algorithm}

\begin{algorithm}[h!]
	\caption{Reduction step}
	\label{alg:questAdditiveReduction}
	\begin{algorithmic}[1]
		\REQUIRE Sequences $\{\xi_x^{ij}\,:\, x\in \mathcal{X} = \left\{ 1,2,3 \right\}, i\in [m],j\in [k]\}$. A partition of the set of genes $[m] = \mred{} \sqcup \mquant{}$, where $\mred{} = \mred1\sqcup\mred2$ and $\mquant{} = \mquant1\sqcup \mquant2$ satisfy the conditions of Proposition~\ref{thm:main-full}. 
		\STATE For each $x,y\in \mathcal{X}$ and $i\in \mred{}$, define $\widehat{p}_{xy}^i = \frac{1}{k} \sum_{j = 1}^k \ind\{\xi^{ij}_{x} \neq \xi^{ij}_{y}\}$, $\widehat{p}^{i\downarrow}_{xy} = \frac{2}{k} \sum_{j = 1}^{k/2} \ind\{\xi^{ij}_{x} \neq \xi^{ij}_{y}\} $, and $\widehat{p}^{i\uparrow}_{xy} = \frac{2}{k} \sum_{j = k/2 + 1}^{k} \ind\{\xi^{ij}_{x} \neq \xi^{ij}_{y}\}$. 
		\STATE Let $\{x_i, y_i\}$, $i=1,2,3$, be the three distinct (unordered) pairs of distinct leaves in $\mathcal{X}$.
		\FOR {i = 1,2}
		
		\vspace{2mm}
		\STATEx \quad\underline{\bf Fixing gene tree topologies}\vspace{1mm}
		
		\STATE Let $x = x_i$ and $y = y_i$.
		\STATE Let $z$ be the unique element in $\xcal - \{x,y\}$.
		\STATE Compute the empirical quantiles $\widehat{p}_{xy}^{(1/3)}$, $\widehat{p}_{xz}^{(2/3)}$, $\widehat{p}_{xz}^{(5/6)}$, $\widehat{p}_{yz}^{(2/3)}$, and $\widehat{p}_{yz}^{(5/6)}$ from the loci
		in $\mred{1}$. 
		For instance, to compute $\widehat{p}_{xy}^{(1/3)}$, sort the set $\left\{ \widehat{p}^{i}_{xy}: i\in \mred{1} \right\}$ in ascending order and pick the $\left\lfloor \frac{\left| \mred{1} \right|}{3} \right\rfloor$-th element, breaking ties arbitrarily.  
		\STATE Set $I := \left\{ i\in \mred{2}: \widehat{p}_{xy}^{i\downarrow} \leq \widehat{p}_{xy}^{(1/3)}, \widehat{p}_{xz}^{(2/3)}\leq \widehat{p}_{xz}^{i\downarrow}\leq \widehat{p}_{xz}^{(5/6)}, \widehat{p}_{yz}^{(2/3)}\leq \widehat{p}_{yz}^{i\downarrow} \leq \widehat{p}_{yz}^{(5/6)}\right\}$. 
		
		\vspace{4mm}
		\STATEx \quad\underline{\bf Estimation of differences $\Delta_{xy}$}\vspace{1mm}
		
		\STATE Set $\widehat{p}^I_{xz} := \frac{1}{\left| I \right|}\sum_{i\in I}\widehat{p}_{xz}^{i\uparrow}$, and similarly for $\widehat{p}_{yz}^I$. 
		\STATE Set $\widehat{\Delta}_{xy} := - \widehat{\Delta}_{yx} := -\frac{3}{4}\log\left( \frac{1 - \frac{4}{3}\widehat{p}_{yz}^I}{1 - \frac{4}{3}\widehat{p}_{xz}^I} \right)$
		\ENDFOR
		\STATE Let $z_3$ be the unique element in $\xcal - \{x_3,y_3\}$.
		\STATE Set $\widehat{\Delta}_{x_3 y_3} := \widehat{\Delta}_{x_3 z_3} - \widehat{\Delta}_{y_3 z_3}$.
		\vspace{4mm}
		\STATEx \underline{\bf Stochastic Farris transform}\vspace{1mm}
		\STATE Find a permutation $\{x,y,z\}$ of $\mathcal{X}$ such that $\min\{\widehat{\Delta}_{zx}, \widehat{\Delta}_{zy}\} \geq  0$. 
		\STATE 
		For each gene $i\in \mquant{}$ and $j \in [k]$, set $\xi_{z,N}^{ij} = \xi^{ij}_z$. Also set $\xi_{x,N}^{ij} = \xi^{ij}_x$ with probability $ 1- p (\widehat{\Delta}_{zx})$ and otherwise choose $\xi_{x,N}^{ij}$ uniformly from $\{{\tt A, T, G, C}\}\setminus \xi^{ij}_{x}$. Do the same to $\xi^{ij}_{y}$ (with $\widehat{\Delta}_{yz}$ instead of $\widehat{\Delta}_{xz}$) to obtain $\xi^{ij}_{y,N}$.
		\vspace{4mm}

		\STATEx {\hspace{-7mm}\bf Return} ``noisy'' sequence data $\left\{\xi^{ij}_{x,N}: i\in \mquant{}, j\in [k], x\in \mathcal{X}  \right\}$
	\end{algorithmic}
\end{algorithm}

We now state our main result for the species phylogeny estimation problem. 
For any 3 leaves $x, y, z \in L$, the species phylogeny $S$ restricted to these three leaves has one of three possible rooted topologies: $xy|z$, $xz|y$, or $yz|x$. For instance, $12|3$ is depicted in Figure~\ref{fig:weighted-vs-unweighted-species-tree} (a) and indicates that $1$ and $2$ are closest. It is a classical phylogenetic result that if one is able to correctly reconstruct the topology of all triples of leaves in $L$, then the topology of the full species phylogeny can be correctly reconstructed as well (see e.g., \cite{Steel:16}).
Therefore, to simplify the presentation, in what follows our algorithms and theoretical guarantees are stated for a fixed triple $\mathcal{X} = \{1, 2, 3\}$ (without loss of generality) among the set of leaves $L$. 

Our main contribution is a \emph{novel polynomial-time reconstruction algorithm} for the species phylogeny estimation problem, along with a \emph{rigorous data requirement which is optimal} (up to log factors) by the work of~\cite{MosselRoch:15}. Moreover, unlike~\cite{MosselRoch:15}, our results hold when \emph{mutation rates and populations sizes are allowed to vary} across the species phylogeny. Our reconstruction algorithm comprises two steps, which are detailed as Algorithm~\ref{alg:questAdditiveQuantile} and Algorithm~\ref{alg:questAdditiveReduction}. Our data requirement applies to an unknown species phylogeny in the following class. We assume that: mutation rates are in the interval $(\mu_L, \mu_U)$; leaf-edge lengths are in $(f', g')$; and internal-edge lengths are in $(f, g)$. We suppress the dependence on $\mu_L, \mu_U, f', g', g$, which we think of as constants, and focus here on the role of $f$. The latter indeed plays a critical role in both the random processes described above. Short internal branches are known to be hard to reconstruct from sequence data even when dealing with a single gene tree~\cite{SteelSzekely:02} and a smaller $f$ also leads to more discordance between gene trees~\cite{rannala2003bayes}. We also suppress the dependence on the number of leaves $n = |L|$, which we also consider here to be a constant (see the concluding remarks in Section~\ref{sec:conclusion} for more on this).

We state here a simplified version of our results (the more general statement appearing as Proposition~\ref{thm:main-full} in Section~\ref{sec:main-proof}). Specifically, we answer the following question: as $f \to 0$, how many genes $m$ of length $k$ are needed for a correct reconstruction with high probability? For technical reasons, our results apply only when $k$ grows at least polynomially with $f$ (with an arbitrarily small exponent).
Throughout, we use the notation $\gtrsim$ (similarly, $\lesssim$) to indicate that constants and $\text{poly}(\log f^{-1})$ factors are suppressed in a lower bound. Recall that $x \lor y = \max\{x, y\}$.
\begin{theorem}[Data requirement]\label{thm:main}
Suppose that we have sequence data $\left\{ \xi^{ij} \right\}_{i\in [m], j\in [k]}$ generated according to the MSC-JC$(m,k)$ process on a species phylogeny $S = (V_s, E_s; r, \vec{\tau},\vec{\mu})$ . The mutation rates, leaf-edge lengths and internal-edge lengths are respectively in $(\mu_L, \mu_U)$, $(f', g')$ and $(f, g)$.  We assume further that there is $C > 0$ such that $k \gtrsim  f^{-C}$. Then Algorithm~\ref{alg:questAdditiveQuantile} correctly identifies the topology of $S$ restricted to $\mathcal{X} = \{1,2,3\}$ with probability at least $95\%$ provided that
\begin{equation}
\label{eq:main-condition}
m \gtrsim 
\frac{1}{f} \lor \frac{1}{\sqrt{k} f^2}.
\end{equation}
\end{theorem}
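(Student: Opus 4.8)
The plan is to prove Theorem~\ref{thm:main} by a two-phase argument mirroring the two-algorithm structure, following the reduction-to-clock strategy that underlies \cite{MosselRoch:15}. First I would show that the reduction step (Algorithm~\ref{alg:questAdditiveReduction}) converts the non-clock data on the triple $\xcal=\{1,2,3\}$ into a sample that is distributed, up to a controllable error, as MSC-JC output on an \emph{ultrametric} (mutation-weighted) species tree with the same rooted topology and internal branch length $f$; then I would invoke the behavior of the quantile test (Algorithm~\ref{alg:questAdditiveQuantile}) in the molecular-clock regime to conclude correct reconstruction under the stated data requirement. The detailed bookkeeping is carried out in Proposition~\ref{thm:main-full}, from which Theorem~\ref{thm:main} follows by specializing the gene partition $[m]=\mred{}\sqcup\mquant{}$ and the threshold $\alpha$ to the given regime.

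For Phase~1 I would first establish the accuracy of the estimates $\widehat\Delta_{xy}$. The key structural fact is that, with a single lineage sampled per species, no coalescence can occur on pendant branches, so on the triple the gene-tree (hence ILS) distribution depends only on the internal coalescent length $f$ and is insensitive to the pendant $\tau$'s and to population-size variation; the only obstruction to the clock is the heterogeneity of the mutation rates $\mu_e$, which enters purely through the JC substitution probabilities. The topology-fixing step selects from $\mred2$ those loci whose gene trees have a prescribed rooted topology, using the first-half statistics $\widehat p^{i\downarrow}$ against quantiles computed on $\mred1$; conditioned on this selection the second-half statistics $\widehat p^{i\uparrow}_{xz},\widehat p^{i\uparrow}_{yz}$ are independent, and their JC-corrected difference isolates the mutation-weighted leaf-height difference $\Delta_{xy}$. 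Standard concentration for the per-site indicators, together with the smoothness of $t\mapsto-\tfrac34\log(1-\tfrac43 t)$ away from $3/4$, then gives $|\widehat\Delta_{xy}-\Delta_{xy}|\lesssim |\mred2|^{-1/2}$ with high probability (the finite-$k$ correction being negligible under $k\gtrsim f^{-C}$). Feeding $\widehat\Delta$ into the stochastic Farris transform---passing each site of the shorter leaves through an extra JC channel of weight $\widehat\Delta_{zx},\widehat\Delta_{zy}$ toward the tallest leaf $z$---equalizes the expected leaf-to-root mutation distances, producing on $\mquant{}$ a sample agreeing with genuine clock MSC-JC data up to a residual pendant asymmetry of size $\eta:=\max_{xy}|\widehat\Delta_{xy}-\Delta_{xy}|$.

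Phase~2 then applies Proposition~\ref{prop:quantileBehavior} to this near-clock data. On an ultrametric triple with internal branch $f$, the correct pair $xy$ carries an excess mass of order $f$ at small pairwise distances coming from recent coalescence in the internal branch; the adaptive threshold $\widehat q_\ast$ (the max over pairs of the $\const{3}\alpha$-quantile) is calibrated so that $\widehat s_{xy}$ exceeds $\widehat s_{xz}$ and $\widehat s_{yz}$ exactly when this excess is present. Tracking the fluctuations of the quantiles and of the counts $\widehat s$ over $|\mquant{}|\asymp m$ loci, each built from $k$ sites, shows that the correct ordering is recovered with failure probability below a small constant precisely when $m\gtrsim\tfrac1f\lor\tfrac1{\sqrt k f^2}$, the first term being the pure-coalescent (large-$k$) limit and the second the finite-$k$ regime governed by the $1/\sqrt k$ per-locus sequence noise; the hypothesis $k\gtrsim f^{-C}$ and the choice $\alpha=\max\{m^{-1}\log m,\,k^{-1/2}\sqrt{\log k}\}$ make these fluctuations uniform over the three pairs. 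A union bound over the Phase~1 and Phase~2 failure events, with constants set so the total is at most $5\%$, completes the argument.

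The step I expect to be the main obstacle is showing that the residual asymmetry $\eta$ from imperfect $\Delta$-estimation does not corrupt the delicate $O(f)$ signal the test relies on. Since only $m\gtrsim 1/f$ genes are available in the large-$k$ regime, $\eta$ can be as large as $\sqrt f$, which exceeds the nominal signal $f$; the argument must therefore exploit that a pendant stretch moves the two distance distributions touching that leaf \emph{together}, so that its first-order effect cancels in the relevant comparisons and in the adaptive threshold $\widehat q_\ast$, leaving only a symmetric or higher-order contribution dominated by the true topology signal. Making this cancellation quantitative---and ensuring it survives the empirical quantile selection that couples $\mred{}$ and $\mquant{}$, which is precisely why the data is split into independent site-halves and independent gene blocks---is the technical heart of the reduction.
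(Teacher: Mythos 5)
Your two-phase outline (reduction via Algorithm~\ref{alg:questAdditiveReduction}, then a robustness analysis of the quantile test in Algorithm~\ref{alg:questAdditiveQuantile}) matches the paper's proof of Proposition~\ref{thm:main-full}, but there is a genuine gap in your Phase~1, and it then forces you into a fix that does not work. You claim $|\widehat{\Delta}_{xy}-\Delta_{xy}|\lesssim |\mred2|^{-1/2}$, treating the per-site averaging as a negligible ``finite-$k$ correction.'' This rate is far too weak, and it is not what the construction achieves. The whole point of conditioning on the selected gene set $I$ having rooted topology $xy|z$ (Proposition~\ref{prop:reduction-fix-top}) is that, \emph{given} the gene trees, the coalescent contributions to $\delta^i_{xz}$ and $\delta^i_{yz}$ are equal almost surely --- they come from the very same coalescence event in the root population, i.e.\ $\Gamma^i_{xz}=\Gamma^i_{yz}$ --- so the gene-tree randomness cancels exactly and contributes no $|\mred2|^{-1/2}$ noise floor at all. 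Moreover, because $\widehat{p}^I_{xz}$ and $\widehat{p}^I_{yz}$ average over all $k|I|/2$ conditionally independent site comparisons \emph{before} the logarithmic Jukes--Cantor correction is applied, the only remaining randomness is sequence noise, and Proposition~\ref{prop:reduction-height-diff} gives $|\widehat{\Delta}_{xy}-\Delta_{xy}|\le \phi/2$ except with probability $4\exp(-c\, k|\mred2|\phi^2)$, i.e.\ an error rate of order $(k|\mred2|)^{-1/2}$, not $|\mred2|^{-1/2}$. With $|\mred2|\gtrsim 1\lor (kf^2)^{-1}$ (which fits inside the budget $m\gtrsim f^{-1}\lor k^{-1/2}f^{-2}$), this yields $\eta\lesssim f/\sqrt{\log k}$, exactly the $\phi$-closeness that the quantile-test analysis (Propositions~\ref{prop:quantileBehavior} and~\ref{prop:quantileTestPop}) requires. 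Incidentally, the hypothesis $k\gtrsim f^{-C}$ is used in the quantile-deviation and CDF-perturbation arguments, not to make finite-$k$ effects in the $\Delta$-estimation negligible.

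Starting from the weaker rate, you correctly observe that $\eta$ could then be as large as $\sqrt{f}\gg f$, and you propose to rescue the test by a ``first-order cancellation'' of pendant-stretch errors. This step would fail. The errors $\eta_2=\widehat{\Delta}_{12}-\Delta_{12}$ and $\eta_3=\widehat{\Delta}_{13}-\Delta_{13}$ are \emph{systematic}: every gene in $\mquant{}$ is stretched by the same erroneous amounts, so they do not average out over loci. They shift the three pairs differently --- pair $(1,2)$ by $\eta_2$, pair $(1,3)$ by $\eta_3$, and pair $(2,3)$ by $\eta_2+\eta_3$ --- and since the MSC distance densities are bounded above and below (cf.~\eqref{lem:mscQuantile}), the similarity scores are biased so that $s_{12}-s_{13}$ acquires an additive error of order $|\eta_2-\eta_3|$. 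The common threshold $\widehat{q}_\ast$ cancels nothing here (it shifts all three scores together), and there is no mechanism forcing $\eta_2-\eta_3$ to be smaller than the individual errors; an $O(\sqrt{f})$ systematic bias therefore swamps the $O(p(3f/4))=O(f)$ signal that separates the correct pair. The paper's robustness analysis goes the other way: it \emph{assumes} $\phi$-closeness with $\phi\lesssim p(3f/4)/\sqrt{\log k}$ and shows, via the binomial-CDF perturbation argument of Lemmas~\ref{prop:roos+msc} and~\ref{lem:roos+msc} (where a careful regime-splitting replaces the naive $\sqrt{k}\,\beta$ bound of Lemma~\ref{lem:roos} by $\beta\sqrt{\log k}$), that a distance perturbation of size $\phi$ moves the relevant probabilities by only $O(\phi\sqrt{\log k})\ll f$. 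So the correct resolution of the obstacle you flag is to strengthen the Phase-1 error bound using the exact within-gene cancellation $\Gamma^i_{xz}=\Gamma^i_{yz}$ and the average-before-correction structure of the estimator, not to weaken the Phase-2 requirement.
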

\noindent Two regimes are implicit in Theorem~\ref{thm:main}:
\begin{itemize}
	\item \textbf{``Long'' sequences:} When $k \gtrsim f^{-2}$, 
	we require $m \gtrsim f^{-1}$. As first observed by~\cite{MosselRoch:10a}, this condition is always required for high-probability reconstruction under this setting. 
	
	\item \textbf{``Short'' sequences:} When $k \lesssim f^{-2}$, 
	we require the stronger condition that $m \gtrsim k^{-1/2} f^{-2}$. This is known to be optimal (up to the log factor) by the information-theoretic lower bound in~\cite{MosselRoch:15}.
	As mentioned above, the matching algorithmic upper bound of~\cite{MosselRoch:15} only applies when all mutation rates and population sizes are identical. Our main contribution here is to relax this assumption.
\end{itemize}
On the other hand, our results do not apply to the regime of ``very short'' sequences of constant length. In that regime, the reconstruction algorithm of~\cite{dasarathy2015data}, which applies under the same setting we are considering here, achieves the optimal bound of $m \gtrsim f^{-2}$.

\subsection{Proof idea and further results}

We give a brief overview of the proof. The full details are given in Section~\ref{sec:reduction-step} as well as Sections~\ref{sec:algo-high-level-proof},~\ref{sec:app-redux} and~\ref{sec:app-quantile}.
Again, fix a triple of leaves $\mathcal{X} = \{1, 2, 3\}$.

\paragraph{Tree metrics}
Phylogenies are naturally equipped with a notion of distance between leaves, and in general any pair of vertices, which is known as a tree metric (see e.g.~\cite{Steel:16} for more details). Our species phylogeny reconstruction method rests on such tree metrics. 
\begin{definition}[Weighted species metric]
	A species phylogeny $S = (V_s, E_s; r, \vec{\tau}, \vec{\mu})$ induces the following metric on the leaf set $L$. For any pair of leaves $a,b\in L$, we let 
	\begin{align*}
	\mu_{ab} &= \sum_{e\in \pi(a,b;S)}\tau_e\, \mu_e,
	\end{align*}
	where $\pi(a,b;S)$ is the unique path connecting $a$ and $b$ in $S$ interpreted as a set of edges. We will refer to $\left\{ \mu_{ab} \right\}_{a,b\in L}$ as the {\bf weighted species metric} induced by $S$.  
\end{definition}
\noindent  The above definition is valid for any pair of vertices in $V_s$. That is, the metric $\mu$ can be extended to the entire set $V_s$. 
In the species phylogeny estimation problem, the sequence data only carries information about the rate-weighted distances $\left\{ \mu_{ab} \right\}_{a,b\in L}$. The algorithm in \cite{MosselRoch:15} is guaranteed to recover the topology of $S$ only in the case that $\left\{ \mu_{ab} \right\}_{a,b\in L}$ is an ultrametric on the leaf set $L$, in which case we refer to $S$ as an ultrametric species phylogeny. The metric $\left\{ \mu_{ab} \right\}_{a,b\in L}$ is 
ultrametric when $\mu_{ra} = \mu_{rb}$ for all $a, b \in L$, that is, when the distance from the root to every leaf is the same.

Recall from Definition~\ref{def:gene-tree} that each each random gene tree has an associated set of branch lengths. From the description of the multispecies coalescent (see Section~\ref{section:models}), it follows that a single branch of a gene tree may span across multiple branches of the species phylogeny; this can also be seen in Fig.~\ref{fig:msc-example-alg}. Let $t_{\tilde{e}}$ denote the (random) length of the branch $\tilde{e}\in E^{(i)}$.
For any species phylogeny branch $e\in E_s$, let $t_{\tilde{e}\cap e}$ denote the length of the branch $\tilde{e}$ that overlaps with $e$. Then, 
$\delta_{\tilde{e}}$ and $t_{\tilde{e}}$ satisfy the following relationship
\begin{equation*}
\delta_{\tilde{e}} = \sum_{e\in E_s} \mu_e t_{\tilde{e}\cap e}.
\end{equation*}
This set of weights again defines a different metric on the leaves $L$ of the species tree.  
\begin{definition}[Gene metric]
	\label{def:gene-metric}
	A gene tree $G^{(i)} = (V^{(i)}, E^{(i)}; r, \vec{\delta}^{(i)})$ induces the following metric on the leaf set $L$. For any pair of leaves $a,b\in L$, we (overload the notation $\delta$) and let 
	\begin{align*}
	\delta_{ab}^{(i)} &= \sum_{e\in \pi(a,b;G^{(i)})}\delta_e^{(i)}
	\end{align*}
	where, again, $\pi(a,b;G^{(i)})$ is the unique path connecting $a$ and $b$ in $G^{(i)}$ interpreted as a set of edges. We will refer to $\left\{ \delta_{ab}^{(i)} \right\}_{a,b\in L}$ as the {\bf gene metric} induced by $G^{(i)}$.  
\end{definition}
\noindent Note that, when the species phylogeny $S$ is ultrametric, so are the gene trees.

\paragraph{Ultrametric reduction}
At a high level, our reconstruction algorithm relies on a quantile triplet
test developed in~\cite{MosselRoch:15}. Roughly speaking this test, which is detailed in Algorithm~\ref{alg:questAdditiveQuantile}, compares a well-chosen quantile of the sequence-based estimates of gene metrics $\left\{ \delta_{ab}^{(i)} \right\}_{a,b\in \mathcal{X}}$ in order to determine which pair of leaves is closest. The algorithm of~\cite{MosselRoch:15}, however, only works when all mutation rates and population sizes are equal. In that case, the species phylogeny and gene trees are ultrametric, as defined above. That property leads to symmetries that play a crucial role in the algorithm. Our first main contribution here is a reduction to the this ultrametric case.

That is,  in order to apply the quantile triplet test, we first transform the sequence data to appear as though it was was generated by an ultrametric species phylogeny. This ultrametric reduction, inspired by a classical technique known as the Farris transform (see e.g.~\cite{semple2003phylogenetics}), may be of independent interest as it could be used to generalize other reconstruction algorithms. 
Formally, we prove the following theorem.
Again, we state a simplified version of our result which gives a lower bound on the number of genes $m$ of length $k$ needed to achieve a desired accuracy (the more general statement appearing as Proposition~\ref{thm:reduction-full} in Section~\ref{sec:app-redux}). More specifically, Algorithm~\ref{alg:questAdditiveReduction} takes as input two sets of genes, $\mred{}$ and $\mquant{}$. The set $\mred{}$ is used to estimate parameters needed for the reduction. The reduction is subsequently performed on $\mquant{}$. We let $m' = |\mred{}|$. Here we give a lower bound on $m'$ (while, for the purposes of this theorem, $|\mquant{}|$ can be arbitrarily large). For $\theta>0$, we say that two metrics $\mu'$ and $\mu''$ over $\mathcal{X}$ are $\theta$-close if $\left| \mu'_{xy} - \mu''_{xy} \right|\leq \theta$, for all $x, y\in \mathcal{X}$.
\begin{theorem}[Ultrametric reduction]
	\label{thm:reduction}
	Suppose that we have sequence data $\left\{ \xi^{ij} \right\}_{i\in [m], j\in [k]}$ generated according to the MSC-JC$(m,k)$ process on a three-species phylogeny $S = (V_s, E_s; r, \vec{\tau},\vec{\mu})$. The mutation rates, leaf-edge lengths and internal-edge lengths are respectively in $(\mu_L, \mu_U)$, $(f', g')$ and $(f, g)$.  We assume further that there is $C > 0$ such that $k \gtrsim f^{-C}$. Then, with probability at least $95\%$,
	the output
	of Algorithm~\ref{alg:questAdditiveReduction}
	is distributed according to the MSC-JC process on a species tree $S'$ that is $\phi$-close to an ultrametric species phylogeny with rooted topology identical to that of $S$ restricted to $\mathcal{X} = \{1,2,3\}$, provided that
	\begin{equation}
	\label{eq:reduction-condition}
	m' \gtrsim 
	1 \lor \frac{1}{k f^2},
	\end{equation}
	where $\phi = \Theta(f/\log f^{-1})$.
\end{theorem}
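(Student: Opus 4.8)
The plan is to reduce the whole statement to a single accuracy claim: that the imbalance estimates produced in the middle of Algorithm~\ref{alg:questAdditiveReduction} satisfy $|\widehat{\Delta}_{ab} - (\mu_{ra}-\mu_{rb})| \lesssim \phi$ for all pairs, with high probability. Granting this, the concluding stochastic Farris step is straightforward to analyze. Adding an independent Jukes--Cantor substitution to the sequence at leaf $a$ with probability $p(\widehat{\Delta}_{za})$ is, at the level of the law of the output, exactly equivalent to lengthening the pendant edge of $a$ by the mutation-weighted amount $\widehat{\Delta}_{za}$. Since in the multispecies coalescent every gene lineage must traverse the entire species pendant edge before it can coalesce in an ancestral population, applying the same deterministic lengthening to every gene in $\mquant{}$ coincides with running the MSC--JC process on the species tree $S'$ obtained from $S$ by lengthening the two non-reference pendant edges. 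This preserves the rooted topology, and if each $\widehat{\Delta}$ is within $\phi$ of the true imbalance then the root-to-leaf distances of $S'$ agree up to $O(\phi)$, so $S'$ is $\phi$-close to an ultrametric tree with the same topology as $S$. The third imbalance, obtained as a difference of the other two, is controlled by the triangle inequality using that root-to-leaf imbalances are additive.

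The engine of the accuracy claim is a \emph{deterministic} structural identity for the gene metric, which I would establish first. Fix a gene $i$ whose gene-tree topology has cherry $\{a,b\}$ and outgroup $c$, and let $u$ be the coalescence point of the $a$ and $b$ lineages. Because both lineages must reach $u$, the point $u$ lies on the species-tree path from the root $r$ to the species MRCA of $a$ and $b$; decomposing the two leaf-to-$u$ paths through that species MRCA, the shared upper portion cancels, giving $\delta_{au}^{(i)} - \delta_{bu}^{(i)} = \mu_{ra}-\mu_{rb}$. Appending the common segment from $u$ to the gene-tree root and down to $c$ cancels as well, so $\delta_{ac}^{(i)} - \delta_{bc}^{(i)} = \mu_{ra}-\mu_{rb}$ exactly, with no dependence on the random coalescence times. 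This is the crucial point: conditioned on the cherry being $\{a,b\}$, the coalescent randomness drops out of the metric difference, so the only error in estimating the imbalance comes from the substitution process and from finite samples, not from incomplete lineage sorting.

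Next I would analyze the selection step. The empirical quantile thresholds are computed on $\mred{1}$ from the first half of each sequence ($\downarrow$) and used to define the index set $I\subseteq\mred{2}$; sample splitting guarantees that $I$ is independent of the second-half statistics ($\uparrow$) forming $\widehat{p}^I_{ac},\widehat{p}^I_{bc}$, which is what lets me treat $\widehat{\Delta}_{ab}$ as an average of conditionally unbiased terms. I would show that the constraint making $\widehat{p}_{ab}^{i\downarrow}$ small forces the gene-tree cherry to be $\{a,b\}$ except on a controlled small fraction, so the structural identity applies to all but a negligible part of $I$; and that the matched two-sided windows on $\widehat{p}_{ac}^{i\downarrow},\widehat{p}_{bc}^{i\downarrow}$ confine the coalescence depth, hence confine $\delta_{ac}^{(i)},\delta_{bc}^{(i)}$ to a bounded interval on which the Jukes--Cantor correction $\distd$ is Lipschitz with bounded second derivative. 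Standard concentration (Hoeffding/Bernstein for the per-site indicators, plus order-statistic concentration for the empirical quantiles) then shows $\widehat{p}^I_{ac},\widehat{p}^I_{bc}$ concentrate around their conditional means with fluctuations of order $1/\sqrt{k|I|}$, where $|I|\gtrsim m'$ with high probability.

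The hard part will be the bias analysis that pins the error down to $\phi=\Theta(f/\log f^{-1})$ rather than merely to the naive fluctuation scale. Two bias sources must be beaten below $\phi$. First, because $\distd$ is nonlinear, applying it to the averaged frequencies $\widehat{p}^I_{ac},\widehat{p}^I_{bc}$ produces a Jensen-type term proportional to $\distd''$ times the residual within-class variance of $\delta_{ac}^{(i)}$ (and of $\delta_{bc}^{(i)}$); although $\delta_{ac}^{(i)}-\delta_{bc}^{(i)}$ is constant by the structural identity, the two variances enter $\widehat{\Delta}_{ab}$ separately, and I must use the symmetry enforced by the matched middle-quantile windows to show their contributions nearly cancel. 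Second, the small fraction of topologically misclassified genes in $I$ injects a bias proportional to that fraction, which the spacing of the quantile thresholds must keep below $\phi$. Tracking both down to $\Theta(f/\log f^{-1})$ is exactly where the $\poly(\log f^{-1})$ slack and the hypothesis $k\gtrsim f^{-C}$ are consumed; combining the fluctuation bound $1/\sqrt{k|I|}\lesssim\phi\sim f$ with $|I|\gtrsim m'$ yields the stated requirement $m'\gtrsim 1\lor(kf^2)^{-1}$, and a union bound over the three pairs and the selection events delivers the $95\%$ success probability.
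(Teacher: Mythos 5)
Your overall skeleton matches the paper's proof (quantile selection with sample splitting to fix gene-tree topologies, the pointwise metric-difference identity, average-then-correct, Lipschitz plus Hoeffding, and the Markov-property argument for the stochastic Farris step), but the step you flag as ``the hard part'' is misconceived, and the misconception matters. There is no Jensen-type bias to control: your own structural identity kills it \emph{exactly}. Conditioned on the gene trees in $I$ all having cherry $\{a,b\}$, you have $\delta^{(i)}_{ac} = \delta^{(i)}_{bc} + \Delta_{ab}$ pointwise for every $i \in I$, hence
\begin{equation*}
\frac{1}{|I|}\sum_{i\in I} e^{-4\delta^{(i)}_{ac}/3}
= e^{-4\Delta_{ab}/3}\,\frac{1}{|I|}\sum_{i\in I} e^{-4\delta^{(i)}_{bc}/3},
\end{equation*}
so the difference of the Jukes--Cantor corrections applied to the \emph{conditional means} of $\widehat{p}^{I}_{ac}$ and $\widehat{p}^{I}_{bc}$ equals $\Delta_{ab}$ with no error term at all---this is exactly why the algorithm averages before correcting (Proposition~\ref{prop:reduction-height-diff}). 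The only remaining error is sequence noise of order $1/\sqrt{k|I|}$, handled by Hoeffding conditionally on the gene trees together with the Lipschitz bound on the correction (the upper edges of the quantile windows keep the $p$-distances away from the singularity, Lemmas~\ref{lem:hd-max-hatpI} and~\ref{lem:hd-dev-hatpI}). Your proposed route---second-order expansion producing two Jensen terms that ``nearly cancel'' by window symmetry---would not reach accuracy $\phi \sim f/\log f^{-1}$ if truncated: the within-$I$ variance of $p^{i}_{ac}$ is $\Theta(1)$ (the windows have constant quantile width), the second-order terms are \emph{different} nonlinear functionals of the same coalescent variable $\Gamma^i$ and do not match, and only the exact log-ratio computation cancels to all orders. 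The fix is simply to use the exact cancellation you already derived, rather than an approximate symmetry argument.

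A second genuine error: you claim that the constraint $\widehat{p}^{i\downarrow}_{ab} \leq \widehat{p}^{(1/3)}_{ab}$ alone forces the gene-tree cherry to be $\{a,b\}$ ``except on a controlled small fraction,'' with the two-sided windows on $\widehat{p}^{i\downarrow}_{ac}, \widehat{p}^{i\downarrow}_{bc}$ serving only to confine the coalescence depth. This is false when $f$ is small: with probability $1-O(f)$ all three lineages reach the common ancestral population, where by exchangeability each pair is equally likely to coalesce first, and conditioning on $\delta_{ab}$ lying below its median still leaves a \emph{constant} conditional probability that the cherry is $ac$ or $bc$. The topology is forced only by the conjunction of all three constraints: $\Gamma_{ab}$ below its median while $\Gamma_{ac},\Gamma_{bc}$ are above theirs, combined with $\Gamma^{(1/2)}_{ab} \leq \max\{\Gamma^{(1/2)}_{ac},\Gamma^{(1/2)}_{bc}\}$, yields $\Gamma_{ab} < \max\{\Gamma_{ac},\Gamma_{bc}\}$, which is impossible unless the cherry is $\{a,b\}$ (Propositions~\ref{lem:ident-fixing-top} and~\ref{prop:reduction-fix-top}). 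So the \emph{lower} edges of those windows are essential to correctness, not bookkeeping. Relatedly, the paper arranges for \emph{all} genes in $I$ to be correctly classified with high probability (this is where $k \gtrsim \log|\mred{2}|$ enters); if you instead allow a misclassified fraction, you must separately prove that fraction is $\lesssim \phi \sim f/\log f^{-1}$, which your proposal does not do and which does not follow from the quantile spacing alone.
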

\noindent 
The log factor in $\phi$ is needed in our analysis of the quantile test below.
The key to the proof of Theorem~\ref{thm:reduction} is the establishment of a new identifiability result of independent interest. 
\begin{theorem}[Identifiability of rooted species tree from unrooted weighted gene trees]
	\label{thm:ident}
	Let $S = (V_s,E_s; r, \vec{\tau}, \vec{\mu})$ be a species tree
	with $n \geq 3$ leaves and root $r$
	and let $G=(V,E; r, \vec{\delta})$
	be a sampled gene tree from the MSC with
	branch lengths $\delta_e$, $e \in E$. 
	Then the rooted topology of the species
	tree $S$ is identifiable from the
	distribution of the unrooted weighted
	gene tree $G$.
\end{theorem}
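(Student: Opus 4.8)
The plan is to reduce the general statement to the three-leaf case and then extract, from the branch lengths alone, a distributional signature of the rooted topology that survives the loss of the root. By the classical fact that a rooted tree is determined by the collection of its rooted triples (see e.g.~\cite{Steel:16}), it suffices to recover, for every triple $\{a,b,c\} \subseteq L$, the rooted topology that $S$ induces on it. The pairwise distances $\{\delta_{ab}\}$ are read off directly from the weighted gene tree, and the marginal genealogy of the three lineages $a,b,c$ under the MSC on $S$ is again a coalescent governed only by the induced three-leaf species tree (consistency of the coalescent under subsampling of lineages, since the presence of other lineages does not change the rate-$1$ coalescence of any pair among $a,b,c$). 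Hence the joint law of $(\delta_{ab},\delta_{ac},\delta_{bc})$ is a measurable functional of the distribution of $G$, and I may assume $n=3$, writing $u$ for the species-tree vertex where $a$ and $b$ first share a population, $w$ for the $\mathrm{MRCA}$ of all three, and $\beta = \sum_{e \in \pi(u,w;S)} \tau_e > 0$ for the coalescent length of the induced internal branch (positive because all $\tau_e \in (0,\infty)$).

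For three leaves the unrooted topology is always a star, so all information resides in the branch lengths. For an ordered pair $a,b$ with third leaf $c$, set $D_{ab} := \delta_{ac} - \delta_{bc}$, which equals the difference $d_a - d_b$ of the two pendant lengths of the star. The key structural lemma I would establish is that the law of $D_{ab}$ has a \emph{single atom}, whose mass equals $\P[\text{the gene tree on }\{a,b,c\}\text{ has rooted topology }ab|c]$. The reason is that on the event that the gene tree topology is $ab|c$, the lineages $a$ and $b$ enter a common population at $u$ and thereafter traverse exactly the same populations for exactly the same coalescent durations until they coalesce; because the weighted branch length is the deterministic integral $\delta_{\tilde e} = \sum_{e} \mu_e\, t_{\tilde e \cap e}$ of the mutation rate against time, these shared segments contribute equally to $d_a$ and $d_b$ and cancel, leaving $D_{ab} = \mu_{au} - \mu_{bu}$, a fixed species-tree quantity independent of the realization. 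Off this event, $a$ and $b$ are not a cherry of the gene tree: their most recent common ancestor is the gene-tree root, and the final coalescence interval (traversed by both surviving lineages in a common ancestral population of positive mutation rate) enters $D_{ab}$ with a nonzero coefficient, so $D_{ab}$ is absolutely continuous there and places no mass at the point $\mu_{au}-\mu_{bu}$.

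Granting the lemma, the three atom masses are precisely the three rooted gene-tree-topology probabilities, which for the three-taxon MSC are $1-\tfrac{2}{3}e^{-\beta}$ for the concordant topology and $\tfrac{1}{3}e^{-\beta}$ for each discordant one. Since $\beta>0$ forces $1-\tfrac{2}{3}e^{-\beta} > \tfrac{1}{3}e^{-\beta}$, the concordant topology is strictly most probable, and so the cherry pair—hence the rooted triple $ab|c$—is identified as the unique pair whose difference $D_{ab}$ has the heaviest atom. Reassembling the rooted triples over all $\binom{n}{3}$ triples then recovers the rooted topology of $S$, as desired.

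The hard part will be the structural lemma, and within it two claims that must be checked carefully for an arbitrary, non-ultrametric $S$. First, that on the concordant event the contributions of the shared segments from $u$ up to the coalescence point cancel exactly between $d_a$ and $d_b$: this is precisely where it is essential that $\delta$ is the integrated, deterministic mutation-weighted length rather than a random substitution count, so that two lineages sharing the same populations over identical coalescent intervals accrue identical weighted lengths even when the path from $u$ passes through several populations of differing rates. Second, that on each discordant event $D_{ab}$ is genuinely atomless, which requires exhibiting a continuously distributed coalescence time that enters $D_{ab}$ with a nonzero mutation-rate coefficient. Establishing these two facts in the presence of varying mutation rates is the technical heart of the argument.
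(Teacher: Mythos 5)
Your proposal is correct, and it reaches the theorem by a genuinely different route from the paper, even though both rest on the same key cancellation: when the gene tree has $\{x,y\}$ as a cherry, the two lineages occupy identical populations for identical durations after they first meet, so $\delta_{xz}-\delta_{yz}$ collapses to a deterministic species-tree quantity (your $\mu_{xu}-\mu_{yu}$, which equals the paper's $\Delta_{xy}=\mu_{rx}-\mu_{ry}$ since the segment from $r$ to $u$ cancels in the difference). The paper turns this observation into recovery of the height differences themselves: Proposition~\ref{lem:ident-fixing-top} constructs an explicit positive-probability event $\mathscr{E}_{I}$ from the medians of the pairwise gene distances and shows it forces the cherry topology, Proposition~\ref{lem:ident-height-diff} then reads off $\Delta_{xy}=\E[\delta_{xz}-\delta_{yz}\,|\,\mathscr{E}_{I}]$, and the rooted topology follows by feeding these differences into the Farris transform~\eqref{eq:farris}, whose output is an ultrametric consistent with the rooted species tree. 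You instead never condition at all: you read the rooted triple off the atomic structure of the law of $D_{xy}=\delta_{xz}-\delta_{yz}$, whose unique atom has mass exactly $\P[\text{gene topology } xy|z]$ (your atomlessness argument on the discordant events is sound, since there the final coalescence time is conditionally exponential and enters $D_{xy}$ with a strictly negative coefficient equal to minus the sum of the two relevant mutation rates), and you then invoke the three-taxon MSC computation $1-\tfrac{2}{3}e^{-\beta}>\tfrac{1}{3}e^{-\beta}$ to conclude that the heaviest atom marks the species cherry. Each route buys something: yours is shorter and more self-contained for pure identifiability, needing neither the quantile event nor the Farris transform, and it cleanly separates the atom's location from its mass; the paper's is quantitatively constructive---it recovers the parameters $\Delta_{xy}$, not just the topology---and it is exactly this quantile-plus-conditional-mean formulation that survives the passage to finite, sequence-level data in Algorithm~\ref{alg:questAdditiveReduction}, whereas an atom-based statistic would not transfer, since Jukes--Cantor sequence noise smears the atom into a continuous distribution.
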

\noindent 
The case $n \geq 5$ is not new. Indeed, it follows from~\cite[Theorem 9]{AlDeRh:11}, where it is shown that in fact the distribution of the unrooted gene tree {\em topologies} (without any branch length information) suffices to identify the rooted species phylogeny when the number of leaves exceeds 4. On the other hand, it was also shown in~\cite[Proposition 3]{AlDeRh:11} that, when $n=4$, the gene tree topologies are not enough to locate the root of the species phylogeny (and the case $n=3$ is trivial).
Here we show that, already with three species (and therefore also when $n > 3$), the extra information in the gene tree branch lengths allows to recover the root.
We give a constructive proof of Theorem~\ref{thm:ident}, which we then adapt to obtain Algorithm~\ref{alg:questAdditiveReduction}.
More details on this key step are given in Section~\ref{sec:reduction-step}.

\paragraph{Robustness of quantile test} 
Algorithm~\ref{alg:questAdditiveReduction} produces a new sequence dataset $\left\{ \xi_{x,N}^{ij}: x\in \mathcal{X} \right\}$ that appears close to being distributed according to an ultrametric species phylogeny. The next step is to perform a triplet test of~\cite{MosselRoch:15}, detailed in Algorithm~\ref{alg:questAdditiveQuantile}. 
Roughly speaking, this test is based on comparing an appropriately
chosen quantile of the gene metrics. In fact, because we do not have direct access to the latter, we use a sequence-based surrogate, the empirical $p$-distances 
\begin{equation*}
\widehat{q}^i_{xy} = \frac{1}{k}\sum_{j=1}^k\ind\left\{ \xi_{x,N}^{ij}\neq \xi_{y,N}^{ij} \right\},
\end{equation*}
for each gene $i \in \mquant{}$ in the output of the reduction,
whose expectation is a monotone transformation of the corresponding gene metrics.
The idea of Algorithm~\ref{alg:questAdditiveQuantile} is to use the above $p$-distances to define a ``similarity measure'' $\widehat{s}_{xy}$ between each pair of leaves $x,y\in \mathcal{X}$ to reveal the underlying species tree topology on $\mathcal{X}$.  
It works as follows. 
The set of genes $\mquant{}$ is divided into two disjoint subsets
$\mquant{1}, \mquant{2}$. The set $\mquant{1}$ is used to compute the $c_3 \alpha$-quantile $\widehat{q}^{(\const{3} \alpha)}_{xy}$ of $\{\widehat{q}^i_{xy}\,:\,i\in \mquant{1}\}$,
where $c_3 > 0$ is a constant determined in the proofs and
$$\alpha = \max\left\{ \frac{\log m}{m}, \sqrt{\frac{\log k}{k}} \right\}.$$
Let $\widehat{q}_\ast$ denote the maximum among $\left\{ \widehat{q}^{(\const{3} \alpha)}_{xy}: x,y\in \mathcal{X} \right\}$. We then use the genes in $\mquant{2}$ to define the similarity measure
\begin{equation*}
\widehat{s}_{xy} = \frac{1}{\left| \mquant{2} \right|}\left| \left\{ i\in \mquant{2}: \widehat{q}^i_{xy} \leq \widehat{q}_\ast \right\} \right|.
\end{equation*}
Whichever pair $x, y \in \mathcal{X}$ produces the largest value of $\widehat{s}_{xy}$ is declared the closest, i.e., the output is $xy|z$ where $z$ is the remaining leaf in $\mathcal{X}$. 

Why does it work? Intuitively, the closest pair of species $x,y$ will tend to produce a larger number of genes with few differences between their sequences at $x$ and $y$, as measured by the $p$-distance. In fact it was shown in~\cite{MosselRoch:10a} that, under the MSC-JC process on an ultrametric phylogeny when sequences are long enough (namely $k \gtrsim f^{-2}$), choosing the pair of species achieving the smallest $p$-distance across genes succeeds with high probability under optimal data requirements. When $k$ is short on the other hand (namely $k \lesssim f^{-2}$), the randomness from the JC process produces outliers that confound this approach. To make the test more robust, it is natural to turn to quantiles, i.e., to remove a small, fixed fraction of outliers. On a fixed gene tree, the standard deviation of the $p$-distance is of order $1/\sqrt{k}$. It was shown in~\cite{MosselRoch:15} that, as a result, $1/\sqrt{k}$ is in a sense the smallest quantile that can be meaningfully controlled and that it leads to a successful test under optimal data requirements. Our choice of quantile $\alpha$ is meant to cover both regimes above simultaneously. See~\cite{MosselRoch:15}, as well as~\cite{MosselRoch:10a,dasarathy2015data}, for more details.

As stated in Theorem~\ref{thm:reduction}, the output to the ultrametric reduction is almost---but not perfectly---ultrametric.
In our second main contribution, to account for this extra error, we perform a delicate robustness analysis of the quantile-based triplet test. This step is detailed in Section~\ref{sec:quantile-test}. At a high level, the proof follows~\cite{MosselRoch:15}. After 1) controlling the deviation of the quantiles, we establish that 2) the test works in expectation and then 3) finish off with concentration inequalities. All these steps must be updated to account for the error introduced in the reduction step. Step 2) is particularly involved and requires the delicate analysis of the CDF of a mixture of binomials.


\section{Key ideas in the ultrametric reduction}
\label{sec:reduction-step}

The goal of the ultrametric reduction step, Algorithm~\ref{alg:questAdditiveReduction}, is to transform the sequence data to appear statistically as though it is the output of an MSC-JC process on an ultrametric species phylogeny with the same topology as $S$ restricted to $\mathcal{X}$.

\subsection{Preliminary step: a new identifiability result}
\label{sec:identifiability-result}

Before diving into the description of Algorithm~\ref{alg:questAdditiveReduction},
we provide some insights into the algebra of our reduction by first deriving a new identifiability result, Theorem~\ref{thm:ident}. That is,
we show that, under the multispecies coalescent, the rooted topology of the species phylogeny
can be recovered from the distribution
of the unrooted weighted gene trees.

Our reduction is inspired by the Farris transform (also related to the Gromov product; see e.g.~\cite{semple2003phylogenetics}),
a classical technique to transform a general metric into an ultrametric. 
In a typical application of the Farris transform, one ``roots'' the species phylogeny $S$ at an ``outgroup'' $o$ (i.e., a species that is ``far away'' from the leaves of $S$) and then uses the quantities $\mu_{ox}, x\in L$ to implicitly stretch the leaf edges appropriately, so as to make all inter-species distances to $o$ equal, without changing the underlying topology. More specifically, let $S$ be a species phylogeny. Suppose $\mathcal{X} = \{1,2,3\}$ and let $o \in L-\mathcal{X}$ be any leaf of $S$ outside $\mathcal{X}$. Assume that
$
\mu_{o1} \geq \max\{\mu_{o2}, \mu_{o3}\}
$
(the other cases being similar) and
define the Farris transform
\begin{equation}
\label{eq:farris}
\dot{\mu}_{xy}
\triangleq \mu_{xy}  + 2\mu_{o1} - \mu_{ox} - \mu_{oy},
\qquad \forall x,y\in \mathcal{X}.
\end{equation}
A classical phylogenetic result (proved for instance
in~\cite[Lemma 7.2.2]{SempleSteel:03}) states that
$\{\dot{\mu}_{xy}\}_{x,y \in\mathcal{X}}$ is an ultrametric 
on $\mathcal{X}$ consistent with the topology of $S$ re-rooted at $o$ and, then,
restricted to $\mathcal{X}$.

In the multi-gene context, however, we cannot apply a Farris transform in this manner. For one, we do not have direct access to the species phylogeny distances $\{\mu_{xy}\}$; rather, we only estimate the gene tree distances $\{\delta_{xy}^{(i)}\}$. Moreover the latter vary across genes according to the MSC. In particular, distance differences (such as those appearing in~\eqref{eq:farris}) are affected by the topology of the gene tree (see Figure~\ref{fig:fixing-top} for an illustration). 
\begin{quote}
\textbf{Key idea 1: To get around this problem, we artificially fix gene tree topologies through conditioning. We also take advantage of the effect of the rooting on the MSC process to avoid using an outgroup.} 
\end{quote}
We give more details on our approach next.  

We turn to the proof of Theorem~\ref{thm:ident}.
We prove the claim for $n=3$. As we discussed, it is straightforward
to extend the proof to $n > 4$. 
Let $S$ be a species phylogeny with three leaves and recall that $r$ is the root of $S$. Unlike the classical Farris transform above, we do not use an outgroup.
Instead, we show how to achieve the same outcome by using only the distribution of $G$ and, in particular, of the random distances $\left\{ \delta_e \right\}_{e\in E_s}$. Notice from~\eqref{eq:farris} that we only need the \emph{differences of distances} between pairs of species in $\mathcal{X} \cup \{r\}$
$$
\Delta_{xy} \triangleq \mu_{rx} - \mu_{ry}.
$$ 
It is these quantities that we derive from the distribution of weighted gene trees. 

The idea is to: 
\begin{enumerate}
\item Condition on an event such that
the rooted topology of a gene tree is guaranteed
to be equal to a fixed, chosen topology. 
Intuitively, we achieve this
by considering
an event where one pair of leaves is ``somewhat
close'' while the other two pairs are
``somewhat far.''

\item Conditioning on this event,
we recover the species-based difference
$\Delta_{xy}=\mu_{rx} - \mu_{ry}$ from the
distribution of gene-based difference $\delta_{xz} - \delta_{yz}$.
Intuitively, letting $w$ be the most recent common
ancestor of $x$ and $y$ on $G$,
when the topology is $xy|z$ then the difference
$\delta_{xz} - \delta_{yz}$ is equal to
$\Delta_{xy}$ irrespective of when  $w$ occurred. See Figure~\ref{fig:fixing-top} for an illustration.
	
\end{enumerate}
\begin{figure*}[h!!]
	\centering
	\includegraphics[scale=3.2]{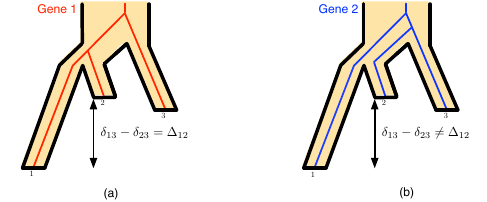}
	\caption{(a) Gene 1 (red gene) has the topology $12 | 3$. Therefore, the gene distance on this gene satisfies the condition that $\delta_{13} - \delta_{23} = \Delta_{12}$. (b) In this case, Gene 2 (blue gene) has the topology $1|23$. Observe that therefore, $\delta_{13} - \delta_{23} \neq \Delta_{12}$. }
	\label{fig:fixing-top}
\end{figure*}
More formally, we establish the following two propositions, whose proofs are in Section~\ref{sec:ident-proofs}.
For 
$x,y \in L$ and $\beta \in [0,1]$,
let $\delta_{xy}^{(\beta)}$ be the
$\beta$-th quantile of $\delta_{xy}$. 
That is, $\delta_{xy}^{(\beta)}$ is the smallest number $\alpha\in [0,1]$ such that $$\mathbb{P}\left[ \delta_{xy} \leq \alpha \right] \geq \beta.$$
Note that this quantile is a function
of the distribution of $G$ (and of the $\vec{\delta}$s).
Our event of interest is defined next.
\begin{proposition}[Fixing the rooted topology of the gene tree]
\label{lem:ident-fixing-top}
Let $(x,y,z)$ be an arbitrary permutation
of $(1,2,3)$.
The event
\begin{equation}
\label{eq:event-ei}
\mathscr{E}_{I}
= \left\{\delta_{xy} \leq \delta_{xy}^{(1/2)},
\delta_{xz} > \delta_{xz}^{(1/2)},
\delta_{yz} > \delta_{yz}^{(1/2)}\right\},
\end{equation}
has positive probability and implies
that the rooted topology of $G$
is $xy|z$.
\end{proposition}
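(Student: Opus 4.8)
The plan is to split the statement into two independent claims: that $\mathscr{E}_I$ is not a null event, and that it is contained in the event $\{G \text{ has topology } xy|z\}$. For the latter I would first record the equivalent description of the rooted gene-tree topology in terms of the first coalescence: reading backwards in time, $G$ has topology $xy|z$ exactly when the lineages of $x$ and $y$ meet (at a node $w$) strictly before the remaining lineage joins (at the root $u$ of $G$). The three pairwise gene distances are monotone functions of the heights of $w$ and $u$, weighted by the mutation rates of the populations each path traverses; see Figure~\ref{fig:fixing-top}.

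The main deterministic tool I would isolate is a cancellation identity for the two ``across'' distances on each topology event. If $G$ has topology $xy|z$, the $x$-to-$z$ and $y$-to-$z$ paths coincide above $w$, so that \[ \delta_{xz}-\delta_{yz} = \mu_{rx}-\mu_{ry}, \] a constant fixed by the species tree alone and independent of where $w$ and $u$ fall; the analogues $\delta_{xy}-\delta_{yz}=\mu_{rx}-\mu_{rz}$ and $\delta_{xy}-\delta_{xz}=\mu_{ry}-\mu_{rz}$ hold on the events $\{xz|y\}$ and $\{yz|x\}$. I would prove these from the overlap decomposition $\delta_{\tilde e}=\sum_{e}\mu_e\, t_{\tilde e\cap e}$, checking that the contribution of every population lying above the relevant cherry cancels in the difference, leaving only the private leaf-to-cherry segments, which reassemble into the species-tree root-to-leaf differences. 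A second, softer ingredient is the depth ordering: the cherry of $G$ is strictly shallower than its root, which I would convert into an inequality between the within-cherry (shallow) distance and the across-root (deep) distances on each topology event.

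To prove the implication I would argue by contradiction, supposing $\mathscr{E}_I$ holds together with a competing topology, say $xz|y$. There $\delta_{xz}$ is the shallow cherry distance while $\delta_{xy}$ and $\delta_{yz}$ are tied by the constant offset $\delta_{xy}-\delta_{yz}=\mu_{rx}-\mu_{rz}$; combining this offset with the depth ordering and the two median thresholds $\delta_{xy}^{(1/2)},\delta_{yz}^{(1/2)}$ should force $\delta_{yz}$ into an empty interval, exactly as in the molecular-clock case, where the two across distances are equal and the shallow cherry distance cannot exceed the deeper root distance. The symmetric computation rules out $yz|x$, leaving only $xy|z$. For positive probability I would exhibit an explicit window of coalescence configurations on the positive-probability event $\{xy|z\}$ with a very shallow $w$ and a very deep $u$; continuity of the joint law of $(\delta_{xy},\delta_{xz},\delta_{yz})$ then places $\delta_{xy}$ below, and $\delta_{xz},\delta_{yz}$ above, their respective medians with positive probability.

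I expect the implication to be the crux, and specifically the non-clock bookkeeping of the medians. The naive route---comparing the marginal medians directly through the offset identities---does not close, since without a molecular clock the medians $\delta_{xy}^{(1/2)},\delta_{xz}^{(1/2)},\delta_{yz}^{(1/2)}$ are no longer symmetrically ordered, and the mutation-rate weighting decouples ``smallest distance'' from ``first coalescence.'' The real content is that the three median conditions must be used \emph{jointly}, exploiting the correlation between the cherry height and the root height under a wrong topology (a deeper cherry forces an even deeper root), so that the shallow and deep distances cannot straddle the medians in the pattern $\mathscr{E}_I$ prescribes. Carrying the species-tree offsets through the quantiles and making this rigorous is where I anticipate most of the effort.
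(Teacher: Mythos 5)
You have correctly isolated two true facts, but the central implication of the proposition is missing from your proposal, and you say so yourself (``should force $\delta_{yz}$ into an empty interval,'' ``making this rigorous is where I anticipate most of the effort''). The offset identities you plan to prove ($\delta_{xz}-\delta_{yz}=\mu_{rx}-\mu_{ry}$ on the event that $G$ has topology $xy|z$, and its analogues) are correct, but they are really the content of the \emph{next} statement, Proposition~\ref{lem:ident-height-diff}, and they do not by themselves close this one: to rule out, say, topology $xz|y$ by contradiction, your route needs the cross-pair median inequality $\delta_{xy}^{(1/2)}-\delta_{yz}^{(1/2)}\leq \mu_{rx}-\mu_{rz}$, and you have no tool for comparing medians of two \emph{different} pairwise distances in the absence of a clock. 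Your second ingredient, converting the depth ordering (gene cherry shallower than gene root) into an ordering of weighted distances, is simply false once mutation rates differ---a long leaf edge can make the within-cherry distance the largest of the three---and you acknowledge as much when you note that rate weighting ``decouples smallest distance from first coalescence.'' That leaves the proposal without a working mechanism; the suggested repair (``exploit the correlation between the cherry height and the root height'') is vague and is not what makes the argument go through.

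The missing idea is a per-pair recentering that absorbs all of the non-clock asymmetry into deterministic constants. Set $\Gamma_{ab}=\left(\delta_{ab}-\mu_{ra}-\mu_{rb}\right)^{+}$: this is twice the weighted height of the coalescence of $a,b$ \emph{above the species root}, and it vanishes exactly when the pair coalesces below the root, which is possible only for the species cherry. Since this is a nondecreasing transform applied pair by pair, each marginal median condition in $\mathscr{E}_I$ transfers to a condition on the corresponding $\Gamma$ (the paper is careful about the atom at zero). The point is that the recentered medians \emph{can} be compared across pairs: lineages entering the root population are exchangeable, and at least one of the pairs $(x,z)$, $(y,z)$ can only coalesce there, whence $\Gamma_{xy}^{(1/2)}\leq\max\{\Gamma_{xz}^{(1/2)},\Gamma_{yz}^{(1/2)}\}$. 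Chaining this with the transferred conditions gives $\Gamma_{xy}<\max\{\Gamma_{xz},\Gamma_{yz}\}$, and that event characterizes topology $xy|z$, because under either competing topology the two non-cherry coalescences occur at the gene root, so $\Gamma_{xy}$ ties the maximum. This recentering-plus-exchangeability step is precisely the rigorous version of ``carrying the species-tree offsets through the quantiles'' that your sketch calls for, and note that it needs only marginal quantile comparisons---no joint-law or correlation argument. Your positive-probability argument, on the other hand, is fine and essentially the paper's.
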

\noindent Conditioning on the event $\mathscr{E}_{I}$,
we then show how to recover the difference
$\Delta_{xy}$ from the
distribution of $\delta_{xz} - \delta_{yz}$.
\begin{proposition}[A formula for the height difference]
\label{lem:ident-height-diff}
Using the notation of Proposition~\ref{lem:ident-fixing-top},
we have
\begin{equation}
\label{eq:ident-height-diff-formula}
\E[\delta_{xz} - \delta_{yz}\,|\,\mathscr{E}_{I}]
= \Delta_{xy},
\end{equation}
almost surely.
\end{proposition}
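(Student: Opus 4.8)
The plan is to prove the stronger, pathwise statement that on the event $\mathscr{E}_{I}$ one has $\delta_{xz} - \delta_{yz} = \Delta_{xy}$ \emph{deterministically} (almost surely), from which the conditional-expectation identity~\eqref{eq:ident-height-diff-formula} follows immediately. The argument rests on two cancellations in the gene-tree path decomposition, both of which survive the randomness of the coalescence times.

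First I would invoke Proposition~\ref{lem:ident-fixing-top}: on $\mathscr{E}_{I}$ the rooted topology of $G$ is $xy|z$. Let $w = \mrca(x,y)$ in $G$ and let $u = \mrca(x,y,z)$ be the root of $G$. Under topology $xy|z$ the gene-tree path from $x$ to $z$ runs $x \to w \to u \to z$ and the path from $y$ to $z$ runs $y \to w \to u \to z$, so the two share the entire segment $w \to u \to z$. Writing $\delta(a,b)$ for the rate-weighted gene-tree distance between $a$ and $b$, this gives the first cancellation
\[
\delta_{xz} - \delta_{yz} = \delta(x,w) - \delta(y,w).
\]

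The second step is to evaluate $\delta(x,w) - \delta(y,w)$ using the relation $\delta_{\tilde{e}} = \sum_{e\in E_s} \mu_e\, t_{\tilde{e}\cap e}$, which lets me read $\delta(x,w)$ as $\sum_e \mu_e$ times the amount of time the $x$-to-$w$ lineage spends inside species edge $e$. Let $v = \mrca(x,y)$ in the \emph{species} tree $S$. Since two lineages can coalesce only in a population ancestral to both, $w$ lies at or above $v$; hence below $v$ the $x$-lineage traverses exactly the species edges of $\pi(x,v;S)$, each for its full length, contributing $\mu_{xv}$, and similarly the $y$-lineage contributes $\mu_{yv}$. From $v$ up to $w$ both lineages occupy the identical sequence of ancestral populations over the identical time interval from $t(v)$ to $t(w)$, so their rate-weighted contributions over this segment coincide, say equal to $H$. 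Thus $\delta(x,w) = \mu_{xv} + H$ and $\delta(y,w) = \mu_{yv} + H$, and the common height $H$ cancels, giving $\delta(x,w) - \delta(y,w) = \mu_{xv} - \mu_{yv}$. Finally, since $r$ is an ancestor of $v$ and $v$ lies on both $\pi(r,x;S)$ and $\pi(r,y;S)$, additivity of the species metric yields $\mu_{rx} - \mu_{ry} = \mu_{vx} - \mu_{vy}$, i.e.\ $\mu_{xv} - \mu_{yv} = \Delta_{xy}$. Combining the two cancellations gives $\delta_{xz} - \delta_{yz} = \Delta_{xy}$ on $\mathscr{E}_{I}$, and taking conditional expectation proves~\eqref{eq:ident-height-diff-formula}.

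The step I expect to require the most care is the claim that the $v$-to-$w$ portions of the $x$- and $y$-lineages contribute equally: this is where the nested structure and memorylessness of the coalescent enter, and one must argue that, conditioned on $x$ and $y$ coalescing at $w$, the two lineages sit in the same ancestral population at every instant between $t(v)$ and $t(w)$, so that the weighting $\sum_e \mu_e\, t_{\tilde{e}\cap e}$ returns the same value $H$ for both. Everything else is bookkeeping on the three-taxon tree; in particular, the casework over which pair $\{x,y\}$ is the true species-sibling pair (equivalently whether $v = s$ or $v = r$) is absorbed entirely into the single structural fact that $w$ lies at or above $v$, so no separate cases are needed.
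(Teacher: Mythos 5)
Your proof is correct and takes essentially the same route as the paper's: both invoke Proposition~\ref{lem:ident-fixing-top} to fix the gene-tree topology to $xy|z$ on $\mathscr{E}_{I}$, then establish the \emph{pathwise} identity $\delta_{xz}-\delta_{yz}=\Delta_{xy}$ by a deterministic cancellation of the shared portion of the two paths, and finish by taking conditional expectations. The paper's bookkeeping is anchored at the species root, writing $\delta_{xz}=\mu_{rx}+\mu_{rz}+\Gamma_{xz}$ and $\delta_{yz}=\mu_{ry}+\mu_{rz}+\Gamma_{yz}$ with $\Gamma_{xz}=\Gamma_{yz}$ almost surely (both cross-coalescences occur at the gene-tree root in the common ancestral population), which is the same cancellation you carry out via the shared segment through $w$ and the common contribution $H$ above $v$.
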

\noindent Note that the quantity on the l.h.s.~of~\eqref{eq:ident-height-diff-formula}
is a function of the distribution of $G$. From the values of $\Delta_{xy}, x,y \in \mathcal{X}$, we can solve for $\mu_{rx}, x\in \mathcal{X}$. Hence, combining
the properties of the Farris transform with Propositions~\ref{lem:ident-fixing-top} and~\ref{lem:ident-height-diff}, we arrive at Theorem~\ref{thm:ident}.

\subsection{Algorithm~\ref{alg:questAdditiveReduction}: the reduction step}
\label{sec:reduction-step-high-level}

We are now ready to describe the reduction algorithm (Algorithm~\ref{alg:questAdditiveReduction}) and provide guarantees about its behavior. Recall that we are restricting our attention to three leaves $\mathcal{X} = \left\{ 1,2,3 \right\}$ whose species tree topology is $12|3$. The main idea underlying the reduction algorithm is based on the proof of the identifiability result (Theorem~\ref{thm:ident}). That is, we find a set of genes whose topology
is highly likely to be a fixed triplet, 
we estimate the height differences on this set
using the ``sample version'' of~\eqref{eq:ident-height-diff-formula},
and we perform what could be thought of as a ``sequence-based'' Farris
transform. 

Given that we do not have access to the actual gene tree distribution, but only sequence data, there are  several differences with the identifiability proof that make the analysis and the algorithm more involved. A primary challenge is that, in the regime where sequence length is ``short,'' i.e., when $k \ll f^{-2}$, the sequence-based estimates of the gene tree distances are very inaccurate---much less accurate then what is needed for our reduction step to be useful. 
\begin{quote}
\textbf{Key idea 2: To get around this issue, we show how to combine genes satisfying a condition similar to~\eqref{eq:event-ei} to produce a much better estimate of distance differences.}
\end{quote}
We detail the main steps of Algorithm~\ref{alg:questAdditiveReduction} next.

\paragraph{Fixing gene tree topologies.} 
	Here we only have access to sequence data. In particular the $\delta$s are unknown. So, we work instead with the $p$-distances
	$$
	\widehat{p}_{xy}^i = \frac{1}{k}\sum_{j\in [k]}\ind\left\{ \xi^{ij}_x \neq \xi^{ij}_y \right\},
	$$
	for gene $i$ and $x,y \in \mathcal{X}$,
	and their empirical quantiles $\widehat{p}^{(\beta)}_{xy}$.\footnote{Actually, the quantiles
	are estimated from part of the gene set ($\mred{1}$) to avoid
	unwanted correlations. The rest of the analysis is done on the other part.} Similar to  Proposition~\ref{lem:ident-fixing-top},
	we then consider those genes for which
	the event
\begin{equation}
\label{eq:sketch-1}
\left\{\widehat{p}_{xy}^i \leq \widehat{p}_{xy}^{(1/3)}, \widehat{p}_{xz}^{(2/3)}\leq \widehat{p}_{xz}^i\leq \widehat{p}_{xz}^{(5/6)}, \widehat{p}_{yz}^{(2/3)}\leq \widehat{p}_{yz}^i\leq \widehat{p}_{yz}^{(5/6)}\right\},
\end{equation}
	holds for some chosen permutation $(x,y,z)$  
	of $(1,2,3)$. We will call this set of genes $I$. We show that this set has a ``non-trivial'' size and that, with high
	probability, the genes satisfying~\eqref{eq:sketch-1} have topology $xy|z$ (see Proposition~\ref{prop:reduction-fix-top}).\footnote{In fact, the $p$-distances in~\eqref{eq:sketch-1} are estimated over half the gene length to avoid unwanted correlations. That is, we use $\widehat{p}_{xy}^{i\downarrow}$ to compute $I$ (see Step~5 of Algorithm~\ref{alg:questAdditiveReduction}).} In particular, the analysis of this construction accounts for the ``sequence noise'' around the expected values 
\begin{equation}
	p^i_{xy} \triangleq \mathbb{E}\left[ \widehat{p}^i_{xy} \middle | G^{(i)} \right]
	=\frac{3}{4}\left( 1 - e^{-4\delta^i_{xy}/3} \right) \triangleq p(\delta^i_{xy}),\label{eq:definition-p}
\end{equation}
where $p(x) = \frac{3}{4}\left( 1 - e^{-4x/3} \right)$. 

\paragraph{Estimating distance differences.}
Because we work with $p$-distances, we adapt formula~\eqref{eq:ident-height-diff-formula} 
for the difference $\Delta_{xy}$ as follows\footnote{Again, here we use the other half of the sites to avoid correlations with Step 5.}.
Using
$$
\widehat{p}^I_{xz} = \frac{1}{\left| I \right|}\sum_{i\in I}\widehat{p}_{xz}^i
\qquad \text{and} \qquad
\widehat{p}^I_{yz} = \frac{1}{\left| I \right|}\sum_{i\in I}\widehat{p}_{yz}^i,
$$ 
our estimate of the distance differences is given
by
$$
\widehat{\Delta}_{xy} = 
\left\{
-\frac{3}{4}\log\left(1 - \frac{4}{3}\widehat{p}_{xz}^I \right)
\right\}
- 
\left\{
-\frac{3}{4}\log\left(1 - \frac{4}{3}\widehat{p}_{yz}^I \right)
\right\}.
$$
Recall that, for this formula to work, we need to ensure that the topologies of the gene trees used are fixed to be $xy|z$; see Fig.~\ref{fig:fixing-top}, for instance. The logarithmic transforms in the curly
brackets are the usual distance corrections in the Jukes-Cantor 
sequence model (see e.g.~\cite{Steel:16}). 
Note, however, that we perform 
an average over $I$ before the correction; this is important to obtain the correct statistical power of our estimator. A similar phenomenon was leveraged in the METAL algorithm of \cite{dasarathy2015data}. 
The non-trivial
part of the analysis of this step is to bound
the estimation error. Indeed, unlike the identifiability
result, we have a finite amount of gene data and, moreover,
we must account for the sequence noise. This is
done using concentration inequalities in Proposition~\ref{prop:reduction-height-diff}.
\paragraph{Stochastic Farris transform.}
The quantile test of Section~\ref{sec:quantile-test} below is not a
distance-based meth\-od in the traditional
sense of the term. That is, we do not define a pairwise distance matrix on the leaves and use it to deduce the species phylogeny. Instead, our method uses the {\em empirical distribution
of the $p$-distances across genes}. It is for this reason that we do not simply apply the classical Farris 
transform of~\eqref{eq:farris} 
to the estimated distances. 
Rather,  
we perform what we
call a ``stochastic'' Farris transform. That is,
we transform the \emph{sequence data} itself to 
mimic the distribution under an ultrametric species phylogeny.
\begin{quote}
\textbf{Key idea 3: This is done by adding the right amount of noise
	to the sequence data at each gene, as detailed next. 
	It ensures that we properly mimic the contributions from both the multispecies coalescent and the Jukes-Cantor model to the distribution of $p$-distances.}
\end{quote}
 See Algorithm~\ref{alg:questAdditiveReduction} for the full details.	 

For the sake of notational convenience, we will let $\oplus$ denote addition mod-4 and identify ${\tt A,T,G,C}$ with $\{0,1,2,3\}$ in that order when doing this addition. For instance, this means that ${\tt A} \oplus 1 = {\tt T}$ and ${\tt G} \oplus 2 = {\tt A}$. 
\begin{definition}[Stochastic Farris transform]
\label{def:sft}
For a gene $i$, let $\{\xi_x^{i}\}_{x\in \mathcal{X}}$ be a sequence dataset 
over the species $\mathcal{X} = \{1,2,3\}$ and let $\Delta_{xy} = \mu_{rx} - \mu_{ry}, x, y \in \mathcal{X}$. Assume without loss of generality that $\min\{\Delta_{12},\Delta_{13}\} \geq 0$\footnote{This is equivalent to assuming that $\mu_{r1} \geq \max\{\mu_{r2}, \mu_{r3}\}$.}. 
The {\bf stochastic Farris transform} defines a new set of sequences $\{\xi^i_{x,N}\}_{x \in \mathcal{X}}$ such that $\xi_{x,N}^i = \xi_x^i \oplus \epsilon_x^i$, where $\epsilon_x^i\in \{0,1,2,3\}^k$ is an independent random sequence whose $j$-th coordinate is drawn according to 
\begin{align*}
	\epsilon_x^{ij} = \begin{cases}
		0, & \mbox{ w.p. } 1- p(\Delta_{1x}),\\
        1, & \mbox{ w.p. } p(\Delta_{1x})/3,\\
        2, & \mbox{ w.p. } p(\Delta_{1x})/3,\\
        3, & \mbox{ w.p. } p(\Delta_{1x})/3.
	\end{cases}
\end{align*}
We write this as $\{\xi_{x,N}^i\}_{x\in \mathcal{X}} = \mathcal{F}(\{\xi_{x}^i\}_{x\in \mathcal{X}}; \{\Delta_{xy}\}_{x,y\in \mathcal{X}})$. 
\end{definition}
\noindent By the Markov property, for $x,y \in \mathcal{X}$, the ``noisy'' sequence data above satisfy 
\begin{align*}
\mathbb{P}\left[ \xi_{x,N}^i \neq \xi_{y,N}^i \right] &= p \left(\delta_{xy}^i +  \Delta_{1x} + \Delta_{1y}\right)\triangleq r_{xy}^i. 
\end{align*}
Notice that $\delta_{xy}^i$, the random gene tree distance between $x$ and $y$ under gene $i$, can be decomposed as $\mu_{xy} + \Gamma_{xy}^i$, where $\Gamma_{xy}^i$ is the random component contributed by the multispecies coalescent. On the other hand, the set of distances $\mu_{xy} + \Delta_{1x} + \Delta_{1y}$ is ultrametric by the properties of the classical Farris transform. As a result, the stochastic Farris transform modifies the sequence data so that it appears as though it was generated from an ultrametric MSC-JC process. We show this pictorially in Fig.~\ref{fig:sft-works}. 
\begin{figure*}[h!]
  \begin{center}
    \includegraphics[scale = 2.25]{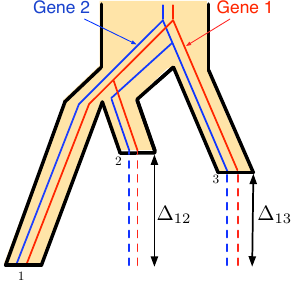}
  \end{center}
  \caption{After the stochastic Farris transform, the leaf edges corresponding to leaves $2$ and $3$, in both Gene 1 (red gene) and Gene 2 (blue gene), are ``stretched'' by $\Delta_{12}$ and $\Delta_{13}$ respectively. As a result, the sequence data appears as though it is drawn from an ultrametric species phylogeny.}
\label{fig:sft-works}
\end{figure*}

In reality, we do not have access to the true differences $\Delta_{xy}, x,y\in \mathcal{X}$. Instead, we employ our estimates $\widehat{\Delta}_{xy}$ for
all $x,y \in \mathcal{X}$ in the previous step to obtain the following
approximate stochastic Farris transform:
\begin{equation}
\label{eq:sketch-reduction-output}
\{\xi_{x,N}^i\}_{x\in \mathcal{X}} 
= \mathcal{F}(\{\xi_x^{i}\}_{x\in \mathcal{X}};\{\widehat{\Delta}_{xy}\}_{x,y\in \mathcal{X}}).
\end{equation}
This is the output of the reduction. See Algorithm~\ref{alg:questAdditiveReduction} for details.
We prove Theorem~\ref{thm:reduction} in Section~\ref{sec:app-redux}. In what follows, we will condition on the implications of Theorem~\ref{thm:reduction} holding.


\section{Concluding remarks}
\label{sec:conclusion}
We have extended the optimal tradeoff (up to log factors) of~\cite{MosselRoch:15} 
beyond the case of equal mutation rates and population sizes.
Several open problems remain:
\begin{enumerate}
	\item Our results assume that the number of leaves $n$ is constant (as $k,m \to \infty$). As $n$ gets larger, the depth of the species phylogeny typically increases. In fact, in the single gene tree reconstruction context, the depth is known to play a critical and intricate role in the data requirement~\cite{ErStSzWa:99a,Mossel:04a,Mossel:07,DaMoRo:11a,DaMoRo:11b}. Understanding the role of the depth under the MSC-JC is an interesting avenue for future work.
	
	\item We have assumed here that the mutation rates are the same across genes. This assumption is not realistic and relaxing it is important for the practical relevance of this line of work. Identifiability issues may arise however~\cite{MatsenSteel:07,Steel:09}. In a related issue, we have assumed, to simplify, that all genes have the same length. (Gene lengths and mutation rates together control the amount of phylogenetic signal in a gene.) We leave for future work how best to take advantage of differing gene lengths (beyond simply truncating to the shortest gene).

	\item A more technical point left open here is to remove the assumption that $k$ grows polynomially with $f$. This may require new ideas.
	
\end{enumerate}


\pagebreak
\bibliographystyle{alpha}
\bibliography{QuEST_refs,own,thesis,RECOMB12,concat}

\newcommand{\etalchar}[1]{$^{#1}$}
\begin{thebibliography}{ESSW99b}

\bibitem[ABF{\etalchar{+}}99]{ABFPT:99}
Richa Agarwala, Vineet Bafna, Martin Farach, Mike Paterson, and Mikkel Thorup.
\newblock On the approximability of numerical taxonomy (fitting distances by
  tree metrics).
\newblock {\em SIAM J. Comput.}, 28(3):1073--1085 (electronic), 1999.

\bibitem[ADHR12]{AnDaHaRo:12}
Alexandr Andoni, Constantinos Daskalakis, Avinatan Hassidim, and Sebastien
  Roch.
\newblock Global alignment of molecular sequences via ancestral state
  reconstruction.
\newblock {\em Stochastic Processes and their Applications}, 122(12):3852 --
  3874, 2012.

\bibitem[ADR11a]{AlDeRh:11b}
Elizabeth~S. Allman, James~H. Degnan, and John~A. Rhodes.
\newblock Determining species tree topologies from clade probabilities under
  the coalescent.
\newblock {\em Journal of Theoretical Biology}, 289:96 -- 106, 2011.

\bibitem[ADR11b]{AlDeRh:11}
Elizabeth~S. Allman, James~H. Degnan, and John~A. Rhodes.
\newblock Identifying the rooted species tree from the distribution of unrooted
  gene trees under the coalescent.
\newblock {\em Journal of Mathematical Biology}, 62(6):833--862, 2011.

\bibitem[ADR17]{AlDeRh:17}
E.~Allman, J.~Degnan, and J.~Rhodes.
\newblock Species tree inference from gene splits by unrooted star methods.
\newblock {\em IEEE/ACM Transactions on Computational Biology and
  Bioinformatics}, PP(99):1--1, 2017.

\bibitem[ALPE12]{ALP:+:12}
Christian~N.K. Anderson, Liang Liu, Dennis Pearl, and Scott~V. Edwards.
\newblock Tangled trees: The challenge of inferring species trees from
  coalescent and noncoalescent genes.
\newblock In Maria Anisimova, editor, {\em Evolutionary Genomics}, volume 856
  of {\em Methods in Molecular Biology}, pages 3--28. Humana Press, 2012.

\bibitem[Att99]{Atteson:99}
K.~Atteson.
\newblock The performance of neighbor-joining methods of phylogenetic
  reconstruction.
\newblock {\em Algorithmica}, 25(2-3):251--278, 1999.

\bibitem[BCMR06]{BoChMoRo:06}
Christian Borgs, Jennifer~T. Chayes, Elchanan Mossel, and S{\'e}bastien Roch.
\newblock The {K}esten-{S}tigum reconstruction bound is tight for roughly
  symmetric binary channels.
\newblock In {\em FOCS}, pages 518--530, 2006.

\bibitem[BLM13]{boucheron2013concentration}
S.~Boucheron, G.~Lugosi, and P.~Massart.
\newblock {\em Concentration Inequalities: A Nonasymptotic Theory of
  Independence}.
\newblock OUP Oxford, 2013.

\bibitem[BS14]{BhaskarSong:14}
Anand Bhaskar and Yun~S. Song.
\newblock Descartes' rule of signs and the identifiability of population
  demographic models from genomic variation data.
\newblock {\em Ann. Statist.}, 42(6):2469--2493, 2014.

\bibitem[CGG02]{CrGoGo:02}
M.~Cryan, L.~A. Goldberg, and P.~W. Goldberg.
\newblock Evolutionary trees can be learned in polynomial time.
\newblock {\em SIAM J. Comput.}, 31(2):375--397, 2002.
\newblock short version, Proceedings of the 39th Annual Symposium on
  Foundations of Computer Science (FOCS 98), pages 436-445, 1998.

\bibitem[CK15]{CHIFMAN2015}
Julia Chifman and Laura Kubatko.
\newblock Identifiability of the unrooted species tree topology under the
  coalescent model with time-reversible substitution processes, site-specific
  rate variation, and invariable sites.
\newblock {\em Journal of Theoretical Biology}, 374:35 -- 47, 2015.

\bibitem[CT06]{ChorTuller:06}
Benny Chor and Tamir Tuller.
\newblock Finding a maximum likelihood tree is hard.
\newblock {\em J. ACM}, 53(5):722--744, 2006.

\bibitem[DBP05]{DeBrPh:05}
Frederic Delsuc, Henner Brinkmann, and Herve Philippe.
\newblock Phylogenomics and the reconstruction of the tree of life.
\newblock {\em Nat Rev Genet}, 6(5):361--375, 05 2005.

\bibitem[DD10]{DeGiorgioDegnan:10}
Michael DeGiorgio and James~H Degnan.
\newblock {Fast and consistent estimation of species trees using supermatrix
  rooted triples.}
\newblock {\em Molecular Biology and Evolution}, 27(3):552--69, March 2010.

\bibitem[DD14]{DeGDe:14}
Michael DeGiorgio and James~H. Degnan.
\newblock Robustness to divergence time underestimation when inferring species
  trees from estimated gene trees.
\newblock {\em Systematic Biology}, 63(1):66, 2014.

\bibitem[DDBR09]{DeDGiBr+:09}
James~H. Degnan, Michael DeGiorgio, David Bryant, and Noah~A. Rosenberg.
\newblock Properties of consensus methods for inferring species trees from gene
  trees.
\newblock {\em Systematic Biology}, 58(1):35--54, 2009.

\bibitem[DMR11a]{DaMoRo:11a}
Constantinos Daskalakis, Elchanan Mossel, and S{\'e}bastien Roch.
\newblock Evolutionary trees and the ising model on the bethe lattice: a proof
  of steel's conjecture.
\newblock {\em Probability Theory and Related Fields}, 149:149--189, 2011.
\newblock 10.1007/s00440-009-0246-2.

\bibitem[DMR11b]{DaMoRo:11b}
Constantinos Daskalakis, Elchanan Mossel, and S{\'e}bastien Roch.
\newblock Phylogenies without branch bounds: Contracting the short, pruning the
  deep.
\newblock {\em SIAM J. Discrete Math.}, 25(2):872--893, 2011.

\bibitem[DNR14]{DaNoRo:14}
Gautam Dasarathy, Robert~D. Nowak, and S{\'{e}}bastien Roch.
\newblock New sample complexity bounds for phylogenetic inference from multiple
  loci.
\newblock In {\em 2014 {IEEE} International Symposium on Information Theory,
  Honolulu, HI, USA, June 29 - July 4, 2014}, pages 2037--2041, 2014.

\bibitem[DNR15]{dasarathy2015data}
G.~Dasarathy, R.~Nowak, and S.~Roch.
\newblock Data requirement for phylogenetic inference from multiple loci: A new
  distance method.
\newblock {\em Computational Biology and Bioinformatics, IEEE/ACM Transactions
  on}, 12(2):422--432, March 2015.

\bibitem[DR06]{DegnanRosenberg:06}
J.~H. Degnan and N.~A. Rosenberg.
\newblock Discordance of species trees with their most likely gene trees.
\newblock {\em PLoS Genetics}, 2(5), May 2006.

\bibitem[DR09]{DegnanRosenberg:09}
James~H. Degnan and Noah~A. Rosenberg.
\newblock Gene tree discordance, phylogenetic inference and the multispecies
  coalescent.
\newblock {\em Trends in Ecology and Evolution}, 24(6):332 -- 340, 2009.

\bibitem[DR13]{DaskalakisRoch:13}
Constantinos Daskalakis and Sebastien Roch.
\newblock Alignment-free phylogenetic reconstruction: sample complexity via a
  branching process analysis.
\newblock {\em Ann. Appl. Probab.}, 23(2):693--721, 2013.

\bibitem[Dur96]{Durrett:96}
Richard Durrett.
\newblock {\em Probability: theory and examples}.
\newblock Duxbury Press, Belmont, CA, second edition, 1996.

\bibitem[ESSW99a]{ErStSzWa:99a}
P.~L. Erd\"{o}s, M.~A. Steel, L.~A. Sz\'{e}kely, and T.~A. Warnow.
\newblock A few logs suffice to build (almost) all trees (part 1).
\newblock {\em Random Struct. Algor.}, 14(2):153--184, 1999.

\bibitem[ESSW99b]{ErStSzWa:99b}
P.~L. Erd\"{o}s, M.~A. Steel, L.~A. Sz\'{e}kely, and T.~A. Warnow.
\newblock A few logs suffice to build (almost) all trees (part 2).
\newblock {\em Theor. Comput. Sci.}, 221:77--118, 1999.

\bibitem[Fel04]{Felsenstein:04}
J.~Felsenstein.
\newblock {\em Inferring Phylogenies}.
\newblock Sinauer, Sunderland, MA, 2004.

\bibitem[FK99]{FarachKannan:99}
Martin Farach and Sampath Kannan.
\newblock Efficient algorithms for inverting evolution.
\newblock {\em J. ACM}, 46(4):437--449, 1999.

\bibitem[GF82]{GrahamFoulds:82}
R.~L. Graham. and L.~R. Foulds.
\newblock Unlikelihood that minimal phylogenies for a realistic biological
  study can be constructed in reasonable computational time.
\newblock {\em Math. Biosci.}, 60:133--142, 1982.

\bibitem[GMS12]{GronauMS12}
Ilan Gronau, Shlomo Moran, and Sagi Snir.
\newblock Fast and reliable reconstruction of phylogenetic trees with
  indistinguishable edges.
\newblock {\em Random Struct. Algorithms}, 40(3):350--384, 2012.

\bibitem[Hoe63]{hoeffding1963probability}
Wassily Hoeffding.
\newblock Probability inequalities for sums of bounded random variables.
\newblock {\em Journal of the American Statistical Association},
  58(301):13--30, 1963.

\bibitem[HRS10]{huson2010phylogenetic}
D.H. Huson, R.~Rupp, and C.~Scornavacca.
\newblock {\em Phylogenetic Networks: Concepts, Algorithms and Applications}.
\newblock Cambridge University Press, 2010.

\bibitem[KD07]{KubatkoDegnan:07}
L.~S. Kubatko and J.~H. Degnan.
\newblock Inconsistency of phylogenetic estimates from concatenated data under
  coalescence.
\newblock {\em Systematic Biology}, 56(1):17--24, February 2007.

\bibitem[KMRR15]{KMR+:15}
Junhyong Kim, Elchanan Mossel, Miklós~Z. Rácz, and Nathan Ross.
\newblock Can one hear the shape of a population history?
\newblock {\em Theoretical Population Biology}, 100(0):26 -- 38, 2015.

\bibitem[KZZ03]{KiZhZh:03}
Valerie King, Li~Zhang, and Yunhong Zhou.
\newblock On the complexity of distance-based evolutionary tree reconstruction.
\newblock In {\em SODA '03: Proceedings of the fourteenth annual ACM-SIAM
  symposium on Discrete algorithms}, pages 444--453, Philadelphia, PA, USA,
  2003. Society for Industrial and Applied Mathematics.

\bibitem[LYK{\etalchar{+}}09]{LYK+:09}
Liang Liu, Lili Yu, Laura Kubatko, Dennis~K. Pearl, and Scott~V. Edwards.
\newblock Coalescent methods for estimating phylogenetic trees.
\newblock {\em Molecular Phylogenetics and Evolution}, 53(1):320 -- 328, 2009.

\bibitem[LYP10]{LiYuPe:10}
Liang Liu, Lili Yu, and DennisK. Pearl.
\newblock Maximum tree: a consistent estimator of the species tree.
\newblock {\em Journal of Mathematical Biology}, 60(1):95--106, 2010.

\bibitem[Mad97]{Maddison:97}
Wayne~P. Maddison.
\newblock Gene trees in species trees.
\newblock {\em Systematic Biology}, 46(3):523--536, 1997.

\bibitem[MBW16]{MiBaWa:16}
Siavash Mirarab, Md~Shamsuzzoha Bayzid, and Tandy Warnow.
\newblock Evaluating summary methods for multilocus species tree estimation in
  the presence of incomplete lineage sorting.
\newblock {\em Systematic Biology}, 65(3):366, 2016.

\bibitem[MFP08]{MyFePa:08}
Simon Myers, Charles Fefferman, and Nick Patterson.
\newblock Can one learn history from the allelic spectrum?
\newblock {\em Theoretical Population Biology}, 73(3):342 -- 348, 2008.

\bibitem[MHR13]{MiHiRa:13}
Radu Mihaescu, Cameron Hill, and Satish Rao.
\newblock Fast phylogeny reconstruction through learning of ancestral
  sequences.
\newblock {\em Algorithmica}, 66(2):419--449, 2013.

\bibitem[MLP09]{Mihaescu2009}
Radu Mihaescu, Dan Levy, and Lior Pachter.
\newblock Why neighbor-joining works.
\newblock {\em Algorithmica}, 54(1):1--24, May 2009.

\bibitem[Mos03]{Mossel:03}
E.~Mossel.
\newblock On the impossibility of reconstructing ancestral data and
  phylogenies.
\newblock {\em J. Comput. Biol.}, 10(5):669--678, 2003.

\bibitem[Mos04]{Mossel:04a}
E.~Mossel.
\newblock Phase transitions in phylogeny.
\newblock {\em Trans. Amer. Math. Soc.}, 356(6):2379--2404, 2004.

\bibitem[Mos07]{Mossel:07}
E.~Mossel.
\newblock Distorted metrics on trees and phylogenetic forests.
\newblock {\em IEEE/ACM Trans. Comput. Bio. Bioinform.}, 4(1):108--116, 2007.

\bibitem[MR95]{MotwaniRaghavan:95}
Rajeev Motwani and Prabhakar Raghavan.
\newblock {\em Randomized algorithms}.
\newblock Cambridge University Press, Cambridge, 1995.

\bibitem[MR06]{MosselRoch:06}
Elchanan Mossel and S{\'e}bastien Roch.
\newblock Learning nonsingular phylogenies and hidden {M}arkov models.
\newblock {\em Ann. Appl. Probab.}, 16(2):583--614, 2006.

\bibitem[MR10]{MosselRoch:10a}
Elchanan Mossel and S{\'e}bastien Roch.
\newblock Incomplete lineage sorting: Consistent phylogeny estimation from
  multiple loci.
\newblock {\em IEEE/ACM Trans. Comput. Biology Bioinform.}, 7(1):166--171,
  2010.

\bibitem[MR15]{MosselRoch:15}
Elchanan Mossel and S{\'{e}}bastien Roch.
\newblock Distance-based species tree estimation: Information-theoretic
  trade-off between number of loci and sequence length under the coalescent.
\newblock In {\em Approximation, Randomization, and Combinatorial Optimization.
  Algorithms and Techniques, {APPROX/RANDOM} 2015, August 24-26, 2015,
  Princeton, NJ, {USA}}, pages 931--942, 2015.

\bibitem[MS07]{MatsenSteel:07}
Frederick~A. Matsen and Mike Steel.
\newblock Phylogenetic mixtures on a single tree can mimic a tree of another
  topology.
\newblock {\em Systematic Biology}, 56(5):767--775, 2007.

\bibitem[Nak13]{Nakhleh:13}
Luay Nakhleh.
\newblock Computational approaches to species phylogeny inference and gene tree
  reconciliation.
\newblock {\em Trends in ecology \& evolution},
  28(12):10.1016/j.tree.2013.09.004, 12 2013.

\bibitem[Roc06]{Roch:06}
S{\'e}bastien Roch.
\newblock A short proof that phylogenetic tree reconstruction by maximum
  likelihood is hard.
\newblock {\em IEEE/ACM Trans. Comput. Biology Bioinform.}, 3(1):92--94, 2006.

\bibitem[Roc13]{Roch13}
S{\'{e}}bastien Roch.
\newblock An analytical comparison of multilocus methods under the multispecies
  coalescent: The three-taxon case.
\newblock In {\em Biocomputing 2013: Proceedings of the Pacific Symposium,
  Kohala Coast, Hawaii, USA, January 3-7, 2013}, pages 297--306, 2013.

\bibitem[Roo01]{roos2001binomial}
Bero Roos.
\newblock Binomial approximation to the poisson binomial distribution: The
  krawtchouk expansion.
\newblock {\em Theory of Probability \& Its Applications}, 45(2):258--272,
  2001.

\bibitem[RS]{RochSly:u}
Sebastien Roch and Allan Sly.
\newblock Phase transition in the sample complexity of likelihood-based
  phylogeny inference.
\newblock {\it Submitted}, {\tt arXiv:1508.01964}.

\bibitem[RS15]{RochSteel:14}
Sebastien Roch and Mike Steel.
\newblock Likelihood-based tree reconstruction on a concatenation of alignments
  can be positively misleading.
\newblock {\em Theoretical Population Biology}, 2015.
\newblock To appear.

\bibitem[RW15]{RochWarnow:15}
Sebastien Roch and Tandy Warnow.
\newblock On the robustness to gene tree estimation error (or lack thereof) of
  coalescent-based species tree methods.
\newblock {\em Systematic Biology}, 2015.
\newblock In press.

\bibitem[RY03a]{RannalaYang:03}
Bruce Rannala and Ziheng Yang.
\newblock {Bayes estimation of species divergence times and ancestral
  population sizes using DNA sequences from multiple loci.}
\newblock {\em Genetics}, 164(4):1645--1656, 2003.

\bibitem[RY03b]{rannala2003bayes}
Bruce Rannala and Ziheng Yang.
\newblock Bayes estimation of species divergence times and ancestral population
  sizes using dna sequences from multiple loci.
\newblock {\em Genetics}, 164(4):1645--1656, 2003.

\bibitem[SRM]{ShekharRochMirarab:u}
Shubhanshu Shekhar, Sebastien Roch, and Siavash Mirarab.
\newblock Species tree estimation using {ASTRAL}: how many genes are enough?
\newblock {\it Preprint}, 2017. {\it To appear in the Proceedings of the 21st
  Annual International Conference on Research in Computational Molecular
  Biology} (RECOMB), 2017.

\bibitem[SS02]{SteelSzekely:02}
M.~A. Steel and L.~A. Sz{\'e}kely.
\newblock Inverting random functions. {II}. {E}xplicit bounds for discrete
  maximum likelihood estimation, with applications.
\newblock {\em SIAM J. Discrete Math.}, 15(4):562--575 (electronic), 2002.

\bibitem[SS03a]{SempleSteel:03}
C.~Semple and M.~Steel.
\newblock {\em Phylogenetics}, volume~22 of {\em Mathematics and its
  Applications series}.
\newblock Oxford University Press, 2003.

\bibitem[SS03b]{semple2003phylogenetics}
Charles Semple and Mike~A Steel.
\newblock {\em Phylogenetics}, volume~24.
\newblock Oxford University Press, 2003.

\bibitem[Ste09]{Steel:09}
Mike Steel.
\newblock A basic limitation on inferring phylogenies by pairwise sequence
  comparisons.
\newblock {\em Journal of Theoretical Biology}, 256(3):467 -- 472, 2009.

\bibitem[Ste16]{Steel:16}
Mike Steel.
\newblock {\em Phylogeny---discrete and random processes in evolution},
  volume~89 of {\em CBMS-NSF Regional Conference Series in Applied
  Mathematics}.
\newblock Society for Industrial and Applied Mathematics (SIAM), Philadelphia,
  PA, 2016.

\bibitem[War]{Warnow:u}
Tandy Warnow.
\newblock \textit{Computational phylogenetics: An introduction to designing
  methods for phylogeny estimation}.
\newblock To be published by Cambridge University Press , 2017.

\bibitem[Yan14]{yang2014molecular}
Z.~Yang.
\newblock {\em Molecular Evolution: A Statistical Approach}.
\newblock Oxford University Press, 2014.

\end{thebibliography}


\pagebreak
\appendix

\section{Models: full definitions}
\label{app:msc-details}
\label{section:models}

In this section, we provide full definitions of the multispecies coalescent and Jukes-Cantor model.

\paragraph{Jukes-Cantor model of sequence evolution}
The Jukes-Cantor model of sequence evolution is detailed in Algorithm~\ref{alg:jukes-cantor}.
\newcounter{tempctr}
\setcounter{tempctr}{\value{figure}}
\renewcommand{\figurename}{Algorithm}
\setcounter{figure}{\thealgorithm}
\begin{figure*}[h!!]
	\begin{framed}
		\vspace{-2mm}
		\begin{enumerate}
			\item Associate to the root $r$ a sequence $\xi_r = \left\{ \xi_{r}^1,\ldots, \xi_{r}^k \right\}\in \left\{{\tt A, T, G, C}\right\}^k$ of length $k$, where each character $\xi_{r}^j$ is drawn independently and uniformly at random from $\left\{{\tt A,T,G,C} \right\}$. 
			\item Initialize the set $U$ with the children of the root $r$ of $G^{(i)}$. 
			\item Repeat until $U= \emptyset$. 
			\begin{enumerate}
				\item Pick $u\in U$, and let $u^-$ be the parent of $u$ in $G^{(i)}$.
				\item Associate a sequence $\xi_u \in \left\{ {\tt A,T,G,C} \right\}^k$ as follows. $\xi_u$ is obtained from $\xi_{u^-}$ by mutating each site independently with probability $p_{(u,u^-)}$. If a mutation occurs at a site $j$, it gets assigned a uniformly random character from $\left\{ {\tt A,T,G,C} \right\}$, else the corresponding character from $\xi_{u^-}$ simply gets copied.
				\item Remove $u$ from $U$ and add any children of $u$ to $U$. 
			\end{enumerate}
		\end{enumerate}
	\end{framed}
	\caption{The Jukes-Cantor process}
	\label{alg:jukes-cantor}
\end{figure*}
\renewcommand{\figurename}{Figure}
\setcounter{algorithm}{\thefigure}
\setcounter{figure}{\thetempctr}

\paragraph{Multispecies coalescent}\newcounter{tempctrctr}
\setcounter{tempctrctr}{\value{figure}}
\renewcommand{\figurename}{Algorithm}
\setcounter{figure}{\thealgorithm}
\begin{figure*}[h!!]
	\begin{framed}
		\noindent \underline{Function \msc{}($\currentvertex{}$)}.
		\begin{enumerate}
			\itemsep-0.5em
			\item {\bf If }$\currentvertex{} \in L$, i.e., $\currentvertex$ is a leaf of $S$:
			\vspace{-0.5em}
			\begin{enumerate}
				\itemsep0em
				\item {\bf Return} the following single edge (root-extended) tree: $(\currentvertex{},r')$. One vertex of this edge corresponds to the current leaf and the other is an ancestor to this leaf, and the root of the tree. The length of the edge created is $\mu_e\times \tau_e$, where $e\in E_s$ is the edge incident upon $\currentvertex{}$ in $S$. 
			\end{enumerate}
			\item {\bf Else}
			\vspace{-0.5em}
			\begin{enumerate}
				\itemsep0em
				\item Let $d_1$ and $d_2$ be the descendants of $\currentvertex{}$. 
				\item If $\currentvertex{}$ is $r$, the root of $S$, then set $\tau,\mu = \infty$. Otherwise, let $\tau$ and $\mu$ respectively be the length  and mutation rate of the branch connecting $\currentvertex{}$ to its immediate ancestor in $S$.   
				\item {\bf Return} the following forest: \mergeforest{}(\msc{}($d_1$), \msc{}($d_2$), $\tau$, $\mu$)
			\end{enumerate}
		\end{enumerate}
		
		\noindent \underline{Function \mergeforest{}($F_1$, $F_2$, $\tau$, $\mu$).} 
		\begin{enumerate}
			\itemsep0em
			\item Create a new forest $F$ that is a union of $F_1$ and $F_2$. Set $k =$ number of roots (or lineages) in $F$.
			\item {\bf While } $k>1$:
			\vspace{-0.5em}
			\begin{enumerate}
				\item Choose a random pair of (distinct) roots $r_1$ and $r_2$ from $F$. Also draw a random time $t\sim$ Exp$({k\choose 2})$. 
				\item {\bf If $t\geq \tau$}
				\begin{enumerate}
					\itemsep0em
					\item Increase the length of  all the $k$ root edges in $F$ by $\mu\times \tau$. {\bf Return} $F$. 
				\end{enumerate}
				\item {\bf Else }
				\begin{enumerate}
					\itemsep0em
					\item Set $\tau := \tau-t$. Create new vertices $r_3, r_3'$. 
					\item Make $r_1$ and $r_2$ descendants of $r_3$, where the lengths of the branches ($r_1$ , $r_3$) and ($r_2$, $r_3$) are both set equal to $\mu\times t$. Make $r_3'$ the root of this newly created tree, connecting $r_3'$ to $r_3$ with a length $0$ branch. 
					\item Also, add $\mu\times t$ to the root edges of the other trees in $F$.  Now, $F$ has one fewer root, so set $k := k-1$.
				\end{enumerate}
			\end{enumerate}
			\item If $k$ is 1, then {\bf return} $F$, adding $\mu\times \tau$ to the unique root edge of $F$. 
		\end{enumerate}
	\end{framed}
	\caption{The multispecies coalescent process}
	\label{alg:msc-process}
\end{figure*}
\renewcommand{\figurename}{Figure}
\setcounter{algorithm}{\thefigure}
\setcounter{figure}{\thetempctrctr}
Let $S = (V_s, E_s; r, \vec{\tau}, \vec{\mu})$ be a fixed species phylogeny. For the sake of this algorithmic description, we will work with what we call \emph{root-extended} trees and forests. Given a weighted rooted tree, the corresponding root-extended tree simply has a new vertex $r'$ that is connected to $r$ with a (potentially zero-) weighted edge. We will call $r'$ the root of such a tree, and the edge connecting $r$ and $r'$ as the \emph{root edge}. A root-extended forest is simply a union of root-extended trees. Only for  the description that follows, when we write tree and forest, we mean the root-extended versions unless otherwise specified. 


We first describe a function \msc{} that takes as input a vertex $\currentvertex{} \in V_s$, and returns a forest. We will obtain a random gene tree from the multispecies coalescent as follows: 
(1) invoke the function \msc{} with $r$, the root of $S$ as input, and (2) contract the root-edge of the tree returned (thus making it a gene tree per Definition~\ref{def:gene-tree}). That is, for $i\in [m]$, $G^{(i)}= $  \msc{}$(r)$, with the root edge contracted. This function, as we can see in Algorithm~\ref{alg:msc-process},  recursively descends the species phylogeny $S$ and it calls the function \mergeforest{}() at every stage of this recursion. 

The \mergeforest{}() function works with rooted forests. This function operates at each branch of the species tree, and (potentially) merges the genealogies of its two descendant populations. It takes as input two forests $F_1$ and $F_2$ corresponding to the descendants of the current branch (or population) that it is invoked at. It also takes the mutation rate $\mu$ and length $\tau$ associated with the current branch. It then returns a single forest $F$ after performing (potentially) multiple coalescence operations. The details are in Algorithm~\ref{alg:msc-process}.

Fig.~\ref{fig:msc-example-alg} shows two sample draws from the multispecies coalescent process. Notice that while the topology of Gene 1 (red gene) agrees with the topology of the underlying species tree, the topology of Gene 2 (blue gene) does not. This happens since in Step~2(b) of \mergeforest{}, if the randomly drawn time $t$ is larger than $\tau$, the chosen pair of lineages do not coalesce in the current population. This sort of discordance in the topologies of gene trees is called \emph{incomplete lineage sorting} (ILS).
For more details, we refer the reader e.g.~to~\cite{DegnanRosenberg:09}.

The density of the likelihood of a gene tree ${G}^{(i)} = \left({V}^{(i)},{E}^{(i)}\right)$ can now be written down as follows. We will focus our attention on the branch $e\in E$ of the species tree and for the gene tree $\gcal^{(i)}$, let $I^{(i)}_e$ and $O^{(i)}_e$ be the number of lineages entering and leaving the branch $e$ respectively. For instance, consider Gene 1 in Figure~\ref{fig:mscExample2}. 
\begin{figure}[t!]
	\centering
	\includegraphics[scale = 2]{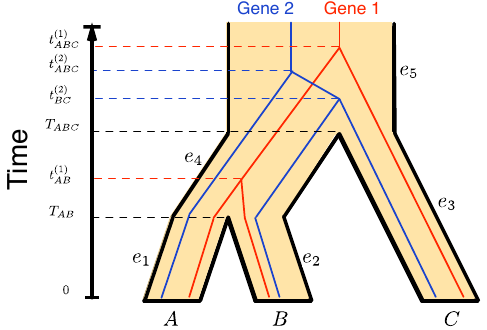}
	\caption{A species phylogeny (the thick, shaded tree) and two samples from the multispecies coalescent.}
	\label{fig:mscExample2}
\end{figure}
Here, two lineages enter the branch $e_4$  and one lineage leaves it. On the other hand, in the case of Gene~2 in Figure~\ref{fig:mscExample2}, two lineages enter the branch $e_4$ and two lineages leave it. Let $t_{e,s}^{(i)}, s = \left\{1,2,\ldots,I_e^{(i)} - O_{e}^{(i)}+1\right\}$ be the $s-$th coalescent time corresponding to $\gcal^{(i)}$ in the branch $e$. From the algorithm described above, each pair of lineages in a population can coalesce at a random time drawn according to the Exp$(1)$ distribution  independently of each other. Therefore, after the $(s-1)$-th coalescent event at time $t_{e,s-1}^{(i)}$, there are $I_e^{(i)} - s+1$ surviving lineages in branch $e$ and the likelihood that the $s-$th coalescence time in branch $e$ is $t_{e,s}^{(i)}$ corresponds to the event that the minimum of ${I_e^{(i)} - s+1\choose 2}$ random variables distributed according to Exp$(1)$ has the value $t_{e,s}^{(i)} - t_{e,s-1}^{(i)}$. It follows that the density of the likelihood of $\gcal^{(i)}$ can be written as 
\begin{equation}
\prod_{e\in E}\prod_{s=1}^{I^{(i)}_e-O_e^{(i)}+1}\exp\left\{-{I_e^{(i)} -s+1\choose 2}\left[t_{e,s}^{(i)} - t_{e,s-1}^{(i)}\right]\right\},
\end{equation} 
where, for convenience, we let $t^{(i)}_{e,0}$ and $t^{(i)}_{e,I^{(i)}_{e}-O^{(i)}_{e}+1}$ be respectively the divergence times of the population in $e$ and of its parent population. 

We will also need the density and quantiles of gene tree pairwise distances.
For a pair of leaves $a,b \in L$, $\delta_{ab}$ is the branch length induced by the random gene tree which is drawn according to the MSC. Notice that, by definition, $\delta_{ab}\geq \mu_{ab}$ . We will let $f_{ab}\left(\cdot\right)$ and $F_{ab}\left(\cdot\right)$ denote respectively the density and the cumulative density function of the random variable $Z_{ab}\triangleq\frac{\delta_{ab}-\mu_{ab}}{2}$. 
Because of the memoryless property of the exponential, 
it is natural to think of the distribution of $Z_{ab}$ as a mixture of distributions whose supports are disjoint, corresponding to the different branches that the lineages go through before coalescing.
We state this more generally as follows.
	Suppose that $U_1,U_2,\ldots, U_r$ are subsets of $\mathbb{R}$ such that they satisfy 
	\begin{equation*}
	\sup(U_i)\leq \inf(U_{i+1}),\;\;\; i =1,2,\ldots, r.
	\end{equation*}  
	Suppose that $f$ is a probability density function such that 
	\begin{equation*}
	f(x) = \sum_{i=1}^r \omega_i f_i(x),
	\end{equation*}
	where $\omega_1,\omega_2,\ldots,\omega_r\in (0,1)$ are such that $\sum_{i=1}^r \omega_i = 1$, and the density $f_i$ is supported on $S_i$ for $i=1,\ldots,r$. Then, the quantile function of $f$ is given as follows
	\begin{equation*}
	\mathbb{Q}_F(\alpha) \triangleq\inf\left\{x\in \mathbb{R}: \alpha\leq F(x)\right\}= \sum_{i=1}^r\ind\left\{\alpha \in \left[\sum_{j=0}^{i-1} \omega_j,\sum_{j=0}^{i} \omega_j\right)\right\}\mathbb{Q}_{F_i}\left(\frac{\alpha - \sum_{j=0}^{i-1} \omega_i}{\omega_{i}}\right),
	\end{equation*}  
	where, we set $\omega_0 = 0$.
Specializing to $f_{ab}$ under the multispecies coalescent, it follows that there exists a finite sequence of constants $\mu_1,\ldots,\mu_r \in [\mu_L,\mu_U]$ and $h_0,\ldots,h_{r-1} \in [f',g'+ng]$ such that 
$$
\omega_i
= e^{-\sum_{j=1}^{i-1}\mu_{j}^{-1}(h_j - h_{j-1})}
- e^{-\sum_{j=1}^{i}\mu_{j}^{-1}(h_j - h_{j-1})},
$$
and
$$
f_i(x)
=
\frac{\mu_i^{-1} e^{-\mu_i^{-1}(x - h_{i-1})}}
{1 - e^{-\mu_i^{-1}(h_i - h_{i-1})}}.
$$
Cancellations lead to
\begin{equation*}
f_{ab}(x) = \sum_{i=1}^r e^{-\sum_{j=1}^{i-1}\mu_{j}^{-1}(h_j - h_{j-1})}\mu_i^{-1} e^{-\mu_i^{-1} (x - h_{i-1})}.
\end{equation*}
This formula implies that the density is bounded between positive constants. We will need the following implication.
For any $\alpha\in [0,1)$, we let $\delta_{ab}^{(\alpha)}$ and $p_{ab}^{(\alpha)}$ denote  the $\alpha$-quantile of the $\delta_{ab}$ and $p_{ab}$ respectively. Since by definition $p_{ab} = p(\delta_{ab})$, we have that $p^{(\alpha)}_{ab} = p(\delta_{ab}^{(\alpha)})$, where $p(x) = \frac{3}{4}\left( 1 - e^{-4x/3} \right)$. 
Then, for any $0 < \beta' < \beta < 1$, there are constants $0 < c' < c'' <+\infty$ (depending on $\mu_L, \mu_U, g, g', n, \beta', \beta$) such that
for any $\xi\in (0,1-\beta')$, we have
\begin{equation}
\label{lem:mscQuantile-delta}
c' \xi \leq 
\delta_{ab}^{(\beta+\xi)} - \delta_{ab}^{(\beta)}
\leq
c'' \xi,
\end{equation}
and, hence,
\begin{equation}
\label{lem:mscQuantile}
c' \xi \leq 
p_{ab}^{(\beta+\xi)} - p_{ab}^{(\beta)}
\leq
c'' \xi.
\end{equation}

\section{Identifiability result: proofs}
\label{sec:ident-proofs}

The key steps in the proof of Theorem~\ref{thm:ident} follow.

\bigskip

\begin{prevproof}{Proposition}{lem:ident-fixing-top}
	Recall the definition of the event 
	$$
	\mathscr{E}_{I}
	= \left\{\delta_{xy} \leq \delta_{xy}^{(1/2)},
	\delta_{xz} > \delta_{xz}^{(1/2)},
	\delta_{yz} > \delta_{yz}^{(1/2)}\right\}.
	$$
	Our goal is to show that it has positive probability and that it implies
	that the rooted topology of $G$
	is $xy|z$. This makes sense on an intuitive level because the
	event $\mathscr{E}_{I}$ requires
	that $\delta_{xy}$ is somewhat small and that
	$\delta_{xz}$,
	$\delta_{yz}$ are somewhat large. To make this rigorous
	we use the fact that, conditioned on coalescence occurring
	in the top population, the time to coalescence inside that
	population is identically distributed for all pairs of lineages. That observation facilitates the comparison
	the $\delta$-quantiles, as we show now.
	
	Let $\Gamma_{xy}$ be twice the weighted height of the MSC process between the lineages of $x$ and $y$ \textit{in the common ancestral population}. Notice that with this definition of $\Gamma_{xy}$, we can write $\delta_{xy} = \mu_{xy} + \Gamma_{xy}$. 
	We let $\Gamma_{xy} = 0$ if the coalescence between
	the lineages of $x$ and $y$ occurs
	\textit{below} the common ancestral population (which in this
	case is only possible for the two closest populations in the species tree), an event we denote by $\mathscr{B}_{xy}$. 
	For $\beta \in [0,1]$, let $\Gamma_{xy}^{(\beta)}$
	be the $\beta$-th quantile of $\Gamma_{xy}$. We define the quantities above similary for the other pairs. We make three
	observations:
	\begin{enumerate}
		\item[a)] By definition of $\Gamma_{xy}$, the event
		$\{\delta_{xy} \leq \delta_{xy}^{(1/2)}\}$ implies the event
		$\{\Gamma_{xy} \leq \Gamma_{xy}^{(1/2)}\}$. (Note that the
		two events are not in fact equivalent, though, because of the possibility
		that $\Gamma_{xy}^{(1/2)} = 0$.) Similarly,
		$\{\delta_{xz} > \delta_{xz}^{(1/2)}\}$ implies
		$\{\Gamma_{xz} > \Gamma_{xz}^{(1/2)}\}$
		and $\{\delta_{yz} > \delta_{yz}^{(1/2)}\}$ implies
		$\{\Gamma_{yz} > \Gamma_{yz}^{(1/2)}\}$.
		
		\item[b)] Irrespective of the
		species tree topology, the lineages of at least of one the pairs
		$(x,z)$ or $(y,z)$ can only coalesce in the common
		ancestral population. See Figures~\ref{fig:msc-example-alg}~and~\ref{fig:fixing-top}, for instance.  
		
		\item[c)] By symmetry, conditioned on coalescence in the common ancestral population, all $\Gamma$s are equal in distribution, i.e.,
		$$
		\Gamma_{xy}\,|\,\mathscr{B}_{xy}^{c}
		\stackrel{\rm d}{=}
		\Gamma_{xz}\,|\,\mathscr{B}_{xz}^{c}
		\stackrel{\rm d}{=}
		\Gamma_{yz}\,|\,\mathscr{B}_{yz}^{c}.
		$$
	\end{enumerate}
	
	As a consequence of b) and c), we have
	\begin{equation}
	\label{eq:ident-proof-1}
	\Gamma_{xy}^{(1/2)} \leq \max\{\Gamma_{xz}^{(1/2)}, \Gamma_{yz}^{(1/2)}\},
	\end{equation}
	where we used that, conditioned on $\mathscr{B}_{xy}$, it holds that $\Gamma_{xy} = 0$. Combining this 
	with a), we get that $\mathscr{E}_{I}$ implies
	the event 
	$$
	\left\{\Gamma_{xy} \leq \Gamma_{xy}^{(1/2)},
	\Gamma_{xz} > \Gamma_{xz}^{(1/2)},
	\Gamma_{yz} > \Gamma_{yz}^{(1/2)}\right\},
	$$
	which together with~\eqref{eq:ident-proof-1}
	implies the event
	$$
	\Gamma_{xy} < \max\{\Gamma_{xz},\Gamma_{yz}\}.
	$$
	This last event can only happen when the gene tree topology
	is $xy|z$.
	
	It remains to prove that $\mathscr{E}_{I}$
	has positive probability. Let $\mathscr{F}$ be the event
	that $G$ has rooted topology $xy|z$. 
	Note that
	\begin{eqnarray*}
		\P[\mathscr{E}_{I}]
		&\geq& \P[\mathscr{E}_{I},\mathscr{F}]\\
		&=& \P[\delta_{xy} \leq \delta_{xy}^{(1/2)}]\,
		\P[\mathscr{F}\,|\,\delta_{xy} \leq \delta_{xy}^{(1/2)}]\,
		\P[\delta_{xz} > \delta_{xz}^{(1/2)},
		\delta_{yz} > \delta_{yz}^{(1/2)}\,|\,\mathscr{F},\delta_{xy} \leq \delta_{xy}^{(1/2)}],
	\end{eqnarray*}
	where each term on the last line is clearly
	positive under the MSC. 
\end{prevproof}

\begin{prevproof}{Proposition}{lem:ident-height-diff}
	Conditioned on $\mathscr{E}_{I}$, we know 
	from Lemma~\ref{lem:ident-fixing-top} that the coalescence between the lineages of $x$ and $z$ happens in the common ancestral population of $x,y$ and $z$, irrespective of the species tree topology. The same
	holds for $y$ and $z$. This implies that 
	\begin{equation}
	\label{eq:ident-proof-2}
	\delta_{xz} = \mu_{rx} + \mu_{rz} + \Gamma_{xz},
	\end{equation}
	and
	\begin{equation}
	\label{eq:ident-proof-3}
	\delta_{yz} = \mu_{ry} + \mu_{rz} + \Gamma_{yz},
	\end{equation}
	where the $\Gamma$s are defined in the proof of Lemma~\ref{lem:ident-fixing-top}. 
	Observe further that in fact, conditioned on $\mathscr{E}_{I}$, 
	\begin{equation}
	\label{eq:ident-proof-4}
	\Gamma_{xz} = \Gamma_{yz},
	\end{equation}
	almost surely.
	Hence, combining~\eqref{eq:ident-proof-2},~\eqref{eq:ident-proof-3}, and~\eqref{eq:ident-proof-4},
	$$
	\E[\delta_{xz} - \delta_{yz}\,|\,\mathscr{E}_{I}]
	= \mu_{rx} - \mu_{ry} = \Delta_{xy},
	$$
	as claimed.
\end{prevproof}

\section{Main theorem: proof}
\label{sec:main-proof}
\label{sec:algo-high-level-proof}

Theorem~\ref{thm:main} follows from Proposition~\ref{thm:main-full} below.
\begin{proposition}[Data requirement: general version]\label{thm:main-full}
	Suppose we have data $\left\{ \xi^{ij} \right\}_{i\in [m], j\in [k]}$ generated according to the MSC-JC$(m,k)$ process on a species phylogeny $S = (V_s, E_s; r, \vec{\tau},\vec{\mu})$. The mutation rates, leaf-edge lengths and internal-edge lengths are respectively in $(\mu_L, \mu_U)$, $(f', g')$ and $(f, g)$. For any $\ve>0$ and $C>0$ there is a constant $\const{1}>0$ such that Algorithm~\ref{alg:questAdditiveQuantile} correctly identifies the topology of $S$ restricted to $\mathcal{X} = \{1,2,3\}$ with probability at least $1 - \ve$, provided there is a partitioning of the set of genes $[m] = \mred{1}\sqcup \mred2 \sqcup \mquant1 \sqcup \mquant{2}$ such that the following conditions hold: 
	
	\noindent\begin{minipage}{.5\textwidth}
		\begin{align*}
		\left| \mathcal{M}_{{\rm R}1} \right| &\geq \const{1} \log \ve^{-1}\phantom{\left(\const{2}\frac{\log k}{k f^2} \right)}\hspace{-6mm}\\
		\left| \mathcal{M}_{{\rm Q}1} \right| &\geq \const{1} \alpha^{-1} \log \ve^{-1}
		\end{align*}	
		
	\end{minipage}%
	\begin{minipage}{.5\textwidth}
		\begin{align*}
		\begin{aligned}
		\left| \mathcal{M}_{{\rm R}2} \right| &\geq \const{1}\left( 1\vee \frac{\log k}{k f^2} \right) \log \ve^{-1}\\
		\left| \mathcal{M}_{{\rm Q}2} \right| &\geq \const{1} f^{-2}\left( \alpha + f \right)\log \ve^{-1}, 
		\end{aligned}
		\end{align*}
		
	\end{minipage}
	\vspace{5mm}
	
	\noindent where $\alpha = m^{-1} \log m \vee k^{-0.5} \sqrt{\log k}$. And, the sequence length $k$ satisfies: 
	\begin{align*}
	k &\geq \const{1} \log \left(\left| \mathcal{M}_{{\rm R}2} \right|\ve^{-1}\right) \vee \const{1}\left( f^{-1}\sqrt{\log k}\right)^{C}.
	\end{align*}
\end{proposition}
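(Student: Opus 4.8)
The plan is to run the two halves of the algorithm in sequence: first the ultrametric reduction of Theorem~\ref{thm:reduction} on $\mred{} = \mred1 \sqcup \mred2$, and then the quantile test of Algorithm~\ref{alg:questAdditiveQuantile} on $\mquant{} = \mquant1 \sqcup \mquant2$. I would begin by conditioning on the event that the reduction succeeds. Under the stated lower bounds on $|\mred1|$ and $|\mred2|$ (and the sequence-length bound, which in particular furnishes the hypothesis $k \gtrsim f^{-C}$), Theorem~\ref{thm:reduction} guarantees, up to a failure probability that can be made a small multiple of $\ve$, that the transformed sequences $\{\xi^{ij}_{x,N}\}$ over $\mquant{}$ are distributed as an MSC-JC process on a tree $S'$ that is $\phi$-close to an ultrametric phylogeny carrying the correct rooted topology on $\mathcal{X}$, with $\phi = \Theta(f/\log f^{-1})$. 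Everything that follows is then a robustness analysis of the quantile test applied to this \emph{approximately} ultrametric input; the novelty relative to~\cite{MosselRoch:15} is that each estimate must absorb the $O(\phi)$ perturbation.

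Following the three-step scheme of Section~\ref{sec:main-result}, I would first control the threshold $\widehat{q}_\ast = \max_{x,y}\widehat{q}^{(\const{3}\alpha)}_{xy}$ computed from $\mquant1$. The point is to show that each empirical $\const{3}\alpha$-quantile lies within the required tolerance of its population counterpart. Since the gene-distance density is bounded between positive constants (the mixture computation preceding~\eqref{lem:mscQuantile}), a DKW-type estimate for the empirical CDF over $|\mquant1|$ genes, combined with the quantile-spacing bound~\eqref{lem:mscQuantile}, converts sampling error into an error in the quantile value itself; a sample of size $|\mquant1| \gtrsim \alpha^{-1}\log\ve^{-1}$ suffices, because resolving a quantile at level $\alpha$ requires on the order of $\alpha^{-1}$ observations (times $\log\ve^{-1}$ for the tail).

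The heart of the proof, and the step I expect to be the main obstacle, is showing that the test succeeds \emph{in expectation}: that $\E[\widehat{s}_{xy}] = \P[\widehat{q}^i_{xy} \leq \widehat{q}_\ast]$ is larger for the true cherry than for the other two pairs by a gap of order $f$ (up to $\poly(\log f^{-1})$ factors). This forces a delicate analysis of the distribution of $\widehat{q}^i_{xy}$, which is a \emph{mixture of binomials}: a mixture over the MSC-random gene distance $\delta^i_{xy}$, and, conditioned on the gene tree, a binomial arising from the Jukes--Cantor sequence noise. In the exactly ultrametric case the separation is produced by the internal branch of length $f$ through the symmetry of the MSC across sibling populations; here I must show that this order-$f$ gap survives the distortion introduced by the reduction. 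This is precisely where the log factor in $\phi = \Theta(f/\log f^{-1})$ is used: since $\phi \ll f$, the $O(\phi)$ perturbation of the relevant quantiles and of the similarity masses is dominated by the order-$f$ separation, so the correct ordering of the expected similarities is preserved. The spacing bound~\eqref{lem:mscQuantile} is again the quantitative engine, turning a shift of size $f$ in the location of the relevant quantile into a difference of size $f$ in CDF mass near the threshold level $\asymp \alpha$.

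Finally I would finish with concentration. Each $\widehat{s}_{xy}$ is an average over $\mquant2$ of indicators that, conditionally on the reduction and on $\widehat{q}_\ast$, are essentially independent Bernoulli variables of mean $\asymp \alpha$. A Bernstein inequality then concentrates $\widehat{s}_{xy}$ within half the expectation gap of its mean provided $|\mquant2| \gtrsim (\alpha f^{-2} + f^{-1})\log\ve^{-1} = f^{-2}(\alpha + f)\log\ve^{-1}$, where the $\alpha f^{-2}$ term is the variance contribution and the $f^{-1}$ term comes from the bounded range, both evaluated at gap $\asymp f$. Combining the expectation gap of Step~2 with this concentration shows that $\widehat{s}_{xy}$ is strictly maximized at the true cherry, so Algorithm~\ref{alg:questAdditiveQuantile} returns the correct topology. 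A union bound over the reduction failure, the quantile-control event on $\mquant1$, and the concentration event on $\mquant2$---each of probability at most a constant multiple of $\ve$ under the stated size conditions, with the sequence-length bound ensuring the reduction hypotheses hold and the JC-noise terms are controlled---yields overall success probability at least $1 - \ve$, after adjusting $\const{1}$.
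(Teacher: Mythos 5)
Your architecture is the same as the paper's: the proof of Proposition~\ref{thm:main-full} is precisely the modular combination you describe, namely Proposition~\ref{thm:reduction-full} for the reduction followed by Propositions~\ref{prop:quantileBehavior},~\ref{prop:quantileTestPop} and~\ref{prop:sample-quantile-test} for quantile control, the expectation gap, and Bernstein concentration. However, there is a genuine gap at exactly the step you flag as the heart of the matter. You assert that since the reduced branch lengths are $\phi$-close to ultrametric, the similarity masses $s_{xy}=\Pcondred\bigl[\widehat{q}^i_{xy}\le \widehat{q}_\ast\bigr]$ are perturbed by only $O(\phi)$, so that $\phi \ll f$ suffices. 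That accounting is wrong: conditioned on a gene tree, $\widehat{q}^i_{xy}$ is a rescaled $\mathrm{Bin}(k,\cdot)$ variable, and shifting its success probability by $\phi$ can move CDF mass at a fixed threshold by order $\phi\sqrt{k}$ (this is the content of Roos's bound, Lemma~\ref{lem:roos}). In the short-sequence regime $k \asymp f^{-2}$, where the theorem has its content, this is $\phi\sqrt{k} \asymp 1/\mathrm{poly}(\log f^{-1}) \gg f$, so the per-gene-tree perturbation swamps the order-$f$ expectation gap rather than being dominated by it. The spacing bound~\eqref{lem:mscQuantile} that you invoke controls the MSC layer of the mixture only; it does not tame the Jukes--Cantor binomial layer, which is where the amplification occurs.

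The missing idea is the paper's Lemma~\ref{lem:roos+msc}: the $\sqrt{k}$ amplification is beaten by averaging over the MSC mixture. One splits the range of $p^i_{xy}$ into a low-substitution band near $p^{(0)}_{xy}$, where Roos's $\beta\sqrt{k}$ bound applies but which the MSC occupies with probability only $O(\sqrt{\log k/k})$ by~\eqref{lem:mscQuantile}; a middle band of MSC mass $O(\alpha)$, where the CDF difference is bounded trivially by $1$; and a high-substitution region, where the threshold lies in the far left tail of both binomials and Chernoff contributes only $k^{-C_2}$. The net amplification is thus $\sqrt{\log k}$, i.e., the similarity masses move by $O(\phi\sqrt{\log k})$, and the expectation gap survives only under the condition $\phi\sqrt{\log k} \lesssim p(3f/4)$ of Proposition~\ref{prop:quantileTestPop}. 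This is also why your calibration $\phi = \Theta(f/\log f^{-1})$, taken from the simplified Theorem~\ref{thm:reduction}, is not the one the proof uses: the paper sets $\phi = \mathcal{O}(f/\sqrt{\log k})$, and this choice is exactly what the $\frac{\log k}{k f^2}$ factor in the stated condition on $|\mred{2}|$ is calibrated for, via the requirement $k|\mred{2}| \gtrsim \phi^{-2}\log \ve^{-1}$ in Proposition~\ref{thm:reduction-full}. With your choice of $\phi$, the stated bound on $|\mred{2}|$ would not even yield the reduction guarantee once $\log k \ll (\log f^{-1})^2$, e.g., when $k$ is polynomial in $f^{-1}$.
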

\noindent Under condition~\eqref{eq:main-condition} of Theorem~\ref{thm:main}, to satisfy the inequalities above,
we can choose $|\mred1|  \gtrsim 
1$, $|\mred2| \gtrsim 
1 \lor \frac{1}{k f^2}$, $|\mquant{1}| \gtrsim 
1$ and $|\mquant{2}| \gtrsim \frac{1}{f} \lor \frac{1}{\sqrt{k} f^2}$.

\bigskip

\begin{prevproof}{Proposition}{thm:main-full}
Without loss of generality we assume that the topology of the true species tree restricted to this triple is $12|3$.
	
	\paragraph{Main steps} Let $S_\xcal$ be the species 
	tree $S$ restricted to $\xcal = \{1,2,3\}$ and let
	$r'$ denote its root, i.e., the most recent common ancestor of $\mathcal{X}$.  
	We let $\alpha = \max\left\{ m^{-1} \log m, k^{-0.5}\sqrt{\log k} \right\}$. 
	We partition the loci $[m] = \mred{1} \sqcup \mred{2}\sqcup \mquant{1} \sqcup \mquant{2}$, such that the size of each partition satisfies the conditions specified in Proposition~\ref{thm:main-full}.  
	The reconstruction algorithm
	on $\xcal$ has two steps: 1) a reduction to the ultrametric
	case by the addition of noise and 2) the ultrametric quantile test.
	We divide the proof accordingly:
	\begin{enumerate}
		\item[1)] {\it Ultrametric reduction:} In this step, we invoke Algorithm~\ref{alg:questAdditiveReduction} with sequence data $\{\xi_{x}^{ij}: x\in \mathcal{X}, i \in [m], j\in [k]\}$. The algorithm outputs new sequences $\{\xi_{x,N}^{ij}:x\in \mathcal{X}, i \in \mquant{1} \sqcup \mquant{2}, j\in [k]\}$, and Proposition~\ref{thm:reduction-full} (proved in Section~\ref{sec:app-redux}) guarantees that these output sequences appear as though they were drawn from an almost-ultrametric species phylogeny $S_{\mathcal{X}}'$ that has the same topology as $\mathcal{S}_{\mathcal{X}}$. In particular, using Proposition~\ref{thm:reduction-full}, we know that there is a constant $\const{1} >0$ such that  if 
	\begin{align*}
&|\mred1|, |\mred2| 
\geq \const{1} \log(4 \eps^{-1}),\\
&k 
\geq \const{1} \log |\mred2|
+ \const{1} \log(4 \eps^{-1}),\\
&k |\mred2|
\geq \const{1} \phi^{-2} \log(4 \eps^{-1}).
\end{align*}
		then, $\{\xi^{ij}_{x,N} : x\in \mathcal{X}, i\in \mquant{1}\sqcup \mquant{2}, j\in [k]\}$ has the same distribution as a multispecies coalescent process on  $S_{\mathcal{X}}'$ with branch lengths $(\hat{\dot{\mu}}_{xy})$, where $S_{\mathcal{X}}'$ and $S$ have the same rooted topology, and $(\hat{\dot{\mu}}_{xy})$ and $({\dot{\mu}}_{xy})$ are $\mathcal{O}(f /\sqrt{\log k})$-close with probability at least $ 1- \ve$. Notice that we have set the value of $\phi$ in Proposition~\ref{thm:reduction-full} to $\mathcal{O}(f /\sqrt{\log k})$, which will turn out to be what we need in Step 2 below.
		
		\item[2)] {\it Quantile test:} Now, we invoke Algorithm~\ref{alg:questAdditiveQuantile} with the sequence data $\{\xi_{x,N}^{ij}: i \in \mquant{1} \sqcup \mquant{2}, j\in [k]\}$ output by Step~1. 
		By Propositions~\ref{prop:quantileBehavior} and~\ref{prop:sample-quantile-test} (proved in Section~\ref{sec:app-quantile}), it follows that
		if
		\begin{align*}
		\left| \mquant{1} \right| &\geq \const{1} \alpha^{-1} \log \ve^{-1}\\
		\left| \mquant{2} \right| &\geq \const{1} f^{-2}\left( \alpha + f \right)\log \ve^{-1},
		\end{align*}
		then, with probability at least $ 1- \ve$, Algorithm~\ref{alg:questAdditiveQuantile} returns the right topology . 
	\end{enumerate}
	This concludes the proof. 
\end{prevproof}

\section{Ultrametric reduction: proofs}
\label{sec:app-redux}

Theorem~\ref{thm:reduction} follows from Proposition~\ref{thm:reduction-full} below.
In Proposition~\ref{thm:reduction-full}, we show that the approximate stochastic Farris transform defined in Section~\ref{sec:reduction-step-high-level} 
outputs sequence data that looks statistically as though 
it was generated from an ultrametric species phylogeny. 

Given estimates $\widehat{\Delta}_{xy}$ for all $x,y \in \mathcal{X}$, and supposing that $\min\{\widehat{\Delta}_{12}, \widehat{\Delta}_{13}\}\geq 0$ (the other cases follow similarly), we let
$$
\widehat{\dot{\mu}}_{xy}
= \mu_{xy} + \widehat{\Delta}_{1x} + \widehat{\Delta}_{1y},
\qquad x,y\in \mathcal{X}.
$$
Compare this to the definition of $\dot{\mu}_{xy}$ 
in~\eqref{eq:farris}.
Recall Definition~\ref{def:species-tree+metric}, and let $S' = (V_s, E_s, r, \vec{\tau}, \vec{\hat{\dot{\mu}}})$ be a species phylogeny with the same topology and branch lengths as $S$ restricted to $\mathcal{X}$, and mutation rates $\{\widehat{\dot{\mu}}\}_{e\in E_s}$ that are chosen such that: (a) $\widehat{\dot{\mu}}_e = \mu_e$ if $e\in E_s$ is an internal branch, and (b) for all $e\in E_s$ that are incident on the leaves of $S$, let $\widehat{\dot{\mu}}_e$ be chosen (uniquely)  such that mutation rate weighted distance on $S'$ between any pair of leaves $x, y \in \mathcal{X}$ is given by $\widehat{\dot{\mu}}_{xy}$. The sets $\mred1, \mred2$ referred to below are defined in Algorithm~\ref{alg:questAdditiveReduction}.
\begin{proposition}[Ultrametric reduction: general version]
	\label{thm:reduction-full}
	Suppose that we have sequence data $\left\{ \xi^{ij} \right\}_{i\in [m], j\in [k]}$ generated according to the MSC-JC$(m,k)$ process on a three-species phylogeny $S = (V_s, E_s; r, \vec{\tau},\vec{\mu})$. The mutation rates, leaf-edge lengths and internal-edge lengths are respectively in $(\mu_L, \mu_U)$, $(f', g')$ and $(f, g)$. 	
	Then, the output
	of Algorithm~\ref{alg:questAdditiveReduction}
	is distributed according to the MSC-JC process on the species tree
	$S'$ defined above. 
	Furthermore there is a constant $\const{2} > 0$ such that, 
	for any $\eps, \phi \in (0,1)$, 
	with probability at least $1-\eps$, 
	$\widehat{\dot{\mu}}_{xy}$
	is $\phi$-close
	to the ultrametric $(\dot{\mu}_{xy})$,
	provided
	\begin{align*}
	&|\mred1|, |\mred2| 
	\geq \const{2} \log(4 \eps^{-1}),\\
	&k 
	\geq \const{2} \log |\mred2|
	+ \const{2} \log(4 \eps^{-1}),\\
	&k |\mred2|
	\geq \const{2} \phi^{-2} \log(4 \eps^{-1}).
	\end{align*}
\end{proposition}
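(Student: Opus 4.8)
The plan is to separate the two assertions of the proposition: the \emph{exact} distributional claim that the output is MSC-JC on $S'$ (which holds deterministically once the estimates $\{\widehat{\Delta}_{xy}\}$ are fixed), and the high-probability claim that $(\widehat{\dot{\mu}}_{xy})$ is $\phi$-close to the ultrametric $(\dot{\mu}_{xy})$.

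For the distributional claim I would condition on $\{\widehat{\Delta}_{xy}\}$, which is measurable with respect to the genes in $\mred{}$ and hence independent of the data on $\mquant{}$; conditionally, the $\mquant{}$-data is a fresh draw of MSC-JC on $S$. The key observation is that the stochastic Farris transform of Definition~\ref{def:sft} adds to each non-reference leaf $x$, independently across sites and genes, a mod-$4$ increment that is nonzero with probability $p(\widehat{\Delta}_{1x})$ and otherwise uniform over the remaining states; this is exactly a Jukes--Cantor channel of mutation-weighted length $\widehat{\Delta}_{1x}$. Since a leaf lineage always traverses the entire species-tree leaf edge before it can coalesce, concatenating this channel to the root-to-leaf path is, by the additivity of Jukes--Cantor distances under composition (the JC semigroup property), identical to lengthening that leaf edge by $\widehat{\Delta}_{1x}$ in every gene. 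As the coalescent structure depends only on $\vec{\tau}$ and the internal mutation rates, both unchanged in $S'$, this lengthening precisely realizes the MSC-JC process on $S'$ with leaf distances $\widehat{\dot{\mu}}_{xy} = \mu_{xy} + \widehat{\Delta}_{1x} + \widehat{\Delta}_{1y}$, which also fixes the interpretation of the $\widehat{\dot{\mu}}_e$.

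For $\phi$-closeness, since $\widehat{\dot{\mu}}_{xy} - \dot{\mu}_{xy} = (\widehat{\Delta}_{1x}-\Delta_{1x}) + (\widehat{\Delta}_{1y}-\Delta_{1y})$, it suffices to bound $|\widehat{\Delta}_{xy} - \Delta_{xy}| \le \phi/2$ for all pairs, which I would do in two steps mirroring the identifiability proof. First (the analogue of Proposition~\ref{lem:ident-fixing-top}, i.e.\ Proposition~\ref{prop:reduction-fix-top}), using the empirical quantiles computed on $\mred1$ together with the quantile-separation estimate~\eqref{lem:mscQuantile}, show that the set $I$ selected in Algorithm~\ref{alg:questAdditiveReduction} has size $\gtrsim |\mred2|$ and that, apart from an $O(1/\sqrt{k})$ fraction of ``boundary'' genes, every gene in $I$ has rooted topology $xy|z$; this needs $|\mred1|,|\mred2| \gtrsim \log\eps^{-1}$ for quantile and cardinality concentration and $k \gtrsim \log(|\mred2|\eps^{-1})$ for a union bound over genes. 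Second (the analogue of Proposition~\ref{lem:ident-height-diff}, i.e.\ Proposition~\ref{prop:reduction-height-diff}), I would use that on a gene of topology $xy|z$ one has the \emph{exact} identity $\delta_{xz}^i - \delta_{yz}^i = \Delta_{xy}$ and the algebraic fact that $1-\tfrac43 p(\delta) = e^{-4\delta/3}$ is \emph{multiplicative} in $\delta$ to show that averaging $p$-distances over $I$ before applying the correction $g(p) = -\tfrac34\log(1-\tfrac43 p)$ is \emph{unbiased} for $\Delta_{xy}$:
\begin{equation*}
g\!\left(\frac{1}{|I|}\sum_{i\in I} p(\delta_{xz}^i)\right) - g\!\left(\frac{1}{|I|}\sum_{i\in I} p(\delta_{yz}^i)\right) = \Delta_{xy}.
\end{equation*}
This is precisely why the average is taken before the logarithmic correction. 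Because the $\uparrow$-half sites used for estimation are conditionally independent of $I$ (which is determined by the $\downarrow$-half), $\widehat{p}^I_{xz}$ concentrates around $\tfrac{1}{|I|}\sum_{i\in I}p(\delta^i_{xz})$ with fluctuations $O(1/\sqrt{k|I|})$; as the selection into $I$ confines the relevant gene distances to a bounded quantile band, $g$ is Lipschitz there, and propagating the fluctuations gives $|\widehat{\Delta}_{xy} - \Delta_{xy}| = O(1/\sqrt{k|\mred2|})$ with probability $1-\eps$. Requiring this to be at most $\phi/2$ yields $k|\mred2| \gtrsim \phi^{-2}\log\eps^{-1}$, and a union bound over the three pairs and two halves is absorbed into $\const{2}$.

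The main obstacle is this second step: attaining the stringent accuracy $\phi = \Theta(f/\log f^{-1})$ even though the topology of the genes in $I$ is only \emph{approximately} fixed. The unbiasedness identity disposes of the correctly classified genes exactly, so the crux is to show that neither the $O(1/\sqrt{k})$ fraction of misclassified boundary genes nor the residual sequence noise perturbs the estimate by more than $\phi$; this forces one to combine carefully the conditioning on the $\downarrow$-sites, the quantile-separation bound~\eqref{lem:mscQuantile}, and the Lipschitz control of $g$, and is the analytic heart of Propositions~\ref{prop:reduction-fix-top} and~\ref{prop:reduction-height-diff}.
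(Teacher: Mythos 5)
Your overall architecture coincides with the paper's. The distributional claim is handled exactly as in the paper: the stochastic Farris transform is a Jukes--Cantor channel appended to each leaf edge, and the Markov (semigroup) property converts it into a lengthening of that edge, realizing the MSC-JC process on $S'$. Your second step --- averaging the $p$-distances over $I$ \emph{before} the logarithmic correction, using the exact identity $\delta^i_{xz}-\delta^i_{yz}=\Delta_{xy}$ on genes of topology $xy|z$ together with the multiplicativity of $e^{-4\delta/3}$, then Lipschitz control of $g$ and Hoeffding applied to the $\uparrow$-sites conditionally on the selection --- is precisely the paper's Proposition~\ref{prop:reduction-height-diff}, with the same accuracy $O(1/\sqrt{k|\mred2|})$ and the same resulting condition $k|\mred2|\gtrsim \phi^{-2}\log\eps^{-1}$.

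The gap is in your first step, and it is not cosmetic. You claim only that all but an $O(1/\sqrt{k})$ fraction of the genes in $I$ have topology $xy|z$, and you defer to ``the crux'' the task of showing that this misclassified fraction perturbs the estimate by less than $\phi$. That task cannot be accomplished: a wrongly-classified gene violates $\delta^i_{xz}-\delta^i_{yz}=\Delta_{xy}$ by an amount of order one, so a fraction $\rho$ of such genes biases $\widehat{p}^I_{xz}$, and hence $\widehat{\Delta}_{xy}$ through the Lipschitz correction, by $\Theta(\rho)$. With $\rho=\Theta(1/\sqrt{k})$, $k\gtrsim f^{-C}$ for an arbitrarily small constant $C>0$, and $\phi=\Theta(f/\log f^{-1})$, this bias is of order $f^{C/2}\gg\phi$, and the $\phi$-closeness claim collapses. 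What you are missing is that the selection thresholds and the topology-safety thresholds are separated by a \emph{constant}, not by $O(1/\sqrt{k})$: membership in $I$ requires, e.g., $\widehat{p}^{i\downarrow}_{xy}\leq \widehat{p}^{(1/3)}_{xy}$, whereas correctness of the topology only needs $p^i_{xy}\leq p^{(1/2)}_{xy}$ (Proposition~\ref{lem:ident-fixing-top}), and by the quantile-separation bound~\eqref{lem:mscQuantile} these quantiles differ by a constant. Genes whose MSC values lie within $O(1/\sqrt{k})$ of the selection boundary may indeed be included or excluded ``incorrectly,'' but they still carry the correct topology and the unbiasedness identity holds for them verbatim, so they are harmless. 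A gene can enter $I$ with the \emph{wrong} topology only if its half-sample $p$-distance deviates from its MSC value by a constant $\eps_0$, an event of probability $\exp(-\Omega(k))$. This is exactly how the paper's Proposition~\ref{prop:reduction-fix-top} proceeds: a union bound of these exponentially small events over all of $\mred2$ --- the sole source of the hypothesis $k\gtrsim \log|\mred2|+\log\eps^{-1}$, which you invoke but then do not exploit --- guarantees that with probability $1-\eps$ \emph{every} gene in $I$ has topology $xy|z$ and sits in a safe quantile band, so no misclassification term appears in the error analysis at all.
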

\noindent Under condition~\eqref{eq:reduction-condition} of Theorem~\ref{thm:reduction}, to satisfy the inequalities above,
we can choose $|\mred1|  \gtrsim 
1$ and $|\mred2| \gtrsim 
1 \lor \frac{1}{k f^2}$.

\subsection{Proof of Proposition~\ref{thm:reduction-full}}
\label{sec:proof-reduction-theorem}

\begin{prevproof}{Proposition}{thm:reduction-full}
	As explained in Section~\ref{sec:reduction-step-high-level}, 
	there are three main steps to this proof,
	which we summarize in a series of propositions.	
	
	For a gene $i$ and leaves $x,y \in \mathcal{X}$,
	let
	$$
	\widehat{p}_{xy}^i = \frac{1}{k}\sum_{j\in [k]}\ind\left\{ \xi^{ij}_x \neq \xi^{ij}_y \right\}.
	$$
	Furthermore, we need to split the above average into two halves to avoid unwanted correlations as we explain below. We denote these as\footnote{For simplicity, we assume that $k$ is even. This is not a critical requirement and can be easily relaxed.} 
	\begin{align*}
	\widehat{p}^{i\downarrow}_{xy} = \frac{2}{k}\sum_{j=1}^{k/2}\ind\{\xi^{ij}_x \neq \xi^{ij}_y\} \qquad \mbox{ and} \qquad \widehat{p}^{i\downarrow}_{xy} = \frac{2}{k}\sum_{j=k/2+1}^{k}\ind\{\xi^{ij}_x \neq \xi^{ij}_y\}.
	\end{align*} 
	And, for $\beta \in [0,1]$, let
	$\widehat{p}^{(\beta)}_{xy}$ 
	be the corresponding empirical quantiles computed based on the set $\{\widehat{q}_{xy}^i: i\in \mred{1}\}$.  
	Fix a permutation $(x,y,z)$  
	of $(1,2,3)$. 
	Consider the following subset of genes in $\mred{2}$: 
	$$I =
	\left\{i\in \mred{2}\,:\,\widehat{p}^{i\downarrow}_{xy} \leq \widehat{p}_{xy}^{(1/3)},
	\widehat{p}_{xz}^{(2/3)}
	\leq \widehat{p}^{i\downarrow}_{xz} \leq \widehat{p}_{xz}^{(5/6)},
	\widehat{p}_{yz}^{(2/3)}
	\leq 
	\widehat{p}^{i\downarrow}_{yz} 
	\leq \widehat{p}_{yz}^{(5/6)}\right\}.
	$$
	
	We first show that the rooted topologies in $I$
	are highly likely to be $xy|z$. We also 
	prove some technical claims that will be useful in 
	the proof of Proposition~\ref{prop:reduction-height-diff}
	below.
	\begin{proposition}[Fixing gene tree topologies]
		\label{prop:reduction-fix-top}
		There are constants $\const{9}, \const{10}, \const{10}', \eps_0 > 0$
		such that, with probability at least
		$$
		1 
		- 10\exp(-2\const{9}^2 |\mred1|) 
		- 6  |\mred2| \exp\left(
		- k \eps_0^2
		\right)
		- 2\exp\left(-2 \const{10}^2 |\mred2|\right),
		$$
		the following hold:
		\begin{enumerate}
			\item[(a)] the rooted topology of all gene trees
			in $I$ is $xy|z$,
			
			\item[(b)] for all $i\in I$, $p^i_{xy} \leq p^{(7/24)}_{xy}$,
			$p^{(17/24)}_{xz} \leq p^i_{xz} \leq p^{(19/24)}_{xz}$,
			$p^{(17/24)}_{yz} \leq p^i_{yz} \leq p^{(19/24)}_{yz}$,
			
			\item[(c)] the size of $I$ is greater than $\const{10}' |\mred2|$.
		\end{enumerate}
		
	\end{proposition}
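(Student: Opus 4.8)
The plan is to split the randomness in the definition of $I$ into three independent sources and to control each by a separate concentration estimate, so that the three terms in the stated failure probability correspond to (i) the deviation of the empirical quantiles computed on $\mred1$, (ii) the per-gene sequence noise on $\mred2$, and (iii) the fluctuation of $|I|$ about its conditional mean. The one tool used throughout is the bi-Lipschitz quantile estimate~\eqref{lem:mscQuantile}, which converts a shift in quantile \emph{level} into a shift in quantile \emph{value} at a constant rate $c',c''$; crucially these constants are $f$-independent because the levels $1/3,2/3,5/6$ are bounded away from $0$ and $1$ and the MSC pairwise-distance law has $\Theta(1)$ spread uniformly in $f$.

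First I would control the empirical quantiles. The five quantiles entering the definition of $I$ are $\widehat p^{(1/3)}_{xy}$, $\widehat p^{(2/3)}_{xz}$, $\widehat p^{(5/6)}_{xz}$, $\widehat p^{(2/3)}_{yz}$, $\widehat p^{(5/6)}_{yz}$. For each, the event $\{\widehat p^{(\beta)}_{ab}>p^{(\beta+\const{9})}_{ab}\}$ (and its mirror $\{\widehat p^{(\beta)}_{ab}<p^{(\beta-\const{9})}_{ab}\}$) is an event about a sum of $|\mred1|$ independent indicators, so Hoeffding bounds each one-sided deviation by $\exp(-2\const{9}^2|\mred1|)$; the $5\times 2=10$ such events give the term $10\exp(-2\const{9}^2|\mred1|)$, and on the good event $A_1$ every empirical quantile is sandwiched between the population quantiles at levels $\beta\pm\const{9}$. (Here one must also fold into $\const{9}$ the smearing of the $\mred1$ empirical law caused by the fact that the $\widehat p^i_{ab}$ are themselves $k$-site estimates of $p^i_{ab}$, using the regularity of the law of $p_{ab}$.) Next I would control the per-gene noise: conditionally on $G^{(i)}$ the half-sequence distance $\widehat p^{i\downarrow}_{ab}$ is an average of $k/2$ independent indicators with mean $p^i_{ab}=p(\delta^i_{ab})$, so Hoeffding gives $\mathbb{P}[|\widehat p^{i\downarrow}_{ab}-p^i_{ab}|>\eps_0]\le 2\exp(-k\eps_0^2)$ uniformly in $G^{(i)}$, and a union bound over the three pairs and $|\mred2|$ genes produces $6|\mred2|\exp(-k\eps_0^2)$; call this good event $A_2$. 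On $A_1\cap A_2$ I can prove (b): for $i\in I$ the defining inequality $\widehat p^{i\downarrow}_{xy}\le\widehat p^{(1/3)}_{xy}$ yields $p^i_{xy}\le\widehat p^{i\downarrow}_{xy}+\eps_0\le p^{(1/3+\const{9})}_{xy}+\eps_0$, and~\eqref{lem:mscQuantile} bounds the right-hand side by a population quantile whose level exceeds $1/3$ by at most $(c''\const{9}+\eps_0)/c'$; choosing $\const{9},\eps_0$ so this margin is below $1/24$ places $p^i_{xy}$ below the corresponding relaxed quantile, and the two-sided bounds for $xz,yz$ follow identically by relaxing the levels $2/3,5/6$ outward by the same margin. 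Since the resulting levels still straddle $1/2$, the true distances satisfy $\delta^i_{xy}\le\delta^{(1/2)}_{xy}$, $\delta^i_{xz}>\delta^{(1/2)}_{xz}$, $\delta^i_{yz}>\delta^{(1/2)}_{yz}$, i.e.\ the event $\mathscr{E}_I$ of Proposition~\ref{lem:ident-fixing-top} holds for gene $i$, which gives (a).

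Finally I would bound $|I|$ from below for (c). Because the genes of $\mred2$ are independent of those of $\mred1$, I condition on $A_1$ and treat the empirical quantiles as fixed. For a single $i\in\mred2$, the membership region contains a fixed population region (shrink each defining level inward by $O(\const{9})$ and allow an $\eps_0$ sequence margin); by~\eqref{lem:mscQuantile} this region carries positive, $f$-independent MSC-probability, and subtracting the sequence-noise failure leaves $\mathbb{P}[i\in I\mid A_1]\ge c_0>0$. The indicators $\{\ind\{i\in I\}\}_{i\in\mred2}$ are conditionally independent given the $\mred1$-data, so Hoeffding applied to $\sum_{i\in\mred2}\ind\{i\in I\}$ gives $|I|\ge\const{10}'|\mred2|$ with failure $2\exp(-2\const{10}^2|\mred2|)$ for any $\const{10}'<c_0$. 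A union bound over $A_1$, $A_2$ and this event yields the claimed probability.

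The hard part will be (b): the simultaneous bookkeeping of two heterogeneous errors—an error in quantile \emph{level} (from the empirical quantiles on $\mred1$) and an error in quantile \emph{value} (from the per-gene sequence noise on $\mred2$)—and their conversion through~\eqref{lem:mscQuantile} into a single constant quantile-level margin. The delicate point is that this margin, together with $\const{9}$ and $\eps_0$, must be chosen as fixed constants that do not degrade as $f\to 0$, which is exactly where one exploits that the relevant levels are bounded away from $0,1$ and that the MSC quantile gaps are $\Theta(1)$ uniformly in $f$; the secondary subtlety is ensuring the $k$-site smearing of the $\mred1$ empirical law is absorbed into $\const{9}$ rather than requiring $k$ large relative to $f$.
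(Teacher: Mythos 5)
Your proposal follows essentially the same route as the paper's proof: the same three-part decomposition matching the three failure-probability terms (Hoeffding for the five empirical quantiles on $\mred{1}$, Hoeffding for the per-gene sequence noise on $\mred{2}$, and Hoeffding for $|I|$ using conditional independence given the $\mred{1}$-data), the same use of~\eqref{lem:mscQuantile} to trade value errors for quantile-level margins, and the same reduction of (a) to Proposition~\ref{lem:ident-fixing-top} via monotonicity of $p(\cdot)$. One remark: like the paper's own derivation, your chain establishes the \emph{relaxed} outer-quantile form of (b) (levels near $9/24$, $15/24$, $21/24$) rather than the literal $7/24$, $17/24$, $19/24$ in the statement---those inner levels are what the paper uses for the positive-probability region in the size-of-$I$ step---so this mismatch is internal to the paper's statement and not a gap in your argument, and the relaxed form suffices for the downstream use in Lemma~\ref{lem:hd-max-hatpI}.
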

	The proof is given in Section~\ref{sec:reduction-fix-top}. Notice that Proposition~\ref{prop:reduction-fix-top} guarantees that all the genes in $I$ satisfy conditions (a) and (b). We can weaken our requirements on how big $k$ needs to be by relaxing this and performing a more careful analysis.

	Using
	$$
	\widehat{p}^I_{xz} = \frac{1}{\left| I \right|}\sum_{i\in I}\widehat{p}_{xz}^{i\uparrow}
	\qquad \text{and} \qquad
	\widehat{p}^I_{yz} = \frac{1}{\left| I \right|}\sum_{i\in I}\widehat{p}_{yz}^{i\uparrow},
	$$ 
	let
	$$
	\widehat{\Delta}_{xy} = 
	\left\{
	-\frac{3}{4}\log\left(1 - \frac{4}{3}\widehat{p}_{yz}^I \right)
	\right\}
	- 
	\left\{
	-\frac{3}{4}\log\left(1 - \frac{4}{3}\widehat{p}_{xz}^I \right)
	\right\}.
	$$
	Recall that
	$$
	\Delta_{xy} = \mu_{rx} - \mu_{ry}.
	$$
	We next show that $\widehat{\Delta}_{xy}$
	is a good approximation to $\Delta_{xy}$.
	Let $\mathscr{I}$ be the event that the conclusion
	of Proposition~\ref{prop:reduction-fix-top} holds. 
	To simplify the notation,
	throughout this proof, we use $\PP$ and $\EE$
	to denote the probability and expectation operators
	{\em conditioned on $\mathscr{I}$}.
	\begin{proposition}[Estimating distance differences]
		\label{prop:reduction-height-diff}
		There is a constant $\const{11} \in (0,1)$ such that
		with $\PP$-probability at least
		$$
		1 - 4\exp\left( - \const{11} k|\mred2| \phi^2\right),
		$$
		the following holds:
		$$
		\left|
		\widehat{\Delta}_{xy}
		- \Delta_{xy}
		\right| 
		\leq
		\phi/2.
		$$
	\end{proposition}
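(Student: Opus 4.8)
The plan is to condition away the first-stage randomness and reduce the claim to an elementary concentration estimate. Concretely, I would condition on the gene trees $\{G^{(i)}\}_{i\in[m]}$ together with the \emph{first} halves of all sequences (the sites $j\le k/2$, which determine the empirical quantiles $\widehat{p}^{(\cdot)}$ and hence the set $I$), and restrict to the event $\mathscr{I}$ of Proposition~\ref{prop:reduction-fix-top}. On this event $I\subseteq\mred{2}$ is a fixed set with $|I|\ge\const{10}'|\mred2|$, and every $G^{(i)}$, $i\in I$, has rooted topology $xy|z$. The only remaining randomness is the second-half sequence noise entering $\widehat{p}^{i\uparrow}_{xz}$ and $\widehat{p}^{i\uparrow}_{yz}$; given the gene trees, these are averages of independent $\mathrm{Bernoulli}(p(\delta_{xz}^i))$ and $\mathrm{Bernoulli}(p(\delta_{yz}^i))$ site indicators. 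The key point of the $\uparrow/\downarrow$ split is that $I$ is measurable with respect to the first half, so the second-half indicators remain conditionally i.i.d.\ with the correct means \emph{even after} conditioning on $i\in I$; this is precisely the decorrelation device that keeps the estimator unbiased.

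The structural heart is an exact identity for the noise-free averages. Set $P_{xz}=\frac{1}{|I|}\sum_{i\in I}p(\delta_{xz}^i)$ and $P_{yz}=\frac{1}{|I|}\sum_{i\in I}p(\delta_{yz}^i)$, which are the conditional expectations of $\widehat{p}^I_{xz}$ and $\widehat{p}^I_{yz}$ given the first stage. Since $p(\delta)=\frac34(1-e^{-4\delta/3})$, we have $1-\frac43 P_{xz}=\frac{1}{|I|}\sum_{i\in I}e^{-4\delta_{xz}^i/3}$. Now I invoke the core observation of the identifiability proof, equations~\eqref{eq:ident-proof-2}--\eqref{eq:ident-proof-4}: on topology $xy|z$ the lineages of $x,z$ and of $y,z$ coalesce in the common ancestral population at the \emph{same} time, so $\delta_{xz}^i-\delta_{yz}^i=\mu_{rx}-\mu_{ry}=\Delta_{xy}$ holds \emph{pathwise} for every $i\in I$, not merely in expectation. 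Factoring $e^{-4\Delta_{xy}/3}$ out of each summand yields the exact relation
\[
\frac{1-\tfrac43 P_{xz}}{1-\tfrac43 P_{yz}} = e^{-4\Delta_{xy}/3},
\]
so the Jukes--Cantor correction $\kappa(p)=-\frac34\log(1-\frac43 p)$ applied to the noise-free averages returns $\Delta_{xy}$ exactly (up to the sign convention fixed in Algorithm~\ref{alg:questAdditiveReduction}). This explains why one must average the $p$-distances \emph{before} correcting: the random coalescent contributions $\Gamma^i$ cancel linearly inside the exponential average, whereas correcting gene-by-gene would forgo this cancellation.

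It then remains to control $|\widehat{p}^I_{xz}-P_{xz}|$ and $|\widehat{p}^I_{yz}-P_{yz}|$ and propagate through $\kappa$. Conditionally on the first stage, each of $\widehat{p}^I_{xz},\widehat{p}^I_{yz}$ is a weighted average of $|I|\cdot(k/2)$ independent indicators in $[0,1]$, each of weight $\tfrac{2}{k|I|}$, so the weights sum to one with sum of squares $\tfrac{2}{k|I|}$. Hoeffding's inequality gives $\PP[\,|\widehat{p}^I_{xz}-P_{xz}|>t\,]\le 2\exp(-t^2 k|I|)$, and likewise for $yz$; a union bound over the two pairs produces the prefactor $4$. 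The correction $\kappa$ is Lipschitz on the relevant range: part~(b) of Proposition~\ref{prop:reduction-fix-top} gives $p_{xz}^i\le p_{xz}^{(19/24)}$ and $p_{yz}^i\le p_{yz}^{(19/24)}$ for all $i\in I$, both fixed quantiles strictly below $3/4$, so $P_{xz},P_{yz}$ and their $O(t)$-perturbations stay bounded away from the singularity of $\kappa$, whence $\kappa'=(1-\frac43 p)^{-1}$ is bounded by a constant $L$ depending only on $g,g',n$. Thus $|\widehat{\Delta}_{xy}-\Delta_{xy}|\le L\bigl(|\widehat{p}^I_{xz}-P_{xz}|+|\widehat{p}^I_{yz}-P_{yz}|\bigr)$. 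Choosing $t=\phi/(4L)$ forces the error below $\phi/2$, and substituting into the Hoeffding bound with $|I|\ge\const{10}'|\mred2|$ gives probability at least $1-4\exp(-\const{11}k|\mred2|\phi^2)$ with $\const{11}=\const{10}'/(16L^2)$; since the estimate is uniform over all first-stage realizations consistent with $\mathscr{I}$, it holds under $\PP$.

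I expect the main obstacle to be the conditioning bookkeeping rather than any single inequality: one must check carefully that, after restricting to the first-half-measurable, topology-$xy|z$ set $I$, the second-half indicators are genuinely conditionally i.i.d.\ with means $p(\delta_{ab}^i)$, so that $P_{xz},P_{yz}$ truly are the conditional means and the exact identity transfers verbatim from the population quantities to the estimator. Once the $\uparrow/\downarrow$ decorrelation is laid out cleanly, the pathwise cancellation strips all coalescent-induced variability out of the bias, and only the routine Hoeffding-plus-Lipschitz argument remains.
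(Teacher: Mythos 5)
Your proposal is correct and follows essentially the same route as the paper's proof: condition on the gene trees and the first-half-measurable set $I$ (the paper phrases this via the event $\mathscr{G}_I$), use the pathwise identity $\Gamma^i_{xz}=\Gamma^i_{yz}$ for topology-$xy|z$ genes so that the Jukes--Cantor correction of the conditional means of $\widehat{p}^I_{xz},\widehat{p}^I_{yz}$ recovers $\Delta_{xy}$ exactly, and then combine Hoeffding's inequality over the $k|I|/2$ independent second-half site indicators with the Lipschitz property of $\ell(p)=-\tfrac{3}{4}\log\left(1-\tfrac{4}{3}p\right)$ and the bound $|I|\geq \const{10}'|\mred{2}|$. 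The only deviations are cosmetic: the paper bounds the conditional means by $\tfrac12-\const{12}'$ (rather than merely away from the singularity at $3/4$) via part (b) of Proposition~\ref{prop:reduction-fix-top}, and states the exact-cancellation step as $\ell(\EE[\widehat{p}^I_{xz}\,|\,\mathscr{G}_I])-\ell(\EE[\widehat{p}^I_{yz}\,|\,\mathscr{G}_I])=\Delta_{xy}$ rather than as your equivalent ratio identity.
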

	The proof is in Section~\ref{sec:reduction-height-diff}.
	
	We repeat the height difference estimation above for all
	pairs in $\mathcal{X}$.  Therefore, by a union bound, we get the above guarantee for all pairs with probability at least $1 - 12\exp\left( - \const{11} k|\mred2| \phi^2\right)$. 
	Without loss of generality, assume that
	$$
	\mu_{r1} \geq \max\{\mu_{r2}, \mu_{r3}\}, 
	$$
	and recall the Farris transform
	$$
	\dot{\mu}_{xy}
	= \mu_{xy} + 2\mu_{r1}- \mu_{rx} - \mu_{ry} = \mu_{xy} + \Delta_{1x} + \Delta_{1y},
	\qquad x, y \in \mathcal{X},
	$$
	which defines an ultrametric, and consider the
	approximation
	$$
	\widehat{\dot{\mu}}_{xy}
	= \mu_{xy} + \widehat{\Delta}_{1x} + \widehat{\Delta}_{1y},
	\qquad x, y\in \mathcal{X}.
	$$
	Assuming that the conclusion of Proposition~\ref{prop:reduction-height-diff}
	holds for
	all $x,y \in \mathcal{X}$, we have shown that
	$(\widehat{\dot{\mu}}_{xy})$ is $\phi$-close to 
	the ultrametric $(\dot{\mu}_{xy})$.
	As we explained in Section~\ref{sec:reduction-step-high-level},
	we produce a new sequence dataset using an
	approximate stochastic Farris transform
	$$
	\{\xi_{x,N}^i\} 
	= \mathcal{F}(\{\xi_x^{i}\};(\widehat{\mu}_{rx})).
	$$
	By the Markov property, the transformation
	$\mathcal{F}$ has the effect of stretching
	the leaf edges of the gene trees by the 
	appropriate amount. 
	
	Hence, again by a union bound, we get the claim
	of Theorem~\ref{thm:reduction} except with
	probability
	\begin{equation}
	\label{eq:reduction-prob-error}
	10\exp(-2\const{9}^2 |\mred1|) 
	+ 6  |\mred2| \exp\left(
	- 2 k \eps_0^2
	\right)
	+ 2\exp\left(-2 \const{10}^2 |\mred2|\right)
	+ 12\exp\left( - \const{11} k|\mred2| \phi^2\right).
	\end{equation}
	We can get the data requirement result by asking for the conditions under which the above quantity is less than $\ve$. 
\end{prevproof}

\subsubsection{Proof of Proposition~\ref{prop:reduction-fix-top}}
\label{sec:reduction-fix-top}

\begin{prevproof}{Proposition}{prop:reduction-fix-top}
	Let $(x,y,z)$ be an arbitrary permutation
	of the leaves $(1,2,3)$.
	The idea of the proof is to rely on
	Proposition~\ref{lem:ident-fixing-top},
	which we rephrase in terms of $p$-distances. For a gene $G_i$
	Let
	$$
	p^i_{xy} 
	= \frac{3}{4}\left( 1 - e^{-4\delta^i_{xy}/3} \right). 
	$$
	And, for $\beta \in [0,1]$, the corresponding $\beta$-th
	quantile is given by 
	$$
	p^{(\beta)}_{xy} 
	= \frac{3}{4}\left( 1 - e^{-4 \delta^{(\beta)}_{xy}/3} \right);
	$$
	similarly for the other pairs.
	Then, by Proposition~\ref{lem:ident-fixing-top},
	the event
	$$
	\mathscr{E}^i_{I}
	= \left\{p^i_{xy} \leq p_{xy}^{(1/2)},
	p_{xz}^{(1/2)} < p^i_{xz} ,
	p_{yz}^{(1/2)} < p^i_{yz} \right\},
	$$
	implies
	that the rooted topology of $G_i$
	is $xy|z$.
	Our goal is to show that
	\begin{equation}
	\label{eq:qi}
	\mathscr{Q}_i
	= \left\{\widehat{p}^{i\downarrow}_{xy} \leq \widehat{p}_{xy}^{(1/3)},
	\widehat{p}_{xz}^{(2/3)}
	\leq \widehat{p}^{i\downarrow}_{xz} \leq \widehat{p}_{xz}^{(5/6)},
	\widehat{p}_{yz}^{(2/3)}
	\leq 
	\widehat{p}^{i\downarrow}_{yz} 
	\leq \widehat{p}_{yz}^{(5/6)}\right\},
	\end{equation}
	implies $\mathscr{E}^i_{I}$ with high probability. We do this by controlling the deviations of $\widehat{p}_{uw}^{(\beta)}$ and $\widehat{p}^{i\downarrow}_{uw}$. We state the necessary
	claims as a series of lemmas.
	(The upper bounds on $\widehat{p}^{i\downarrow}_{xz}$
	and $\widehat{p}^{i\downarrow}_{yz}$ in~\eqref{eq:qi}
	are included for technical reasons that will
	be explained in the proof of Proposition~\ref{prop:reduction-height-diff}.
	This requirement may not be needed, but it 
	makes the analysis simpler.)
	
	Recall that we use only the genes in $\mred{}$ for the reduction step and this in turn is divided into disjoint subsets $\mred{1}$ and $\mred{2}$.
	The quantiles are estimated using $\mred{1}$,
	while $\mred{2}$ is used to compute $I$.
	We do {\it not} argue about the deviation of
	$\widehat{p}_{uw}^{(\beta)}$ from the {\it true}
	$\beta$-th quantile of the distribution
	of $\widehat{p}_{uw}^i$. Instead
	we show that
	$\widehat{p}_{uw}^{(\beta)}$ is
	close to the $\beta$-th quantile $p_{uw}^{(\beta)}$
	of the disagreement probability {\em under the MSC},
	that is, the quantile
	{\it without the sequence noise}.
	We argue this way because
	the events that we are ultimately interested in
	(whether a certain coalescence event
	has occured in a particular population)
	are expressed in terms of the MSC. Note that, in order to obtain
	a useful bound of this type, 
	we must assume that the sequence
	length is sufficiently long, that is, that the
	sequence noise is reasonably small. Hence
	this is one of the steps of our argument where
	we require a lower bound on $k$. 
	
	\begin{lemma}[Deviation of $\widehat{p}_{uw}^{(\beta)}$]
		\label{lem:reduction-deviation-quantile}
		Fix a pair $u,w \in \mathcal{X}$ and a constant $\beta \in (0,1)$.
		For all $\eps_0 > 0$ and $0 < \eps_1 < \min\{\beta,1-\beta\}$, there is a constant
		$c_0 > 0$ such that
		$$
		\P\left[p_{uw}^{(\beta-\eps_1)} - \eps_0 
		\leq \widehat{p}_{uw}^{(\beta)} 
		\leq 
		p_{uw}^{(\beta + \eps_1)} + \eps_0 \right] 
		\geq
		1 - 
		2 \exp\left(
		- 2 \const{9}^2 |\mred{1}|
		\right),
		$$
		provided that $k$ is greater than a constant
		depending on $\eps_0$ and $\eps_1$.
	\end{lemma}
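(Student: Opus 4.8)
The plan is to establish the two inequalities in the lemma separately as one-sided tail bounds and then combine them by a union bound. Throughout, write $n = |\mred{1}|$ and recall that $\widehat{p}_{uw}^{(\beta)}$ is the $\lfloor \beta n \rfloor$-th order statistic of $\{\widehat{p}_{uw}^i : i \in \mred{1}\}$. The central idea is to keep the two sources of randomness cleanly separated: the MSC randomness, which determines the conditional disagreement probability $p_{uw}^i = p(\delta_{uw}^i)$ and whose quantiles are the $p_{uw}^{(\beta)}$ we compare against, and the Jukes--Cantor sequence noise, which is the (conditionally on $G^{(i)}$) deviation of $\widehat{p}_{uw}^i$ from $p_{uw}^i$. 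Since, conditioned on the gene tree, $\widehat{p}_{uw}^i$ is an average of $k$ independent indicators with conditional mean $p_{uw}^i$, Hoeffding's inequality gives $\P[|\widehat{p}_{uw}^i - p_{uw}^i| > \eps_0 \mid G^{(i)}] \leq 2\exp(-2 k \eps_0^2)$; this is the only place $k$ enters, and it is why the stated threshold on $k$ depends on $\eps_0$ (and, through the margin below, on $\eps_1$).

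First I would treat the upper bound $\widehat{p}_{uw}^{(\beta)} \leq p_{uw}^{(\beta+\eps_1)} + \eps_0$. Setting $t = p_{uw}^{(\beta+\eps_1)} + \eps_0$, this order statistic lies below $t$ precisely when at least $\lfloor \beta n\rfloor$ of the samples satisfy $\widehat{p}_{uw}^i \leq t$. For a single gene, the inclusion $\{p_{uw}^i \leq p_{uw}^{(\beta+\eps_1)}\} \cap \{\widehat{p}_{uw}^i - p_{uw}^i \leq \eps_0\} \subseteq \{\widehat{p}_{uw}^i \leq t\}$, together with the definition of the quantile (so that $\P[p_{uw}^i \leq p_{uw}^{(\beta+\eps_1)}] \geq \beta + \eps_1$, using that the MSC pairwise-distance CDF is continuous and strictly increasing on the relevant range because its density is bounded between positive constants) and the conditional Hoeffding bound, yields $\P[\widehat{p}_{uw}^i \leq t] \geq (\beta+\eps_1)(1 - \exp(-2k\eps_0^2))$. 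Choosing $k$ large enough that this exceeds $\beta + \eps_1/2$, the count $N_{\leq t} = |\{i\in\mred1 : \widehat{p}_{uw}^i \leq t\}|$ is a sum of $n$ independent indicators each with mean at least $\beta + \eps_1/2$. A second application of Hoeffding, now over the $n$ independent genes in $\mred{1}$, gives $N_{\leq t} \geq (\beta + \eps_1/4) n \geq \lfloor \beta n\rfloor$ except with probability $\exp(-2(\eps_1/4)^2 n)$. The lower bound $\widehat{p}_{uw}^{(\beta)} \geq p_{uw}^{(\beta-\eps_1)} - \eps_0$ is symmetric: with $t' = p_{uw}^{(\beta-\eps_1)} - \eps_0$ one bounds $\P[\widehat{p}_{uw}^i < t'] \leq (\beta - \eps_1) + \exp(-2k\eps_0^2) \leq \beta - \eps_1/2$ for $k$ large, so the count $N_{<t'}$ stays below $\lfloor \beta n\rfloor$ with the same type of failure probability. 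Setting $\const{9} = \eps_1/4$ and taking a union bound over the two sides gives the claimed $1 - 2\exp(-2\const{9}^2 |\mred{1}|)$.

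The main obstacle I anticipate is the bookkeeping at the interface of the two randomness sources rather than any single hard estimate. Concretely, one must ensure that the comparison is always against the noiseless MSC quantile $p_{uw}^{(\beta)}$ and not against the quantile of the noisy $\widehat{p}_{uw}^i$, and that the slack $\eps_0$ introduced by the sequence noise is genuinely absorbed because the target quantiles $p_{uw}^{(\beta\pm\eps_1)}$ are separated from $p_{uw}^{(\beta)}$ by a positive gap: this separation is exactly what inequality~\eqref{lem:mscQuantile} provides, and it is what guarantees that the margin $\eps_1/2$ survives once $k$ is taken large enough. A minor point to handle cleanly is the strict-versus-nonstrict inequality in the quantile definition, which is harmless here precisely because the MSC distance CDF is continuous (its density is bounded away from $0$), so each $p_{uw}^{(\gamma)}$ satisfies $\P[p_{uw}^i \leq p_{uw}^{(\gamma)}] = \gamma$ with no atom.
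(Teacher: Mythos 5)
Your proposal is correct and follows essentially the same route as the paper's proof: both reduce the quantile-deviation event to a count of per-gene indicator events, lower-bound each indicator's probability by combining the MSC quantile definition with a conditional Hoeffding bound on the Jukes--Cantor noise (which is where the requirement that $k$ exceed a constant depending on $\eps_0,\eps_1$ enters), and then apply Hoeffding's inequality over the $|\mred{1}|$ independent genes. The only differences are cosmetic---you write out the lower-bound side explicitly and fix the margin as $\eps_1/4$, whereas the paper treats one side and leaves the constant implicit as $c_0=(\beta+\eps_1)[1-\exp(-k\eps_0^2)]-\beta$.
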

	\begin{proof}
		We prove one side of the first equation. The other inequalities follow 
		similarly. Define the random variable
		\begin{align*}
		M &= \left| \left\{ i \in \mred{1}: \widehat{p}^{i\downarrow}_{uw}\leq p^{(\beta + \eps_1)}_{uw} + \eps_0 \right\} \right|,
		\end{align*}
		and observe that 
		\begin{align*}
		\P\left[ 
		\widehat{p}^{(\beta)}_{uw} > p^{(\beta + \eps_1)}_{uw} + \eps_0 
		\right] 
		&\leq \P\left[ M <  \beta |\mred{1}| \right].
		\end{align*}
		To bound the probability on the r.h.s., we note that
		\begin{eqnarray*}
			\P\left[
			\widehat{p}^{i\downarrow}_{uw}
			\leq p^{(\beta + \eps_1)}_{uw} + \eps_0
			\right]
			&\geq& 
			\P\left[
			\widehat{p}^{i\downarrow}_{uw}
			\leq p^{(\beta+\eps_1)}_{uw} + \eps_0
			\,\middle|\, p^i_{uw} \leq p^{(\beta+\eps_1)}_{uw}\right]
			\P\left[
			p^i_{uw} \leq p^{(\beta+\eps_1)}_{uw}
			\right]\\
			&\geq&
			\left[
			1 - \exp\left(
			-k \eps_0^2
			\right)
			\right]
			\left(
			\beta
			+ \eps_1
			\right),
		\end{eqnarray*}
		by Hoeffding's inequality~\cite{hoeffding1963probability} and
		the definition of $p^{(\beta
			+ \eps_1)}_{uw}$. We also
		used that 
		$\E[\widehat{p}^{i\downarrow}_{uw}\,|\,p_{uw}] = p_{uw}$.
		By Hoeffding's inequality applied to $M$,
		we have that
		$$
		\P\left[ 
		\widehat{p}^{(\beta)}_{uw} > p^{(\beta + \eps_1)}_{uw} + \eps_0 
		\right] 
		\leq 
		\P\left[ M <  \beta |\mred{1}| \right]
		\leq \exp\left(
		- 2 \const{9}^2 |\mred{1}|
		\right),
		$$
		where
		$$
		c_0 
		= 
		\left(
		\beta
		+ \eps_1
		\right)
		\left[
		1 - \exp\left(
		- k \eps_0^2
		\right)
		\right]
		-
		\beta,
		$$
		which is strictly positive,
		provided that
		$k$ is greater than a constant
		depending on $\eps_0$ and $\eps_1$.
	\end{proof}
	
	On the other hand, standard concentration inequalities allow
	us to control the deviation of $\widehat{p}^i_{uw}$.
	Observe that, $p^i_{uw}$ being itself random,
	the deviation holds conditionally on the value
	of $p^i_{uw}$.
	\begin{lemma}[Deviation of $\widehat{p}^i_{uw}$]
		\label{lem:reduction-deviation-hatp}
		Fix a pair $u, w \in \mathcal{X}$.
		For all $i$ and $\eps_0 > 0$,
		$$
		\P\left[
		|\widehat{p}^{i\downarrow}_{uw}
		- p^i_{uw}|
		\geq \eps_0
		\,|\, p^i_{uw}
		\right]
		\leq 
		2 \exp\left(
		-  k \eps_0^2
		\right),
		$$
		almost surely.
	\end{lemma}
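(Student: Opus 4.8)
The plan is to condition on the gene tree $G^{(i)}$, reducing the statement to a textbook Hoeffding bound for an average of bounded independent random variables, and then to pass from conditioning on the full gene tree to conditioning on the scalar $p^i_{uw}$. First I would recall that, by the Markov property of the Jukes--Cantor process together with the independence of the substitution processes across sites, conditionally on $G^{(i)}$ the site-disagreement indicators $Y_j := \ind\{\xi^{ij}_u \neq \xi^{ij}_w\}$, $j = 1,\dots,k/2$, are i.i.d.\ Bernoulli random variables whose common mean is the path-disagreement probability $p(\delta^i_{uw}) = p^i_{uw}$, as recorded in~\eqref{eq:definition-p}. Thus $\widehat{p}^{i\downarrow}_{uw} = \frac{2}{k}\sum_{j=1}^{k/2} Y_j$ is, conditionally on $G^{(i)}$, an average of $k/2$ independent $[0,1]$-valued variables with conditional mean exactly $p^i_{uw}$.

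Next I would apply Hoeffding's inequality~\cite{hoeffding1963probability} to this conditional average. With $n = k/2$ summands and deviation threshold $\eps_0$, this yields
\begin{equation*}
\P\left[ \left|\widehat{p}^{i\downarrow}_{uw} - p^i_{uw}\right| \geq \eps_0 \,\middle|\, G^{(i)} \right]
\leq 2\exp\left( -2 \cdot \tfrac{k}{2} \cdot \eps_0^2 \right)
= 2\exp\left( -k \eps_0^2 \right),
\end{equation*}
which already produces the exact constant appearing in the exponent of the lemma.

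Finally, since the right-hand side above is a deterministic bound independent of the realization of $G^{(i)}$, the same inequality survives when we condition on the coarser information $\sigma(p^i_{uw}) \subseteq \sigma(G^{(i)})$. Concretely, by the tower property,
\begin{equation*}
\P\left[ \left|\widehat{p}^{i\downarrow}_{uw} - p^i_{uw}\right| \geq \eps_0 \,\middle|\, p^i_{uw} \right]
= \E\!\left[ \P\left[ \left|\widehat{p}^{i\downarrow}_{uw} - p^i_{uw}\right| \geq \eps_0 \,\middle|\, G^{(i)} \right] \,\middle|\, p^i_{uw} \right]
\leq 2\exp\left( -k \eps_0^2 \right),
\end{equation*}
almost surely, which is the claim. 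There is essentially no substantive obstacle here; the only point requiring care is the conditioning, where one must condition on the full gene tree $G^{(i)}$ (rather than merely on the scalar $p^i_{uw}$) to make the indicators genuinely i.i.d.\ Bernoulli, and then transfer the bound to $\sigma(p^i_{uw})$ by averaging. I also note that the use of the first-half average $\widehat{p}^{i\downarrow}_{uw}$ rather than the full $\widehat{p}^i_{uw}$ is immaterial to the concentration argument itself (it simply fixes the number of summands to $k/2$); the half-split is imposed only to decorrelate this estimate from the quantile estimation controlled in Lemma~\ref{lem:reduction-deviation-quantile}.
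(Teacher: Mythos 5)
Your proposal is correct and follows essentially the same route as the paper's proof: conditioning so that $\tfrac{k}{2}\widehat{p}^{i\downarrow}_{uw}$ becomes a $\mathrm{Bin}(k/2, p^i_{uw})$ sum and then applying Hoeffding's inequality, with the factor $2\exp(-2\cdot\tfrac{k}{2}\cdot\eps_0^2) = 2\exp(-k\eps_0^2)$ matching the stated bound. Your explicit tower-property step passing from conditioning on $G^{(i)}$ to conditioning on $\sigma(p^i_{uw})$ is a slightly more careful rendering of what the paper asserts in one line.
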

	
	\begin{proof}
		Note that, conditioned on $p^i_{uw}$,
		$k/2 \widehat{p}^{i\downarrow}_{uw}$ is distributed as
		$\mathrm{Bin}(k,p^i_{uw})$.
		The result then follows from Hoeffding's inequality.
	\end{proof}
	Fix $0 < \eps_1 < 1/24$ and pick $\eps_0 > 0$ small
	enough that 
	\begin{align}
	&p_{xy}^{(7/24)}
	\leq 
	p_{xy}^{(1/3-\eps_1)} - 2\eps_0 
	\leq 
	p_{xy}^{(1/3 + \eps_1)} + 2\eps_0 
	\leq
	p^{(9/24)}_{xy}\nonumber\\
	&p^{(15/24)}_{xz}
	\leq
	p_{xz}^{(2/3-\eps_1)} - 2\eps_0 
	\leq 
	p_{xz}^{(2/3 + \eps_1)} + 2\eps_0 
	\leq
	p^{(17/24)}_{xz}\nonumber\\
	&p^{(19/24)}_{xz}
	\leq
	p_{xz}^{(5/6-\eps_1)} - 2\eps_0 
	\leq 
	p_{xz}^{(5/6 + \eps_1)} + 2\eps_0 
	\leq
	p^{(21/24)}_{xz}\label{eq:reduction-proof-quantiles}\\
	&p^{(15/24)}_{yz}
	\leq
	p_{yz}^{(2/3-\eps_1)} - 2\eps_0 
	\leq 
	p_{yz}^{(2/3 + \eps_1)} + 2\eps_0 
	\leq
	p^{(17/24)}_{yz}\nonumber\\
	&p^{(19/24)}_{yz}
	\leq
	p_{yz}^{(5/6-\eps_1)} - 2\eps_0 
	\leq 
	p_{yz}^{(5/6 + \eps_1)} + 2\eps_0 
	\leq
	p^{(21/24)}_{yz}.\nonumber
	\end{align}
Notice that the fact that these inequalities hold is guaranteed by~\eqref{lem:mscQuantile} (in Section~\ref{section:models}) which characterizes the behavior of the quantile functions of the random variables associated with the MSC. 
	
	Let $\mathscr{E}_{\mathrm{qu}}$ be the event that
	the inequality in Lemma~\ref{lem:reduction-deviation-quantile},
	i.e., $p_{uw}^{(\beta-\eps_1)} - \eps_0 
	\leq \widehat{p}_{uw}^{(\beta)} 
	\leq 
	p_{uw}^{(\beta + \eps_1)} + \eps_0$,
	holds for $\widehat{p}^{(1/3)}_{xy}$,
	$\widehat{p}^{(2/3)}_{xz}$, $\widehat{p}^{(5/6)}_{xz}$,
	$\widehat{p}^{(2/3)}_{yz}$ and 
	$\widehat{p}^{(5/6)}_{yz}$, which occurs with probability
	at least $1 - 10\exp(-2 c_0^2 |\mred{1}|)$
	by a union bound.
	Let $\mathscr{D}_i$ be the event that
	the inequality in Lemma~\ref{lem:reduction-deviation-hatp},
	i.e., $|\widehat{p}^{i\downarrow}_{uw}
	- p^i_{uw}|
	\geq \eps_0$, holds
	for all pairs $(u,w)$ in $\mathcal{X}$, 
	an event which occurs with probability
	at least $1 - 6\exp(-k \eps_0^2)$
	by a union bound.
	Given $\mathscr{E}_{\mathrm{qu}}$, $\mathscr{D}_i$ and $\mathscr{Q}_i$,
	we have
	$$
	p^i_{xy}
	\leq 
	\widehat{p}^i_{xy} + \eps_0
	\leq 
	\widehat{p}_{xy}^{(1/3)} + \eps_0
	\leq p^{(1/3+\eps_1)}_{xy} + 2\eps_0
	\leq p^{(1/2)}_{xy},
	$$
	and similarly for the other pairs. That is,
	$\mathscr{E}^i_{I}$ holds.
	Finally, we bound the probability that
	all $i$ in $I$ satisfy $\mathscr{D}_i$
	with the probability that all $i$ in $\mred2$
	satisfy $\mathscr{D}_i$. (In fact we show below
	that with high probablity $|I| = \Theta(|\mred2|)$.)
	That is, the probability
	that all $i\in I$ satisfy $\mathscr{E}^i_{I}$
	{\em simultaneously} is at least
	\begin{equation}
	\label{eq:reduction-fix-top-final-1}
	\P[\mathscr{E}^i_{I},\forall i \in I]
	\geq 
	1 
	- 10\exp(-2 c_0^2 |\mred1|) 
	- 6  |\mred2| \exp\left(
	- k \eps_0^2
	\right).
	\end{equation}
	
	It remains to bound the size of $I$.
	\begin{lemma}[Size of $I$]
		\label{lem:reduction-size-i}
		There are constants $\const{10}, \const{10}'>0$ such that
		\begin{equation}
		\label{eq:reduction-fix-top-final-2}
		\P[|I| \geq \const{10}' |\mred2|\,|\,\mathscr{E}_{\mathrm{qu}}]
		\geq 1- 2\exp\left(-2 \const{10}^2 |\mred2|\right),
		\end{equation}
		provided $k$ is greater than
		a constant depending on $\eps_0$.
	\end{lemma}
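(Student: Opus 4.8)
The plan is to show that, conditioned on the quantile-control event $\mathscr{E}_{\mathrm{qu}}$, each gene in $\mred2$ falls into $I$ with probability bounded below by a constant, and then to conclude by a Hoeffding bound on the resulting sum of indicators. The first observation is a conditional-independence one: the empirical quantiles $\widehat{p}^{(\beta)}_{uw}$ are computed from $\mred1$, whereas membership of gene $i$ in $I$ depends only on the half-site statistics $\widehat{p}^{i\downarrow}_{uw}$ with $i\in\mred2$. Hence, conditioning on the entire data of $\mred1$ (which determines whether $\mathscr{E}_{\mathrm{qu}}$ holds as well as the quantile values), the indicators $\{\ind\{\mathscr{Q}_i\}\}_{i\in\mred2}$, with $\mathscr{Q}_i$ as in~\eqref{eq:qi}, become mutually independent. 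Because the per-gene bound below is uniform over realizations of $\mred1$ in $\mathscr{E}_{\mathrm{qu}}$, the final tail bound will also hold after conditioning on $\mathscr{E}_{\mathrm{qu}}$.

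Next I would establish the per-gene lower bound via an auxiliary MSC-level event
\[
\mathscr{A}_i
= \Big\{
p^i_{xy} \leq p_{xy}^{(1/3-\eps_1)} - 2\eps_0,\;
p_{xz}^{(2/3+\eps_1)} + 2\eps_0 \leq p^i_{xz} \leq p_{xz}^{(5/6-\eps_1)} - 2\eps_0,\;
p_{yz}^{(2/3+\eps_1)} + 2\eps_0 \leq p^i_{yz} \leq p_{yz}^{(5/6-\eps_1)} - 2\eps_0
\Big\}.
\]
The quantile-spacing estimate~\eqref{lem:mscQuantile} (and the choices already arranged in~\eqref{eq:reduction-proof-quantiles}) guarantees that these intervals are nonempty, and the positive-probability argument from the end of Proposition~\ref{lem:ident-fixing-top}, together with the uniform density bounds for the MSC pairwise distances derived in Section~\ref{section:models}, yields $\P[\mathscr{A}_i]\geq a_0$ for a constant $a_0>0$ independent of $f$ and of the other genes. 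On $\mathscr{E}_{\mathrm{qu}}$ one has $\widehat{p}_{xy}^{(1/3)} \geq p_{xy}^{(1/3-\eps_1)} - \eps_0$, so on $\mathscr{A}_i$ together with the all-pairs deviation event $\mathscr{D}_i$ of Lemma~\ref{lem:reduction-deviation-hatp} (which gives $|\widehat{p}^{i\downarrow}_{uw}-p^i_{uw}|\le\eps_0$) I obtain $\widehat{p}^{i\downarrow}_{xy}\le p^i_{xy}+\eps_0\le p_{xy}^{(1/3-\eps_1)}-\eps_0\le\widehat{p}_{xy}^{(1/3)}$, and the analogous two-sided inequalities for the $xz$ and $yz$ distances. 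Thus $\mathscr{A}_i\cap\mathscr{D}_i$ implies $\mathscr{Q}_i$. Since $\mathscr{A}_i$ and $\mathscr{D}_i$ both depend only on gene $i$ (and are therefore independent of $\mathscr{E}_{\mathrm{qu}}$), and since $\P[\mathscr{D}_i\mid\mathscr{A}_i]\ge 1-6\exp(-k\eps_0^2)\ge 1/2$ for $k$ past a constant depending on $\eps_0$, I get
\[
\P[\mathscr{Q}_i \mid \mathscr{E}_{\mathrm{qu}}]
\;\ge\; \P[\mathscr{D}_i\mid\mathscr{A}_i]\,\P[\mathscr{A}_i]
\;\ge\; \tfrac12 a_0 \;=:\; p_0 > 0 .
\]

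Finally I would sum over $\mred2$. Writing $|I|=\sum_{i\in\mred2}\ind\{\mathscr{Q}_i\}$, the conditional mean given $\mathscr{E}_{\mathrm{qu}}$ is at least $p_0|\mred2|$, and Hoeffding's inequality for the conditionally independent $[0,1]$-valued indicators gives
\[
\P\!\left[|I| < \tfrac{p_0}{2}\,|\mred2| \;\Big|\; \mathscr{E}_{\mathrm{qu}}\right]
\le \exp\!\left(-\tfrac{p_0^2}{2}\,|\mred2|\right),
\]
which is exactly~\eqref{eq:reduction-fix-top-final-2} upon setting $\const{10}'=p_0/2$ and $\const{10}=p_0/2$. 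I expect the main obstacle to be the constant lower bound $a_0$ on $\P[\mathscr{A}_i]$: one must verify that the MSC places all three disagreement probabilities in their prescribed ranges \emph{simultaneously} with probability bounded away from $0$ — this reuses the structure of Proposition~\ref{lem:ident-fixing-top}, and the delicate point is to confirm that this probability does not degrade as $f\to 0$, which is where the uniform density bounds of Section~\ref{section:models} are essential.
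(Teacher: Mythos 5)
Your proposal is correct and takes essentially the same route as the paper's proof: both arguments show that, conditional on $\mathscr{E}_{\mathrm{qu}}$, each gene in $\mred{2}$ satisfies $\mathscr{Q}_i$ with constant probability by intersecting an MSC-level ``inner'' event (the paper uses the fixed quantiles $p^{(7/24)}_{xy}$, $p^{(17/24)}_{xz} \leq p^i_{xz} \leq p^{(19/24)}_{xz}$, etc., guaranteed compatible by~\eqref{eq:reduction-proof-quantiles}, while you use $2\eps_0$-margins around the $(1/3-\eps_1)$- and $(2/3+\eps_1)$-type quantiles---the same device with a different parametrization) with the deviation event $\mathscr{D}_i$, verifying the implication to $\mathscr{Q}_i$ via the quantile controls of $\mathscr{E}_{\mathrm{qu}}$, and then concluding with Hoeffding's inequality. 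Your added care about conditional independence given the $\mred{1}$ data, and your flagging of the uniform (in $f$) lower bound on the joint MSC event as the delicate point, match the paper's treatment, which likewise leaves that constant semi-sketched (citing the argument behind~\eqref{lem:mscQuantile} and noting the dependence among $p^i_{xy}$, $p^i_{xz}$, $p^i_{yz}$).
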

	\begin{proof}
		We show that, under $ \mathscr{E}_{\mathrm{qu}}$,
		the event $\mathscr{Q}_i$ has constant probability
		and we apply Hoeffding's inequality.
		
		Observe that, by~\eqref{eq:reduction-proof-quantiles},
		the events $\{p^i_{xy} \leq p^{(7/24)}_{xy}\}$
		and $\mathscr{D}_i$ imply
		$$
		\widehat{p}^i_{xy}
		\leq p^i_{xy} + \eps_0
		\leq p^{(7/24)}_{xy} + \eps_0
		\leq p^{(1/3-\eps_1)}_{xy} - \eps_0
		\leq \widehat{p}^{(1/3)}_{xy}.
		$$
		Hence, a similar argument shows that
		$$
		\mathscr{D}_i
		\cap
		\{p^i_{xy} \leq p^{(7/24)}_{xy},
		p^{(17/24)}_{xz} \leq p^i_{xz} \leq p^{(19/24)}_{xz},
		p^{(17/24)}_{yz} \leq p^i_{yz} \leq p^{(19/24)}_{yz}
		\},
		$$
		implies $\mathscr{Q}_i$. This leads to the following
		lower bound
		\begin{eqnarray*}
			&&\P[\mathscr{Q}_i \,|\, \mathscr{E}_{\mathrm{qu}}]\\
			&&\quad\geq \P[\mathscr{D}_i
			\cap
			\{p^i_{xy} \leq p^{(7/24)}_{xy},
			p^{(17/24)}_{xz} \leq p^i_{xz} \leq p^{(19/24)}_{xz},
			p^{(17/24)}_{yz} \leq p^i_{yz} \leq p^{(19/24)}_{yz}
			\}\,|\, \mathscr{E}_{\mathrm{qu}}]\\
			&&\quad\geq 
			\P[
			p^i_{xy} \leq p^{(7/24)}_{xy},
			p^{(17/24)}_{xz} \leq p^i_{xz} \leq p^{(19/24)}_{xz},
			p^{(17/24)}_{yz} \leq p^i_{yz} \leq p^{(19/24)}_{yz}]\\
			&&\quad \qquad \times \P[\mathscr{D}_i
			\,|\,
			\{p^i_{xy} \leq p^{(7/24)}_{xy},
			p^{(17/24)}_{xz} \leq p^i_{xz} \leq p^{(19/24)}_{xz},
			p^{(17/24)}_{yz} \leq p^i_{yz} \leq p^{(19/24)}_{yz}
			\} \cap \mathscr{E}_{\mathrm{qu}}]\\
			&&\quad\geq \const{10}''
			\left[
			1 -
			6 \exp\left(
			- k \eps_0^2
			\right)
			\right],
		\end{eqnarray*}
		for some constant $\const{10}'' > 0$. 
		This existence of the latter constant follows from an argument similar to that leading up to~\eqref{lem:mscQuantile} (but is somewhat complicated by the fact that $p^i_{xy}$, $p^i_{xz}$
		and $p^i_{yz}$ are not independent).
		The expression on the last line is a strictly
		positive constant provided $k$ is greater than
		a constant depending on $\eps_0$.
		Finally, applying Hoeffding's inequality to
		$|I|$, we get the result.
	\end{proof}
	Combining~\eqref{eq:reduction-fix-top-final-1} 
	and~\eqref{eq:reduction-fix-top-final-2}
	concludes the proof.
\end{prevproof}

\subsubsection{Proof of Proposition~\ref{prop:reduction-height-diff}}
\label{sec:reduction-height-diff}

\begin{prevproof}{Proposition}{prop:reduction-height-diff}
	Fix $x,y \in \xcal$ and let $z$ be the unique element in $\xcal - \{x,y\}$. 
	The proof idea is based on Proposition~\ref{lem:ident-height-diff}.
	Recall that $\mathscr{I}$ be the event that the conclusion
	of Proposition~\ref{prop:reduction-fix-top} holds. 
	Let also $\mathscr{G}_I$ be the event that the weighted
	gene trees in $I$ are $\left\{ G_i \right\}_{i\in I}$.
	Similarly to the proof of Proposition~\ref{lem:ident-height-diff}
	we note that,
	conditioned on $\mathscr{I}$, in all genes in $I$ the coalescences between the lineages of $x$ and $z$ happen in the common ancestral population of $x$, $y$ and $z$, irrespective of the species tree topology. The same
	holds for $y$ and $z$. That implies that,
	for $i\in I$, 
	\begin{equation*}
	\delta^i_{xz} = \mu_{rx} + \mu_{rz} + \Gamma^i_{xz},
	\end{equation*}
	and
	\begin{equation*}
	\delta^i_{yz} = \mu_{ry} + \mu_{rz} + \Gamma^i_{yz},
	\end{equation*}
	where the $\Gamma^i$s are defined as in the proof of Lemma~\ref{lem:ident-fixing-top}. 
	Observe further that in fact, conditioned on $\mathscr{I}$, for $i\in I$
	\begin{equation}
	\label{eq:ident-proof-4-prime}
	\Gamma^i_{xz} = \Gamma^i_{yz},
	\end{equation}
	almost surely.
	Hence 
	\begin{align*}
	\EE\left[ \widehat{p}^I_{xz} \,\middle|\, \mathscr{G}_I \right] 
	&= \EE\left[ \frac{1}{\left| I \right|}\sum_{i\in I} \widehat{p}^i_{xz} \,\middle|\, \mathscr{G}_I\right]\nonumber\\ 
	&= \frac{1}{\left| I \right|}\sum_{i\in I} p^i_{xz}\nonumber\\ 
	&= \frac{3}{4}\left( 1 - \frac{1}{\left| I \right|}\sum_{i\in I}e^{-4 \delta_{xz}^i /3 } \right)\nonumber\\
	&= \frac{3}{4}\left( 1 - 2 e^{-4 \mu_{rx}/3 - 4 \mu_{rz}/3}\left( \frac{1}{\left| I \right|}\sum_{i\in I}e^{-4 \Gamma^i_{xz}/3} \right) \right),\label{eq:hd-cond-exp}
	\end{align*}
	and similarly for the pair $(y,z)$.
	Letting $\ell(x) = -\frac{3}{4}\log\left( 1- \frac{4}{3}x\right)$,
	we get
	\begin{align*}
	\ell\left(\EE\left[ \widehat{p}_{xz}^I \,\middle|\, \mathscr{G}_I \right]\right)
	- 
	\ell\left(\EE\left[ \widehat{p}_{yz}^I \,\middle|\, \mathscr{G}_I \right]\right)
	&=
	-\frac{3}{4}\log\left(
	\frac{1 - 4/3 \EE\left[ \widehat{p}_{xz}^I \,\middle|\, \mathscr{G}_I \right]}{1 - 4/3 \EE\left[ \widehat{p}_{yz}^I \,\middle|\, \mathscr{G}_I \right]}
	\right)\nonumber\\
	&=
	-\frac{3}{4}\log\left(
	\frac{e^{-4 \mu_{rx}/3-4 \mu_{rz}/3}\left( \frac{1}{\left| I \right|} \sum_{i\in I} e^{-4  \Gamma_{xz}^i /3 }\right)}{e^{-4 \mu_{ry}/3-4 \mu_{rz}/3}\left( \frac{1}{\left| I \right|} \sum_{i\in I} e^{-4  \Gamma_{yz}^i /3 }\right)}
	\right)\nonumber\\
	&=
	-\frac{3}{4}\log\left(
	e^{-4 \mu_{rx}/3 + 4 \mu_{ry}/3}
	\right)\nonumber\\
	&= \Delta_{xy},\nonumber
	\end{align*}
	where we used~\eqref{eq:ident-proof-4-prime}
	on the third line.
	Observe that the computation above
	relies crucially on the conditioning on $\mathscr{G}_I$.

	It remains to bound the deviation
	of 
	$$
	\widehat{\Delta}_{xy} = 
	\ell\left(\widehat{p}_{xz}^I\right)
	- 
	\ell\left(\widehat{p}_{yz}^I\right),
	$$ 
	around
	$$
	\Delta_{xy}
	= \ell\left(\EE\left[ \widehat{p}_{xz}^I \,\middle|\, \mathscr{G}_I \right]\right)
	- 
	\ell\left(\EE\left[ \widehat{p}_{yz}^I \,\middle|\, \mathscr{G}_I \right]\right),
	$$
	and take expectations with respect to $\mathscr{G}_I$.
	We do this by controlling the error on
	$\widehat{p}^I_{xz}$ and
	$\widehat{p}^I_{yz}$, conditionally on $\mathscr{G}_I$.
	Indeed, observe that the function $\ell$ satisfies the following Lipschitz property: for $0 \leq x \leq y \leq M < 1/2$,
	\begin{equation}
	\label{eq:hd-lipschitz}
	\left|\ell(x) - \ell(y) \right| = \int_x^y \frac{1}{1-4t/3}\,{\rm d}t\leq \frac{\left| x-y \right|}{1-4M/3}.
	\end{equation}
	Hence, 
	to control
	$|\widehat{\Delta}_{xy}
	- \Delta_{xy}|$,
	it suffices to bound 
	$\left|\widehat{p}_{uz}^I - \EE\left[ \widehat{p}_{uz}^I \,\middle|\, \mathscr{G}_I \right]\right|$
	and 
	$\max\left\{\widehat{p}_{uz}^I, \EE\left[ \widehat{p}_{uz}^I \,\middle|\, \mathscr{G}_I \right]\right\}$
	for $u = x,y$.
	
	To bound $\EE\left[ \widehat{p}_{uz}^I \,\middle|\, \mathscr{G}_I \right]$,
	we use the upper bounds on $\widehat{p}^i_{xz}$
	and $\widehat{p}^i_{yz}$ in the definition of the
	set $I$.
	\begin{lemma}[Conditional expectation of $\widehat{p}_{uz}^I$]
		\label{lem:hd-max-hatpI}
		Fix $u = x$ or $y$. There is a constant
		$\const{12}' \in (0,1/2)$ small enough,
		$$
		\EE\left[ \widehat{p}_{uz}^I \,\middle|\, \mathscr{G}_I \right]
		\leq \frac{1}{2} - \const{12}',
		$$
		$\PP$-almost surely.
	\end{lemma}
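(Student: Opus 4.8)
The plan is to reduce the lemma to a single, deterministic bound on a fixed quantile of the far-pair gene distance. First I would record why the conditional expectation on the left-hand side equals an average of the per-gene disagreement probabilities. Conditioning on $\mathscr{G}_I$ pins down the weighted gene trees $\{G_i\}_{i\in I}$ and hence the distances $\delta^i_{uz}$. The key structural point is that $\widehat{p}^{i\uparrow}_{uz}$ is computed from the upper half of the sites, whereas the membership $\{i\in I\}$ and the conditioning event $\mathscr{I}$ depend only on the lower-half statistics $\widehat{p}^{i\downarrow}$ and on the empirical quantiles drawn from $\mred1$; given the gene trees, these two blocks of sites are independent. Hence $\EE[\widehat{p}^{i\uparrow}_{uz}\mid \mathscr{G}_I]=p^i_{uz}$ for each $i\in I$, which is exactly the computation already carried out in the proof of Proposition~\ref{prop:reduction-height-diff}, giving
\[
\EE\!\left[\widehat{p}_{uz}^I \,\middle|\, \mathscr{G}_I\right]
= \frac{1}{|I|}\sum_{i\in I} p^i_{uz}.
\]

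Next I would invoke part~(b) of Proposition~\ref{prop:reduction-fix-top}, which is precisely the payoff of having inserted the upper constraints $\widehat{p}^{i\downarrow}_{uz}\le \widehat{p}^{(5/6)}_{uz}$ into the definition of $I$: every $i\in I$ satisfies $p^i_{uz}\le p^{(19/24)}_{uz}$. Averaging over $I$ immediately yields $\EE[\widehat{p}_{uz}^I \mid \mathscr{G}_I]\le p^{(19/24)}_{uz}$, so it remains only to show that this fixed quantile is bounded away from $1/2$, i.e.\ that $p^{(19/24)}_{uz}\le \tfrac12-\const{12}'$ for some $\const{12}'\in(0,1/2)$.

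The hard part --- and the only place where the tree parameters enter --- is this last inequality. Writing $p^{(19/24)}_{uz}=p\big(\delta^{(19/24)}_{uz}\big)$ with $p(t)=\tfrac34(1-e^{-4t/3})$ increasing and $p(t^\ast)=\tfrac12$ at $t^\ast=\tfrac34\log 3$, the claim is equivalent to $\delta^{(19/24)}_{uz}\le t^\ast-\eta$ for a fixed $\eta>0$. To control this absolute quantile I would use the explicit mixture-of-exponentials description of the far-pair gene distance from Section~\ref{section:models}: since the lineages of $(u,z)$ can only coalesce in the common ancestral population, $\delta_{uz}=\mu_{ru}+\mu_{rz}+\Gamma_{uz}$, where $\mu_{ru}+\mu_{rz}=\mu_{uz}$ is a constant bounded by $\mu_U(2g'+g)$, and by the quantile estimate~\eqref{lem:mscQuantile-delta} the gap $\delta^{(19/24)}_{uz}-\mu_{uz}$ is a constant controlled by $\mu_L,\mu_U$. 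Thus $\delta^{(19/24)}_{uz}$ is a constant determined entirely by the bounded parameter ranges of the hypothesis, and in the sub-saturation regime under consideration it lies strictly below $t^\ast$; this produces the constant $\const{12}'$ and closes the argument. I expect the delicate point to be making the dependence of this constant on $\mu_L,\mu_U,g',g$ explicit while keeping it uniform over the gene trees that may arise in $I$.
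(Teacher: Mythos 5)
Your proposal is correct and takes essentially the same route as the paper's proof: identify $\EE\left[\widehat{p}^I_{uz}\,\middle|\,\mathscr{G}_I\right]$ with the average $\frac{1}{|I|}\sum_{i\in I}p^i_{uz}$, invoke Proposition~\ref{prop:reduction-fix-top}(b) to bound each $p^i_{uz}$ by $p^{(19/24)}_{uz}$, and then bound this fixed quantile away from $1/2$ using the bounded parameter ranges. If anything, you are more explicit than the paper on the two points it glosses over, namely the independence of the upper-half sites from the selection of $I$ given the gene trees, and the sub-saturation requirement $\delta^{(19/24)}_{uz} < \tfrac{3}{4}\log 3$, which the paper dispenses with via the remark that the constant ``depends on bounds on the mutation rate and the depth of the tree.''
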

	\begin{proof}
		Using
		$p_{uz}^i = \EE\left[ \widehat{p}_{uz}^i \,\middle|\, \mathscr{G}_I \right]$ for $i \in I$, 
		by
		Proposition~\ref{prop:reduction-fix-top} (b),
		we have that
		\begin{align*}
		\EE\left[ \widehat{p}_{uz}^i \,\middle|\, \mathscr{G}_I \right]
		&= p^i_{uz}
		= \frac{4}{3}\left( 1 - e^{-4\delta_{uz}^i/3} \right)
		\leq \frac{1}{2} -\const{12}',
		\end{align*}
		for some constant $c_2' \in (0,1/2)$. Again, this constant depends on bounds on the mutation rate and the depth of the tree. 
		Hence,
		\begin{align*}
		\EE\left[ \widehat{p}^I_{uz} \,\middle|\, \mathscr{G}_I \right] 
		&= \frac{1}{\left| I \right|}\sum_{i\in I} \EE\left[ \widehat{p}_{uz}^i \,\middle|\, \mathscr{G}_I \right]
		\leq \frac{1}{2} - c_2'.
		\end{align*}
	\end{proof}
	To bound $\left|\widehat{p}_{uz}^I - \EE\left[ \widehat{p}_{uz}^I \,\middle|\, \mathscr{G}_I \right]\right|$,
	we use Hoeffding's inequality.
	\begin{lemma}[Conditional deviation of $\widehat{p}_{uz}^I$]
		\label{lem:hd-dev-hatpI}
		Fix $u=x$ or $y$. For all $\phi' > 0$,
		$$
		\PP\left[
		\left|\widehat{p}_{uz}^I - \EE\left[ \widehat{p}_{uz}^I \,\middle|\, \	{G}_I \right]\right|
		\geq \phi'
		\,\middle|\, \mathscr{G}_I \right]
		\leq 
		2\exp\left( -k|I|(\phi')^2\right),
		$$
		almost surely.
	\end{lemma}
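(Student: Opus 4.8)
The plan is to reduce the claim to a single application of Hoeffding's inequality; the only genuine work is setting up the conditioning so that the second-half site indicators remain independent after the data-dependent selection of $I$.

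First I would unfold the definitions. Recall that $\widehat{p}^I_{uz} = \frac{1}{|I|}\sum_{i\in I}\widehat{p}^{i\uparrow}_{uz}$, where $\widehat{p}^{i\uparrow}_{uz} = \frac{2}{k}\sum_{j=k/2+1}^{k}\ind\{\xi^{ij}_u \neq \xi^{ij}_z\}$ depends only on the second half of the sites. Writing $Y_{ij} = \ind\{\xi^{ij}_u \neq \xi^{ij}_z\}$, we have $\widehat{p}^I_{uz} = \frac{2}{k|I|}\sum_{i\in I}\sum_{j=k/2+1}^{k} Y_{ij}$, which is an average of exactly $N \triangleq |I|\,k/2$ terms, each lying in $[0,1]$.

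The key structural point---and the reason the sites were split into two halves in Algorithm~\ref{alg:questAdditiveReduction}---is the following. Conditioned on $\mathscr{G}_I$, i.e. on the weighted gene trees $\{G_i\}$, all site-level indicators across all positions and all genes are independent, and $Y_{ij}$ is Bernoulli with parameter $p^i_{uz} = \frac{3}{4}(1 - e^{-4\delta^i_{uz}/3})$ determined by the gene tree of gene $i$ alone. The membership test defining $I$ uses only the first-half statistics $\widehat{p}^{i\downarrow}_{uz}$, $\widehat{p}^{i\downarrow}_{xz}$, $\widehat{p}^{i\downarrow}_{yz}$ (Step 5 of the algorithm), so $I$ is measurable with respect to the first half of the sites. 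Hence, given $\mathscr{G}_I$, the random set $I$ is independent of the second-half indicators $\{Y_{ij} : j = k/2+1,\dots,k\}$, and conditioning further on the realized value of $I$ does not alter their joint law: they remain independent $\mathrm{Bernoulli}(p^i_{uz})$ variables. In particular the conditional mean is $\EE[\widehat{p}^I_{uz}\,|\,\mathscr{G}_I] = \frac{1}{|I|}\sum_{i\in I} p^i_{uz}$, matching the computation in the proof of Proposition~\ref{prop:reduction-height-diff}. This conditional-independence argument is the one step requiring care; everything else is mechanical.

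With this in place, the conclusion would be immediate. Applying Hoeffding's inequality~\cite{hoeffding1963probability} to the average $\widehat{p}^I_{uz}$ of the $N = |I|\,k/2$ conditionally independent $[0,1]$-valued random variables $Y_{ij}$ yields
\[
\PP\left[\left|\widehat{p}^I_{uz} - \EE[\widehat{p}^I_{uz}\,|\,\mathscr{G}_I]\right| \geq \phi' \,\middle|\, \mathscr{G}_I\right] \leq 2\exp\left(-2N(\phi')^2\right) = 2\exp\left(-k|I|(\phi')^2\right),
\]
which is exactly the asserted bound. No further estimates are needed, and the identical reasoning applies verbatim with $u = x$ or $u = y$.
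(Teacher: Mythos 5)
Your proof is correct and takes essentially the same approach as the paper: condition on $\mathscr{G}_I$, observe that the second-half site indicators $\ind\{\xi^{ij}_u \neq \xi^{ij}_z\}$, $i \in I$, $j > k/2$, are independent $[0,1]$-valued variables, and apply Hoeffding's inequality to their average over $N = k|I|/2$ terms to obtain $2\exp(-2N(\phi')^2) = 2\exp(-k|I|(\phi')^2)$. Your explicit justification of the conditional independence---that $I$ is measurable with respect to the first half of the sites, so selecting on it does not disturb the law of the second-half indicators---is exactly the point of the $\widehat{p}^{i\downarrow}/\widehat{p}^{i\uparrow}$ split, which the paper's proof invokes only implicitly.
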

	\begin{proof}
		Observe first that, {\em conditioned on $\mathscr{G}_I$}, 
		the $k \left| I \right|$ sites 
		that are averaged over in the computation of 
		\begin{equation}
		\label{eq:hatp-i-def}
		\widehat{p}^I_{uz}
		= \frac{2}{k|I|}\sum_{i\in I} \sum_{j=k/2+1}^k\ind\left\{ \xi^{ij}_u \neq \xi^{ij}_z \right\},
		\end{equation}
		are {\em independent}. Secondly, each random variable 
		in~\eqref{eq:hatp-i-def} is bounded by 1. Therefore, from Hoeffding's inequality, we have that 
		\begin{equation*}
		\PP\left[
		\left|\widehat{p}_{uz}^I - \EE\left[ \widehat{p}_{uz}^I \,\middle|\, \mathscr{G}_I \right]\right|
		\geq \phi'
		\,\middle|\, \mathscr{G}_I \right]
		\leq 2\exp\left( -k|I|(\phi')^2\right),
		\end{equation*}
		almost surely.
	\end{proof}
	
	We set $\phi' = \frac{1}{2} \const{12}' (\phi/2)$, which
	is $< \const{12}'$
	since $\phi \leq 1$.
	Combining~\eqref{eq:hd-lipschitz}
	and Lemmas~\ref{lem:hd-max-hatpI} and~\ref{lem:hd-dev-hatpI}, we get
	that conditioned on $\mathscr{G}_I$
	\begin{align*}
	\left|
	\widehat{\Delta}_{xy}
	- \Delta_{xy}
	\right|
	&\leq 
	\left|
	\ell\left(\widehat{p}_{xz}^I\right)
	- \ell\left(\EE\left[ \widehat{p}_{xz}^I \,\middle|\, \mathscr{G}_I \right]\right)
	\right|
	+
	\left|
	\ell\left(\widehat{p}_{yz}^I\right)
	- \ell\left(\EE\left[ \widehat{p}_{yz}^I \,\middle|\, \mathscr{G}_I \right]\right)
	\right|\\
	&\leq \frac{\left|\widehat{p}_{xz}^I - \EE\left[ \widehat{p}_{xz}^I \,\middle|\, \mathscr{G}_I \right]\right|
	}{1-4/3\max\left\{\widehat{p}_{xz}^I, \EE\left[ \widehat{p}_{xz}^I \,\middle|\, \mathscr{G}_I \right]\right\}}
	+
	\frac{\left|\widehat{p}_{yz}^I - \EE\left[ \widehat{p}_{yz}^I \,\middle|\, \mathscr{G}_I \right]\right|
	}{1-4/3\max\left\{\widehat{p}_{yz}^I, \EE\left[ \widehat{p}_{yz}^I \,\middle|\, \mathscr{G}_I \right]\right\}}\\
	&\leq 2 \frac{\phi'}{2 (\const{12}'-\phi')}\\
	&\leq 2 \frac{\frac{1}{2} \const{12}' (\phi/2)}{2 (\const{12}'-\frac{1}{2} \const{12}' (\phi/2))}\\
	&\leq \phi/2,
	\end{align*}
	where we used that $\phi/2 \leq 1$,
	except with $\PP$-probability
	$$
	4\exp\left( -k|I|(\phi')^2\right)
	\leq 
	4\exp\left( - \frac{1}{8} (\const{12}')^2 \const{10}' k|\mred2| \phi^2\right)
	=
	4\exp\left( - \const{12} k|\mred2| \phi^2\right),
	$$
	by setting $c_2 = \frac{1}{8} (c_2')^2 c_1'$.
	Taking expectations with respect to
	$\mathscr{G}_I$ gives the result.
\end{prevproof}

\section{Quantile test: robustness analysis}
\label{sec:app-quantile}
\label{sec:quantile-test}

In this section, we analyze Algorithm~\ref{alg:questAdditiveQuantile}.

\paragraph{Control of empirical quantiles} To perform the quantile test, Algorithm~\ref{alg:questAdditiveQuantile} has access to a set of genes $\mquant{}$ that were not used in the reduction step above; this is to avoid unwanted correlations. This set is in turn partitioned as $\mquant{} = \mquant{1}\sqcup \mquant{2}$ so that $\left| \mquant{1} \right|, \left| \mquant{2} \right|$ satisfy the conditions of Proposition~\ref{thm:main-full}. The first step in Algorithm~\ref{alg:questAdditiveQuantile} is to compute a well-chosen empirical quantile of $\widehat{q}_{xy}$ for each pair of leaves $x,y\in \mathcal{X}$ based on the dataset $\left\{ \widehat{q}^i_{xy}: i\in \mquant{1} \right\}$. The quantile we compute is (a constant multiple of)
$$\alpha = \max\left\{ m^{-1}\log m, k^{-0.5}\sqrt{\log k} \right\}.$$
In the following proposition, we show that these empirical quantiles are well-behaved, and provide a good estimate of the $\alpha$-quantile of the underlying MSC random variables. We define the random variables $q_{xy}^i$ and $r_{xy}^i$ associated to a gene tree $i$: 
\begin{align*}
q_{xy}^i &= p (\delta_{xy}^i + \widehat{\Delta}_{1x} + \widehat{\Delta}_{1y}),\\
r_{xy}^i &= p (\delta_{xy}^i + {\Delta}_{1x} + {\Delta}_{1y}).
\end{align*}
Also, we need the $0$-th quantile of these random variables. Notice that 
$$q_{xy}^{(0)} = p\left( \delta_{xy}^{(0)} + \widehat{\Delta}_{1x} + \widehat{\Delta}_{1y} \right) = p (\mu_{xy} + \widehat{\Delta}_{1x} + \widehat{\Delta}_{1y}).$$ And similarly, $$r_{xy}^{(0)} = p (\mu_{xy} + {\Delta}_{1x} + {\Delta}_{1y}).$$ Finally, $\phi>0$ is the closeness parameter from Proposition~\ref{thm:reduction-full}. 

\begin{proposition}[Quantile behaviour]
	\label{prop:quantileBehavior}
	Let $\alpha = \max\left\{ m^{-1}\log m, k^{-0.5}\sqrt{\log k} \right\}$. Then, there exist constants $\const{3}, \const{4}, \const{5} >0$ such that, for each  pair of leaves $x,y \in \mathcal{X}$, that the $\const{3}\alpha$-quantile satisfies the following
	\begin{equation*}
	\widehat{q}_{xy}^{(\const{3}\alpha)}\in \left[ {q}_{xy}^{(0)}, {q}_{xy}^{(0)}+ \const{5} \alpha \right]\subset \left[{r}_{xy}^{(0)}-\const{5}\phi, {r}_{xy}^{(0)} +\const{5}\phi + \const{5}\alpha\right]
	\end{equation*}
	with probability at least $1 - 6\exp\left( -\const{4} \left| \mquant{1} \right| \alpha \right)$, provided we  condition on the implications of Proposition~\ref{thm:reduction-full} holding.   
\end{proposition}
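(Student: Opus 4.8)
The plan is to prove the two containments in the display separately. The outer inclusion $[q^{(0)}_{xy}, q^{(0)}_{xy}+\const{5}\alpha]\subseteq[r^{(0)}_{xy}-\const{5}\phi, r^{(0)}_{xy}+\const{5}\phi+\const{5}\alpha]$ is deterministic given the conclusion of Proposition~\ref{thm:reduction-full}, while the localization $\widehat{q}^{(\const{3}\alpha)}_{xy}\in[q^{(0)}_{xy}, q^{(0)}_{xy}+\const{5}\alpha]$ is the probabilistic heart of the statement. Throughout I would condition on the reduction output, so that $\widehat{\Delta}_{1x},\widehat{\Delta}_{1y}$ are fixed; since $\mred{}$ and $\mquant{}$ are disjoint and all genes are independent under MSC-JC, the $\widehat{q}^i_{xy}$, $i\in\mquant{1}$, are then i.i.d.

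For the inclusion, recall $q^{(0)}_{xy}=p(\mu_{xy}+\widehat{\Delta}_{1x}+\widehat{\Delta}_{1y})=p(\widehat{\dot{\mu}}_{xy})$ and $r^{(0)}_{xy}=p(\dot{\mu}_{xy})$. Proposition~\ref{thm:reduction-full} guarantees $|\widehat{\dot{\mu}}_{xy}-\dot{\mu}_{xy}|\leq\phi$, and since $p(\cdot)$ is $1$-Lipschitz (its derivative $e^{-4x/3}$ is at most $1$ on $[0,\infty)$), we get $|q^{(0)}_{xy}-r^{(0)}_{xy}|\leq\phi$. Taking $\const{5}\geq 1$ then yields the inclusion directly.

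For the localization I would control the conditional CDF $F(t)=\P[\widehat{q}^i_{xy}\leq t]$ (over both the MSC and the JC noise) at the two endpoints, then invoke binomial concentration. Each $\widehat{q}^i_{xy}$ has conditional mean $q^i_{xy}=p(\delta^i_{xy}+\widehat{\Delta}_{1x}+\widehat{\Delta}_{1y})\geq q^{(0)}_{xy}$, with equality as $\delta^i_{xy}\downarrow\mu_{xy}$, and $k\,\widehat{q}^i_{xy}$ is $\mathrm{Bin}(k,q^i_{xy})$ given the gene tree. At the lower endpoint, Hoeffding gives $\P[\widehat{q}^i_{xy}\leq q^{(0)}_{xy}\mid q^i_{xy}=q^{(0)}_{xy}+s]\leq e^{-2ks^2}$; integrating against the density of $s$, which is bounded near the infimum by the MSC quantile estimate~\eqref{lem:mscQuantile} (the bound transfers through the smooth monotone $p$), yields $F(q^{(0)}_{xy})=O(k^{-1/2})$, and since $\alpha\geq k^{-1/2}\sqrt{\log k}$ this is $O(\alpha/\sqrt{\log k})\leq\tfrac12\const{3}\alpha$ for $k$ large. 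At the upper endpoint, a sample with $q^i_{xy}\leq q^{(0)}_{xy}+\const{5}\alpha/2$ exceeds $q^{(0)}_{xy}+\const{5}\alpha$ only under a noise deviation $\geq\const{5}\alpha/2$, again exponentially rare; combined with the \emph{lower} density bound in~\eqref{lem:mscQuantile}, which gives $\P[q^i_{xy}\leq q^{(0)}_{xy}+\const{5}\alpha/2]\geq c'\const{5}\alpha/2$, this produces $F(q^{(0)}_{xy}+\const{5}\alpha)\geq 2\const{3}\alpha$ for $\const{5}$ large enough. Writing the empirical count below a threshold $t$ as $\mathrm{Bin}(|\mquant{1}|,F(t))$, a Chernoff bound for rare binomials shows the count below $q^{(0)}_{xy}$ stays under $\const{3}\alpha|\mquant{1}|$ and the count below $q^{(0)}_{xy}+\const{5}\alpha$ stays above it, each except with probability $\exp(-\const{4}|\mquant{1}|\alpha)$; a union bound over the two endpoints and the three pairs in $\mathcal{X}$ gives the stated $6\exp(-\const{4}|\mquant{1}|\alpha)$.

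The main obstacle is the endpoint analysis of $F$, that is, the estimate of the CDF of the mixture of binomials arising from compounding the MSC law of $\delta^i_{xy}$ with the JC sequence noise. The decisive point is that the choice $\alpha\gtrsim k^{-1/2}\sqrt{\log k}$ makes the quantile scale strictly larger (by a $\sqrt{\log k}$ factor) than the $k^{-1/2}$ sequence-noise scale, so the $\Theta(\alpha)$ MSC mass near the infimum $\mu_{xy}$ survives rather than being smeared away. Making this quantitative---verifying that both density bounds in~\eqref{lem:mscQuantile} carry through $p(\cdot)$ to $q^i_{xy}$ near $q^{(0)}_{xy}$, and that the ``$k$ large'' thresholds are compatible with the hypothesis $k\gtrsim f^{-C}$---is the technically demanding step.
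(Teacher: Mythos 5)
Your proposal is correct and follows essentially the same route as the paper's proof: the outer inclusion via Lipschitz continuity of $p(\cdot)$ combined with the $\phi$-closeness guaranteed by Proposition~\ref{thm:reduction-full}, and the localization $\widehat{q}_{xy}^{(\const{3}\alpha)}\in[q_{xy}^{(0)},q_{xy}^{(0)}+\const{5}\alpha]$ via CDF bounds at the two endpoints, binomial concentration of the empirical counts, and a union bound over two endpoints and three pairs yielding the factor $6$. The only cosmetic differences are that you re-derive the lower-endpoint CDF bound of order $k^{-1/2}$ (which the paper simply imports as Lemma~\ref{lem:cdfBehavior} from~\cite{MosselRoch:15}) and that you replace the paper's Berry--Ess\'een and Bernstein steps with a Hoeffding-plus-$\sqrt{\log k}$-slack argument and a multiplicative Chernoff bound, which are equivalent for this purpose.
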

\noindent We prove this proposition in Section~\ref{sec:proofOfQuantileBehaviorLemma}.  In what follows, we will let $\bar{\mathbb{P}}$ and $\bar{\mathbb{E}}$ denote the probability and expectation measures conditioned on the event that implications of Propositions~\ref{thm:reduction-full} and~\ref{prop:quantileBehavior}. 

\paragraph{Expected version of quantile test} Let $\widehat{q}_\ast$ denote the maximum among $\left\{ \widehat{q}^{(\const{3} \alpha)}_{xy}: x,y\in \mathcal{X} \right\}$. We  use the genes in $\mquant{2}$ (which are not affected by the conditioning under $\bar{\mathbb{P}}$) to define a similarity measure among pairs of leaves in $\mathcal{X}$: 
\begin{equation}
\widehat{s}_{xy} = \frac{1}{\left| \mquant{2} \right|}\left| \left\{ i\in \mquant{2}: \widehat{q}^i_{xy} \leq \widehat{q}_\ast \right\} \right|.
\end{equation}
We next show that this similarity measure has the right behavior in expectation. That is, defining $$s_{xy} \triangleq \bar{\mathbb{E}}\left[ \widehat{s}_{xy} \right],$$ which is the expected version of our similarity measure, we show that $s_{12} > \max\{s_{13}, s_{23}\}$. This means that the $s_{xy}$s expose the topology of the tree $S_\mathcal{X}$. 
\begin{proposition}[Expected version of quantile test]
	\label{prop:quantileTestPop}
	Let $s_{xy}$ be as defined above. Then for any $C_2 >0$, there exist constants $\const{6}, \const{7}>0$ such that 
	$s_{12} - \max\{s_{13}, s_{23}\} \geq \const{6} p(3f/4) > 0$ provided
	\begin{align*}
	&m \geq \const{7}\frac{1}{p(3f/4)} \log\left( \frac{1}{p(3f/4)} \right)\\
	&k \geq \const{7}\left( \frac{\sqrt{\log k}}{p(3f/4)} \right)^{1/C_2},
	\end{align*}
	and the closeness parameter $\phi \in \mathcal{O}(p(3f/4)/\sqrt{\log k})$.
\end{proposition}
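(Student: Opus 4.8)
The plan is to express each $s_{xy}$ as the value at the threshold $\widehat q_\ast$ of the cumulative distribution function of a \emph{mixture of binomials}, and to show that the mixture for the true–closest pair $\{1,2\}$ places an excess of mass of order $p(3f/4)$ below $\widehat q_\ast$ relative to the mixtures for $\{1,3\}$ and $\{2,3\}$. Since the genes in $\mquant2$ are independent of those used to form $\widehat q_\ast$ and each contributes a single Bernoulli indicator, one has $s_{xy}=\bar{\mathbb E}\big[\bar{\mathbb P}[\widehat q^i_{xy}\le\widehat q_\ast\mid G^{(i)},\widehat q_\ast]\big]$; conditionally on the gene tree, $k\,\widehat q^i_{xy}$ is $\mathrm{Bin}(k,q^i_{xy})$, so writing $F_k(p;t)=\bar{\mathbb P}[\mathrm{Bin}(k,p)/k\le t]$ gives $s_{xy}=\bar{\mathbb E}[F_k(q^i_{xy};\widehat q_\ast)]$, the CDF of a binomial mixture whose mixing variable $q^i_{xy}$ is MSC–distributed. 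First I would pin down the threshold: by Proposition~\ref{prop:quantileBehavior} (on whose conclusions $\bar{\mathbb P},\bar{\mathbb E}$ are conditioned) the quantity $\widehat q_\ast=\max_{x,y}\widehat q^{(\const3\alpha)}_{xy}$ lies in $[\,q^{(0)}_{13},\,q^{(0)}_{13}+\const5\alpha\,]$, because for topology $12|3$ the Farris–transformed minima obey $q^{(0)}_{12}<q^{(0)}_{13}=q^{(0)}_{23}$ (the latter two agreeing up to $O(\phi)$ since $S'$ is only $\phi$-close to ultrametric).

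The heart of the argument is a decomposition of each mixing law according to where the relevant lineages coalesce, exactly mirroring the identifiability proof. Let $A$ be the event that the lineages of $1,2$ coalesce inside the internal branch; by the MSC, $\pi\triangleq\bar{\mathbb P}[A]\ge 1-e^{-f}=\tfrac43\,p(3f/4)$, and $A$ can occur only for the pair $\{1,2\}$, which is precisely why $p(3f/4)$ governs the signal. On $A$ one has $q^i_{12}<q^{(0)}_{13}\le\widehat q_\ast$, whereas $q^i_{13},q^i_{23}\ge q^{(0)}_{13}$ always. On $A^c$ all three lineages enter the root population, and the exchangeability recorded in the proof of Proposition~\ref{lem:ident-fixing-top} (conditioned on coalescence in the common ancestral population, the $\Gamma$'s are equal in law) shows that $q^i_{12}$, $q^i_{13}$ and $q^i_{23}$ are equidistributed on $A^c$ up to the replacement of $\Delta_{1x}$ by $\widehat\Delta_{1x}$. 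Hence the $A^c$ contributions to $s_{12}-\max\{s_{13},s_{23}\}$ cancel up to an error governed by $|\widehat\Delta-\Delta|\le\phi$, leaving $s_{12}-\max\{s_{13},s_{23}\}=\pi\,\bar{\mathbb E}\big[F_k(q^i_{12};\widehat q_\ast)-F_k(q^i_{13};\widehat q_\ast)\bigm| A\big]+O(\phi)$.

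To lower–bound the conditional expectation I would use that $F_k(\cdot;\widehat q_\ast)$ is nonincreasing together with the strict ordering $q^i_{12}<q^{(0)}_{13}\le q^i_{13}$ valid on $A$. The first term is at least $F_k(q^{(0)}_{13};\widehat q_\ast)\ge\tfrac12-o(1)$, since $\widehat q_\ast\ge q^{(0)}_{13}$ and the median of a binomial essentially coincides with its mean. The second term is bounded by integrating $F_k(\cdot;\widehat q_\ast)$ against the root–coalescence law of $q^i_{13}$, whose density near $q^{(0)}_{13}$ is $\Theta(1)$ and whose scale is $\Theta(1)$; because $\widehat q_\ast-q^{(0)}_{13}\le\const5\alpha$, this yields $\bar{\mathbb E}[F_k(q^i_{13};\widehat q_\ast)\mid A]\le\const8\alpha$. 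Thus the conditional difference exceeds a positive constant once $\alpha$ is below an absolute threshold, giving $s_{12}-\max\{s_{13},s_{23}\}\gtrsim\pi\gtrsim p(3f/4)$. The hypotheses serve precisely to make the error sources negligible against this signal: $m\ge\const7\,p(3f/4)^{-1}\log p(3f/4)^{-1}$ pushes the sampling part $m^{-1}\log m$ of $\alpha$ below $p(3f/4)$, the polynomial lower bound on $k$ drives both the sequence–noise part $k^{-1/2}\sqrt{\log k}$ of $\alpha$ and the binomial tail errors (of size $k^{-\Theta(1)}$) below $p(3f/4)$, and $\phi\in\mathcal O(p(3f/4)/\sqrt{\log k})$ annihilates the $O(\phi)$ transform–mismatch term.

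The main obstacle — and the reason a naive distance comparison fails in the ``short–sequence'' regime $k\lesssim f^{-2}$ — is that the per–gene sequence noise of size $\sim k^{-1/2}$ dwarfs the signal gap $q^{(0)}_{13}-q^{(0)}_{12}=\Theta(f)$, so the excess mass of $\{1,2\}$ cannot be detected gene by gene. It must instead be extracted from the \emph{shape} asymmetry of the two mixture components: the internal–edge component of $q^i_{12}$ is confined to a $\Theta(f)$–window just below $q^{(0)}_{13}$, where $F_k(\cdot;\widehat q_\ast)\ge\tfrac12$, whereas the root component of $q^i_{13}$ spreads over a $\Theta(1)$ range above $q^{(0)}_{13}$, where $F_k(\cdot;\widehat q_\ast)$ decays to $0$. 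Making this quantitative is the delicate analysis of the CDF of a mixture of binomials; it relies essentially on the sharp localization of $\widehat q_\ast$ to within $\const5\alpha$ of $q^{(0)}_{13}$ afforded by Proposition~\ref{prop:quantileBehavior}, and the bulk of the effort is in keeping every error term strictly below the $\Theta(p(3f/4))$ signal uniformly across both sequence–length regimes.
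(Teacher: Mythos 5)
Your architecture is the same as the paper's: you condition on the internal-branch coalescence event $\mathcal{E}_{12|3}$ exactly as in \eqref{eq:conditionedOnCoalescence}, use exchangeability of the lineages entering the root population to cancel the complementary contributions, localize $\widehat{q}_\ast$ via Proposition~\ref{prop:quantileBehavior}, lower-bound the coalescence term by a Berry--Ess\'een argument and upper-bound the $13/23$ tails by $O(\alpha)$ --- this is the paper's Lemma~\ref{prop:popLowerBoundCompletion}. The genuine gap is the single sentence in which you dispose of the symmetry-breaking error: you claim the $A^c$ contributions ``cancel up to an error governed by $|\widehat{\Delta}-\Delta|\le\phi$,'' i.e.\ an $O(\phi)$ error, with no argument. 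Converting a $\phi$-perturbation of the binomial success probabilities into a comparably small perturbation of the probabilities $\Pcondred[\widehat{q}^i_{xy}\le\widehat{q}_\ast]$ is precisely the hard step. Conditionally on a gene tree it is false at that scale: the CDF of $\mathrm{Bin}(k,p)/k$ at a fixed threshold can move by $\Theta(\phi\sqrt{k})$ under a $\phi$-shift of $p$ (this is the content of Roos's bound, Lemma~\ref{lem:roos}), and in the short-sequence regime $k\lesssim f^{-2}$ the quantity $\phi\sqrt{k}$ can dwarf the signal $p(3f/4)$. Any valid bound must exploit averaging over the MSC mixture (e.g.\ through the boundedness of the mixing density); that is exactly the paper's Lemma~\ref{prop:roos+msc} (``Closeness to symmetry''), proved via Lemma~\ref{lem:roos+msc} by splitting the mixing variable into three regimes --- Roos's bound only where the mixing variable is within $O(\sqrt{\log k/k})$ of its minimum, the quantile regularity \eqref{lem:mscQuantile} in the intermediate regime, and Chernoff in the high-substitution regime --- yielding the estimate $\const{7}'\phi\sqrt{\log k}+O(m^{-1}\log m)+O(k^{-C_2})$, not $O(\phi)$. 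This lemma is where the proposition's hypotheses on $m$, on $k$, and the $\sqrt{\log k}$ factor in $\phi\in\mathcal{O}(p(3f/4)/\sqrt{\log k})$ are actually consumed; your own need to invoke the $\sqrt{\log k}$-strengthened hypothesis to kill a term you computed as $O(\phi)$ is the visible symptom of the missing step.

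Relatedly, you misplace the difficulty: you attribute the ``delicate analysis of the CDF of a mixture of binomials'' to the signal term $\Econdred[F_k(q^i_{12};\widehat{q}_\ast)-F_k(q^i_{13};\widehat{q}_\ast)\mid A]$, but in the paper that term is the routine part (monotonicity of $F_k$ in the success probability, Berry--Ess\'een, and the MSC quantile bounds, as in Lemma~\ref{prop:popLowerBoundCompletion}); the delicate mixture analysis is needed exactly for the cancellation you asserted in passing. A smaller, fixable point: under the approximate transform the strict ordering $q^i_{12}<q^{(0)}_{13}$ on $A$ can fail by $O(\phi)$ for coalescences near the top of the internal branch; the paper buffers this with the event $\{r_{12}\le r_{13}^{(0)}-\const{5}\phi-\const{16}\phi\}$ in the proof of Lemma~\ref{prop:popLowerBoundCompletion}, and a complete write-up of your argument needs the same care.
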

\noindent The proof is given in Section~\ref{sec:pop-quantile-test-proof}.

\paragraph{Sample version of quantile test} Finally, we conclude the proof by demonstrating that the empirical versions of the similarity measures defined above are also consistent with the underlying species tree topology with high probability. This follows from a concentration argument detailed in Section~\ref{sec:proof-sample-quantile-test}. 
\begin{proposition}[Sample version of quantile test]
	\label{prop:sample-quantile-test}
	There exists a constant $\const{8}>0$ such that the $\bar{\mathbb{P}}$-probability that Algorithm~\ref{alg:questAdditiveQuantile} fails to identify the correct topology of the triple $\mathcal{X}$ is bounded from above by  
	$$4 \exp \left(  - \frac{\left| \mquant{2} \right| p(3f/4)^2}{\const{8} \left( p(3f/4) + \alpha \right)} \right),$$
	provided the conditions of Proposition~\ref{prop:quantileTestPop} hold. 
\end{proposition}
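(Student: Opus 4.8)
The plan is to derive the stated failure bound from a variance-sensitive (Bernstein-type) concentration inequality for the empirical similarities $\widehat{s}_{xy}$, fed by the expected-gap estimate $s_{12} - \max\{s_{13},s_{23}\} \geq \const{6}\,p(3f/4)$ of Proposition~\ref{prop:quantileTestPop}. Throughout I work under $\bar{\mathbb{P}}$, i.e.\ I condition on the high-probability events carrying the implications of Propositions~\ref{thm:reduction-full} and~\ref{prop:quantileBehavior}; in particular the threshold $\widehat{q}_\ast = \max_{x,y}\widehat{q}^{(\const{3}\alpha)}_{xy}$ is pinned to the controlled window $[r^{(0)}_{xy}-\const{5}\phi,\,r^{(0)}_{xy}+\const{5}\phi+\const{5}\alpha]$ supplied by Proposition~\ref{prop:quantileBehavior}, and the expected test of Proposition~\ref{prop:quantileTestPop} is in force.

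The structural observation that makes the concentration possible is that $\widehat{s}_{xy} = |\mquant{2}|^{-1}\sum_{i\in\mquant{2}}\ind\{\widehat{q}^i_{xy}\leq \widehat{q}_\ast\}$ is an average of \emph{independent} summands once the threshold is fixed. Indeed, $\widehat{q}_\ast$ is a function only of the genes in $\mred{}\sqcup\mquant{1}$ (the reduction uses $\mred{}$, the empirical quantiles use $\mquant{1}$), whereas the indicators involve only the transformed sequences of genes in $\mquant{2}$. Since distinct genes are independent and $\mquant{2}$ is disjoint from $\mred{}\sqcup\mquant{1}$, conditioning on $\sigma(\mred{},\mquant{1})$ leaves $\{\ind\{\widehat{q}^i_{xy}\leq\widehat{q}_\ast\}\}_{i\in\mquant{2}}$ independent with common conditional mean. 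Crucially, the two events defining $\bar{\mathbb{P}}$ are measurable with respect to $\sigma(\mred{},\mquant{1})$, so this conditioning does not disturb the product structure over $\mquant{2}$ and Bernstein's inequality applies verbatim conditionally on the threshold.

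I would then apply Bernstein's inequality to each $\widehat{s}_{xy}$ with target deviation $t = \tfrac{1}{3}\const{6}\,p(3f/4)$, a constant fraction of the expected gap. The variance proxy of each summand is $\mathrm{Var}(\ind\{\widehat{q}^i_{xy}\leq\widehat{q}_\ast\}) \leq s_{xy}$, and the key quantitative input is the bound $s_{xy} \lesssim p(3f/4) + \alpha$: because $\widehat{q}_\ast$ sits within $\mathcal{O}(\phi+\alpha)$ of the bottom $r^{(0)}_{xy}$ of the support and the separation of the three $0$-quantiles is $\mathcal{O}(p(3f/4))$, boundedness of the relevant MSC density (cf.~\eqref{lem:mscQuantile} in Section~\ref{section:models}) limits the mass below the threshold to $\mathcal{O}(p(3f/4)+\alpha)$, the slack $\phi=\mathcal{O}(p(3f/4)/\sqrt{\log k})$ being absorbed into the $p(3f/4)$ term. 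Plugging $t=\Theta(p(3f/4))$ and variance proxy $\mathcal{O}(p(3f/4)+\alpha)$ into the Bernstein exponent $-|\mquant{2}|t^2/(2\sigma^2+2t/3)$ produces, per pair, a bound of the form $2\exp(-|\mquant{2}|\,p(3f/4)^2/[\const{8}(p(3f/4)+\alpha)])$.

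Finally, a union bound over the relevant $\mathcal{O}(1)$ comparisons shows that, off an event of the claimed probability, $|\widehat{s}_{xy}-s_{xy}|<t$ holds simultaneously; since $s_{12}-\max\{s_{13},s_{23}\}\geq\const{6}\,p(3f/4)>2t$, this forces $\widehat{s}_{12}>\max\{\widehat{s}_{13},\widehat{s}_{23}\}$, so Algorithm~\ref{alg:questAdditiveQuantile} returns the correct topology $12|3$; the $\mathcal{O}(1)$ number of events and the Bernstein constants are collected into the prefactor $4$ and the constant $\const{8}$. The main obstacle is the sub-linear variance control $s_{xy}=\mathcal{O}(p(3f/4)+\alpha)$: it is precisely this (rather than a crude $\mathcal{O}(1)$ variance bound, which would only yield Hoeffding's $\exp(-c|\mquant{2}|\,p(3f/4)^2)$) that gives the Bernstein denominator $p(3f/4)+\alpha$ and hence the sharp data requirement $|\mquant{2}|\gtrsim f^{-2}(\alpha+f)$. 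A secondary, more bookkeeping-heavy obstacle is verifying that conditioning on the good events of the two preceding propositions genuinely preserves the conditional independence of the $\mquant{2}$-indicators exploited above.
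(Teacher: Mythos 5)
Your overall architecture---Bernstein's inequality for each $\widehat{s}_{xy}$, fed by the expected gap of Proposition~\ref{prop:quantileTestPop}, a variance proxy of order $(p(3f/4)+\alpha)/|\mquant{2}|$, and a union bound collected into the prefactor---is the same as the paper's, and your observation that the events defining $\bar{\mathbb{P}}$ are measurable with respect to the genes in $\mred{}\sqcup\mquant{1}$ (so the $\mquant{2}$-indicators remain conditionally i.i.d.\ given the threshold $\widehat{q}_\ast$) is correct and implicitly used by the paper. However, your key quantitative input contains a genuine error: the claim $s_{12}\lesssim p(3f/4)+\alpha$ is false in general. The parameter $f$ is only a \emph{lower} bound on the internal edge length; the true internal branch may have length of order $g$, a constant. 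In that regime the separation of the $0$-quantiles $r_{13}^{(0)}-r_{12}^{(0)}$ scales with the \emph{actual} weighted internal branch length (hence is $\Theta(1)$, not $\mathcal{O}(p(3f/4))$), the coalescence probability $\Pcondred\left[\mathcal{E}_{12|3}\right]$ is $\Theta(1)$, and consequently $s_{12}\geq \Pcondred\left[\mathcal{E}_{12|3}\right]\Pcondred\left[\widehat{q}_{12}\leq\widehat{q}_\ast\,\middle|\,\mathcal{E}_{12|3}\right]=\Theta(1)$ by the first inequality of Lemma~\ref{prop:popLowerBoundCompletion}. So your justification (``the separation of the three $0$-quantiles is $\mathcal{O}(p(3f/4))$'') conflates the lower bound $f$ with the actual branch length, and the Bernstein step for the pair $(1,2)$ is applied with an invalid variance proxy.

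This is not cosmetic: with the correct variance of $\widehat{s}_{12}$ (possibly $\Theta(1/|\mquant{2}|)$) paired with the gap as you quote it from the \emph{statement} of Proposition~\ref{prop:quantileTestPop} (namely $\const{6}\,p(3f/4)$), Bernstein only yields $\exp\left(-c\,|\mquant{2}|\,p(3f/4)^2\right)$, which is strictly weaker than the claimed bound whenever $p(3f/4)+\alpha\ll 1$. The paper escapes this by coupling both ingredients to the same quantity: its proof of Proposition~\ref{prop:quantileTestPop} actually establishes the stronger gap $s_{12}-\max\{s_{13},s_{23}\}\geq \const{6}\,\Pcondred\left[\mathcal{E}_{12|3}\right]$, and Lemma~\ref{lem:quantileTest-varianceUpperBound} proves $\overline{\mathrm{Var}}(\widehat{s}_{xy})\leq \frac{\const{8}'}{|\mquant{2}|}\left(\Pcondred\left[\mathcal{E}_{12|3}\right]+\phi+\alpha\right)$; since $x\mapsto x^2/(a+bx)$ is increasing in $x$, the substitution $\Pcondred\left[\mathcal{E}_{12|3}\right]\geq p(3f/4)$ is legitimate only at the very end, after forming the Bernstein exponent. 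Repairing your argument requires this $\Pcondred\left[\mathcal{E}_{12|3}\right]$-coupled version of both the gap and the variance. A secondary, smaller omission: your variance sketch accounts only for the MSC mass below the threshold, not for genes whose mean $q_{12}$ exceeds $\widehat{q}_\ast$ but whose empirical $\widehat{q}_{12}$ falls below it through sequence noise; the paper bounds that $\mathcal{O}(\alpha)$ contribution via Lemma~\ref{lem:cdfBehavior}.
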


\subsection{Proof of Proposition~\ref{prop:quantileBehavior}}
\label{sec:proofOfQuantileBehaviorLemma}
In this section we prove Proposition~\ref{prop:quantileBehavior}, which provides us a control over the behavior of the empirical quantiles computed in the first part of Algorithm~\ref{alg:questAdditiveQuantile}. 

\bigskip

\begin{prevproof}{Proposition}{prop:quantileBehavior}
	Proposition~\ref{prop:reduction-height-diff} guarantees that $\widehat{\Delta}_{xy}$ and $\Delta_{xy}$ are close. Therefore, using the fact that $p(\cdot)$ is a Lipschitz function, we know that there exists a constant $\const{5}'>0$ such that 
	$$\left| r_{xy}^{(0)} - q_{xy}^{(0)} \right|\leq \const{5}' \phi.$$
	The second containment in the statement of the lemma follows from this (after adjusting the constant appropriately). 
	
	We will prove the first part following along the lines of \cite{MosselRoch:15}.
	Let $W$ be the number of genes $i \in \mquant{1}$ that are such that $\widehat{q}_{xy}^i \leq {q}^{(0)}_{xy}$, and let $\widetilde{W}$ be the number of genes $i \in \mquant{1}$ such that $\widehat{q}_{xy}^i \leq q^{(0)}_{xy} + \const{5} \alpha$; we will choose $\const{5}$ below. Notice that the conclusion of Proposition~\ref{prop:quantileBehavior} follows if we show that there is a const $\const{3}>0$ such that $W \leq \left| \mquant{1} \right|\const{3} \alpha$ and $\widetilde{W} \geq \left| \mquant{1} \right| \const{3} \alpha$. So, we bound the probability that each of these events fail. 
	First, we restate the following lemma about the cumulative distribution function from~\cite{MosselRoch:15}.
	\begin{lemma}[CDF behavior~\cite{MosselRoch:15}]
		\label{lem:cdfBehavior}
		There exists a constant $\const{3}'>0$ such that 
		\begin{align*}
		\mathbb{P}\left[ \widehat{q}_{xy} \leq q^{(0)}_{xy} \right] \leq \frac{\const{3}'}{\sqrt{k}}
		\end{align*}	
	\end{lemma}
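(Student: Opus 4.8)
The plan is to prove this by conditioning on the sampled gene tree, reducing to a binomial tail bound, and then integrating against the bounded density of the gene distance. I would work throughout under the conditioning on the implications of Proposition~\ref{thm:reduction-full}, so that the Farris shifts $\widehat{\Delta}_{1x}, \widehat{\Delta}_{1y}$ are fixed bounded quantities; write $c = \widehat{\Delta}_{1x} + \widehat{\Delta}_{1y}$. Conditionally on the gene tree, the $k$ site-disagreement indicators defining $\widehat{q}_{xy}$ are i.i.d.\ Bernoulli with parameter $q_{xy} = p(\delta_{xy} + c)$, so $k\widehat{q}_{xy}$ is $\mathrm{Bin}(k, q_{xy})$. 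Since $\delta_{xy}^{(0)} = \mu_{xy}$, the threshold $q_{xy}^{(0)} = p(\mu_{xy} + c)$ is precisely the infimum of the conditional mean $q_{xy}$. Letting $D = \delta_{xy} - \mu_{xy} \ge 0$ be the excess gene distance,
$$
q_{xy} - q_{xy}^{(0)} = p(\mu_{xy} + c + D) - p(\mu_{xy} + c) \ge a\, D,
$$
for some constant $a > 0$, because from~\eqref{eq:definition-p} the derivative $p'(t) = e^{-4t/3}$ is bounded below by a positive constant on the bounded range of arguments (all branch lengths and mutation rates, hence $D$ and $c$, lie in fixed intervals).

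Next I would split according to the size of $D$, using that the density $f_D$ of $D$ is bounded above by a constant $\bar f$; this is exactly the content of the explicit gene-distance density formula $f_{ab}$ derived in Section~\ref{section:models}, which is bounded between positive constants. Writing the CDF as an average over $D$,
$$
\P\!\left[\widehat{q}_{xy} \le q_{xy}^{(0)}\right]
= \int_0^\infty \P\!\left[\widehat{q}_{xy} \le q_{xy}^{(0)} \,\middle|\, D = d\right] f_D(d)\,\mathrm{d}d,
$$
I would bound the integrand by $1$ on $[0, k^{-1/2}]$, contributing at most $\bar f\, k^{-1/2}$. On $(k^{-1/2}, \infty)$ the conditional mean $k q_{xy}$ exceeds the threshold $k q_{xy}^{(0)}$ by at least $a\,k\,d$, so Hoeffding's inequality~\cite{hoeffding1963probability} gives $\P[\widehat{q}_{xy} \le q_{xy}^{(0)} \mid D = d] \le \exp(-2a^2 k d^2)$. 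Hence, with the substitution $u = \sqrt{k}\,d$,
$$
\int_{k^{-1/2}}^\infty \exp(-2a^2 k d^2)\, f_D(d)\,\mathrm{d}d
\le \frac{\bar f}{\sqrt{k}} \int_1^\infty \exp(-2a^2 u^2)\,\mathrm{d}u
= O(k^{-1/2}).
$$
Combining the two regimes yields the claimed bound $\const{3}'/\sqrt{k}$.

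The computation is otherwise routine; the one point that genuinely requires care is the reduction to a binomial whose mean is at least $q_{xy}^{(0)} + \Theta(D)$. This hinges on two structural facts established earlier: that the minimal gene distance equals the species distance ($\delta_{xy}^{(0)} = \mu_{xy}$), so the threshold coincides with the lower endpoint of the support of the conditional mean; and that the MSC gene-distance density stays bounded near that endpoint rather than blowing up. The latter is the model-specific ingredient that forces the small-$D$ contribution to be $O(k^{-1/2})$ rather than larger—without it (say, if the density had an atom or singularity at $\mu_{xy}$) the $1/\sqrt{k}$ rate could fail. I would therefore invoke the explicit density formula from Section~\ref{section:models} rather than a generic continuity argument.
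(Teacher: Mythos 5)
The first thing to note is that the paper never proves Lemma~\ref{lem:cdfBehavior} at all---it is imported as a black box from \cite{MosselRoch:15}---so there is no internal proof to compare against. Your skeleton (condition on the gene tree so that $k\widehat{q}_{xy}\sim\mathrm{Bin}(k,q_{xy})$, use that $q_{xy}^{(0)}$ is the infimum of the conditional mean, and integrate a binomial tail bound against the bounded MSC density of $D=\delta_{xy}-\mu_{xy}$, splitting at scale $k^{-1/2}$) is the natural one and correctly isolates the model-specific ingredient. But one step fails as written: the claim $q_{xy}-q_{xy}^{(0)}\ge a\,D$ for \emph{all} $D\ge 0$, justified by ``$D$ and $c$ lie in fixed intervals.'' The excess $D$ is not bounded: it contains the coalescence waiting times of the MSC, which are exponentially distributed, so $D$ has unbounded support (as is also visible from the exponential tails of the density $f_{ab}$ in Section~\ref{section:models}). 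No global linear lower bound can hold, since $q_{xy}-q_{xy}^{(0)}\le 3/4$ while $a\,D\to\infty$. Consequently the conditional bound $\P\left[\widehat{q}_{xy}\le q_{xy}^{(0)}\mid D=d\right]\le\exp(-2a^{2}kd^{2})$, which you apply on all of $(k^{-1/2},\infty)$, does not follow from Hoeffding and is in fact false for large $d$: as $d\to\infty$ with $k$ fixed, $q_{xy}\to 3/4$ and the left-hand side converges to $\P\left[\mathrm{Bin}(k,3/4)\le k q_{xy}^{(0)}\right]\ge 4^{-k}>0$, a quantity that does not depend on $d$, whereas your claimed bound tends to $0$.

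The gap is local and the repair is short, but it must be made. Since $p$ is increasing and concave, the function $g(d)=p(\mu_{xy}+c+d)-p(\mu_{xy}+c)$ satisfies $g(d)\ge d\,g(1)$ for $d\in[0,1]$ and $g(d)\ge g(1)$ for $d\ge 1$, where $g(1)\ge a>0$ is a constant under your conditioning (only $\mu_{xy}$ and $c$ need to be bounded, and they are). So split into three regimes instead of two: on $[0,k^{-1/2}]$ your density bound gives $\bar{f}\,k^{-1/2}$; on $(k^{-1/2},1]$ Hoeffding with gap $a\,d$ gives exactly your Gaussian integral, which is $O(k^{-1/2})$; and on $(1,\infty)$ Hoeffding with the constant gap $g(1)$ gives $\exp(-2k\,g(1)^{2})\,\P[D>1]=o(k^{-1/2})$. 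Summing the three contributions yields the stated $\const{3}'/\sqrt{k}$. With this modification your argument is complete, and it is in the same spirit as the source of the lemma: binomial fluctuations at scale $k^{-1/2}$ combined with the fact that the MSC distance density is bounded near its minimum $\mu_{xy}$.
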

	\noindent Therefore, the above lemma implies that 
	\begin{align*}
	\mathbb{P}\left[ \widehat{q}_{xy}\leq {q}_{xy}^{(0)} \right] &\leq \frac{\const{3}'}{\sqrt{k}}\leq \const{3}' \alpha.
	\end{align*}
	On the other hand, for every constant $\const{5}''>0$, there is a constant $\const{3}''>0$ such that 
	\begin{align*}
	\mathbb{P}\left[\widehat{q}_{xy}\leq q_{xy}^{(0)} + \const{5}'' \alpha\right] &\geq \mathbb{P}\left[ \widehat{q}_{xy} \leq {q}_{xy}^{(0)} + \const{5}'' \alpha \middle | q_{xy} \in \left[ {q}_{xy}^{(0)}, {q}_{xy}^{(0)} + \const{5}'' \alpha \right]\right]\mathbb{P}\left[q_{xy}\in \left[ {q}_{xy}^{(0)}, {q}_{xy}^{(0)} + \const{5}'' \alpha \right] \right]\nonumber\\
	&\geq \const{3}'' \alpha,
	\end{align*}
	where the last inequality follows from the Berry-Ess\'een theorem (see e.g.~\cite{Durrett:96}) and~\eqref{lem:mscQuantile}. We choose  $\const{5}''$ large enough so that we can take $\const{3}'' > \const{3}'$. Then, we choose $\const{5}$ to be the maximum among $\const{5}'$ and $\const{5}''$; observe that the above inequality holds when $\const{5}''$ is replaced by $\const{5}$. 
	Finally, we set $\const{3} = \frac{\const{3}' + \const{3}''}{2}$.
	
	There is a constant $\const{4}>0$ such that  
	\begin{align}
	\mathbb{P}\left[ W \geq \left| \mquant{1} \right| \const{3} \alpha \right] &= \mathbb{P}\left[ W -  \left| \mquant{1} \right|\mathbb{P}\left[ \widehat{q}_{xy}\leq {q}_{xy}^{(0)} \right]\geq \left| \mquant{1} \right| \left(\const{3}\alpha - \mathbb{P}\left[ \widehat{q}_{xy}\leq  {q}_{xy}^{(0)} \right]\right)\right]\nonumber\\
	&\leq \mathbb{P}\left[ W -  \left| \mquant{1} \right|\mathbb{P}\left[ \widehat{q}_{xy}\leq  {q}_{xy}^{(0)} \right]\geq \left| \mquant{1} \right| \frac{(\const{3} - \const{3}')}{2}\alpha \right]\nonumber\\
	&\leq \exp\left( -\const{4} \left| \mquant{1} \right|\alpha \right), 
	\end{align}
	where the last step follows from Bernstein's inequality (see e.g.~\cite{boucheron2013concentration}). Similarly, it can be shown that 
	\begin{align}
	\mathbb{P}\left[ \widetilde{W} \leq  \left| \mquant{1} \right| \const{3} \alpha\right] &\leq \exp\left( -\const{4}  \left| \mquant{1} \right| \alpha\right).
	\end{align} 
	Now, a union bound over these two probabilities for each of the three pairs of leaves in $\mathcal{X}$ gives us the stated result. 
\end{prevproof}

\subsection{Proof of Proposition~\ref{prop:quantileTestPop}}
\label{sec:pop-quantile-test-proof}

In this section, we show that the expected version of the quantile test succeeds, and hence prove Proposition~\ref{prop:quantileTestPop}  . 

\bigskip

\begin{prevproof}{Proposition}{prop:quantileTestPop}
	Recall that we fix a triple of leaves $\mathcal{X} = \left\{ 1,2,3 \right\}$ such that their topology with respect to $S$ is given by $12|3$ without loss of generality. 

	Let $\mathcal{E}_{12|3}$ be the event that there is a coalescence in the internal branch and observe that $s_{12}$ can be decomposed as follows 
	\begin{align}
	s_{12} &= \Econdred [\widehat{s}_{12}] = \Pcondred \left[ \widehat{q}_{12} \leq \widehat{q}_\ast \right]\nonumber\\
	& = \Pcondred \left[ \mathcal{E}_{12|3}\right] \Pcondred \left[ \widehat{q}_{12} \leq \widehat{q}_\ast \middle | \mathcal{E}_{12|3}\right] + \Pcondred \left[\mathcal{E}_{12|3}^c  \right]\Pcondred \left[ \widehat{q}_{12} \leq \widehat{q}_\ast \middle | \mathcal{E}_{12|3}^c\right] \label{eq:conditionedOnCoalescence}
	\end{align}
	In the proof in \cite{MosselRoch:15}, instead of $\widehat{q}_{12}$, one deals with $\widehat{r}_{12}$; indeed, in the ultrametric setting, there is no difference correction. And it follows from the symmetries of the MSC (namely, the exchangeability of the lineages in a population) that $\Pcondred \left[ \widehat{r}_{12} \leq \widehat{q}_\ast \middle | \mathcal{E}_{12|3}^c\right] = \Pcondred \left[\widehat{r}_{13} \leq \widehat{q}_\ast  \right]$. This turns out to suffice to establish the expected version of the quantile test in~\cite{MosselRoch:15}. In our setting, however, we must control quantitatively the difference between these two probabilities due to the slack added by the reduction step (Algorithm~\ref{alg:questAdditiveReduction}). 
	\begin{lemma}[Closeness to symmetry]
		\label{prop:roos+msc}
		For $\widehat{q}_\ast$ as defined in Algorithm~\ref{alg:questAdditiveQuantile} and any constant $C_2>0$, there exist constants $\const{7}', \const{7}'' >0$ such that 
		\begin{align*}
		\left| \Pcondred \left[ \widehat{q}_{13} \leq \widehat{q}_\ast \right] - \Pcondred \left[ \widehat{q}_{12} \leq \widehat{q}_\ast \middle | \mathcal{E}_{12|3}^c\right]\right| \leq \phi_2 \triangleq \const{7}' \phi \sqrt{\log k},
		\end{align*}
		provided 
		\begin{align*}
		m &\geq \const{7}''\frac{1}{\phi\sqrt{\log k}} \log\left( \frac{1}{\phi \sqrt{\log k}} \right)\\
		k &\geq \left( \frac{1}{\phi} \right)^{1/C_2}.
		\end{align*}
	\end{lemma}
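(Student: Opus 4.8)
The plan is to transfer an \emph{exact} distributional symmetry available for the idealized statistic $\widehat r$ to the actual statistic $\widehat q$, and to quantify the error incurred by using the estimated height differences $\widehat\Delta_{1x}$ in place of the true $\Delta_{1x}$. Recall that $r^i_{xy}=p(\delta^i_{xy}+\Delta_{1x}+\Delta_{1y})$ corresponds to the \emph{true} Farris transform, under which $\{\mu_{xy}+\Delta_{1x}+\Delta_{1y}\}$ is ultrametric. In those coordinates, conditioned on $\mathcal{E}_{12|3}^c$ (no coalescence in the internal branch) all three lineages enter the root population at equal weighted height and are exchangeable there; moreover, by the memorylessness of the coalescent, the weighted height at which two \emph{prescribed} lineages merge in a single population has the same law whether two or three lineages are present. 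Combining these facts with the exchangeability of the Jukes--Cantor noise across the three leaves gives, exactly as in~\cite{MosselRoch:15}, the identity
\[
\Pcondred\!\left[\widehat r_{12}\le \widehat q_\ast\,\middle|\,\mathcal{E}_{12|3}^c\right]=\Pcondred\!\left[\widehat r_{13}\le \widehat q_\ast\right].
\]

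Inserting this identity into a telescoping bound, the middle term vanishes and we obtain
\begin{align*}
\bigl|\Pcondred[\widehat q_{13}\le\widehat q_\ast]-\Pcondred[\widehat q_{12}\le\widehat q_\ast\mid\mathcal{E}_{12|3}^c]\bigr|
&\le \bigl|\Pcondred[\widehat q_{13}\le\widehat q_\ast]-\Pcondred[\widehat r_{13}\le\widehat q_\ast]\bigr|\\
&\quad+\bigl|\Pcondred[\widehat q_{12}\le\widehat q_\ast\mid\mathcal{E}_{12|3}^c]-\Pcondred[\widehat r_{12}\le\widehat q_\ast\mid\mathcal{E}_{12|3}^c]\bigr|.
\end{align*}
It therefore suffices to bound each term by $O(\phi\sqrt{\log k})$; both are of the same form, so I focus on $\sup_t\bigl|\Pcondred[\widehat q_{xy}\le t]-\Pcondred[\widehat r_{xy}\le t]\bigr|$ with $t$ ranging over the controlled window $[q_{xy}^{(0)},q_{xy}^{(0)}+\const{5}\alpha]$ for $\widehat q_\ast$ guaranteed by Proposition~\ref{prop:quantileBehavior}. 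Since $\widehat q_\ast$ is built from $\mquant1$ while the single gene in each probability comes from $\mquant2$, under $\Pcondred$ we may treat $t=\widehat q_\ast$ as a fixed number in this window.

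The crux is the comparison of $\widehat q$ with $\widehat r$. Conditioned on a gene tree $G^{(i)}$, we have $k\widehat q^i_{xy}\sim\mathrm{Bin}(k,q^i_{xy})$ and $k\widehat r^i_{xy}\sim\mathrm{Bin}(k,r^i_{xy})$, while Proposition~\ref{prop:reduction-height-diff} and the Lipschitz property of $p$ give $|q^i_{xy}-r^i_{xy}|\le L\phi$ for a constant $L$. Differentiating the binomial CDF in its parameter, $\partial_\theta F_{\mathrm{Bin}(k,\theta)}(\lfloor tk\rfloor)=-k\,b_{k-1,\theta}(\lfloor tk\rfloor)$ with $b$ the binomial pmf, so
\[
\bigl|F_{\mathrm{Bin}(k,q^i_{xy})}(\lfloor tk\rfloor)-F_{\mathrm{Bin}(k,r^i_{xy})}(\lfloor tk\rfloor)\bigr|\le k\,|q^i_{xy}-r^i_{xy}|\,\sup_{\theta}b_{k-1,\theta}(\lfloor tk\rfloor).
\]
Taking expectation over the gene tree, the difference is at most $L\phi\cdot k$ times the local mass the \emph{mixture of binomials} places near $tk$. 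Because the density of the underlying MSC pairwise distance is bounded above and below (the estimate leading to~\eqref{lem:mscQuantile}), and by a local central limit / Berry--Ess\'een approximation of each conditional binomial, this local mass is $O(\sqrt{\log k}/k)$ at a threshold lying $\Theta(\sqrt{\log k})$ standard deviations above the minimal value $q_{xy}^{(0)}$; multiplying through yields the desired $O(\phi\sqrt{\log k})$ bound.

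The main obstacle is precisely this last step: controlling the CDF of a mixture of binomials near the threshold, the technical heart flagged in the introduction. One must prove a pmf-level bound on the mixture that is \emph{uniform} over $t$ in the window of Proposition~\ref{prop:quantileBehavior} — which is where the bounded-density property of MSC distances and the local CLT for the binomial enter — and one must ensure that $\widehat q_\ast$ is itself well localized, a property secured via Proposition~\ref{prop:quantileBehavior} by the lower bound $m\ge\const{7}''(\phi\sqrt{\log k})^{-1}\log\!\big((\phi\sqrt{\log k})^{-1}\big)$. The condition $k\ge(1/\phi)^{1/C_2}$, i.e.\ $\phi\ge k^{-C_2}$, guarantees $\log(1/\phi)=O(\log k)$, so that the $\sqrt{\log k}$ factor absorbs all logarithmic overhead cleanly, consistent with the standing assumption $k\gtrsim f^{-C}$.
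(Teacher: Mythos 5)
Your skeleton coincides with the paper's: the exact symmetry $\Pcondred[\widehat r_{12}\le\widehat q_\ast\mid\mathcal{E}_{12|3}^c]=\Pcondred[\widehat r_{13}\le\widehat q_\ast]$ (exchangeability and sampling consistency of the coalescent), then the same two-term triangle inequality, reducing the lemma to a CDF-perturbation bound for a mixture of binomials whose success probabilities are shifted by $O(\phi)$. The genuine gap is in how you execute that perturbation bound. You bound the per-gene-tree CDF difference by $k\,|q^i_{xy}-r^i_{xy}|\,\sup_{\theta}b_{k-1,\theta}(\lfloor tk\rfloor)$, with $\theta$ ranging (at best) over the interval between $q^i_{xy}$ and $r^i_{xy}$, and then assert that the expectation of this supremum (your ``local mass'') is $O(\sqrt{\log k}/k)$. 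That assertion fails when $\phi\gg\sqrt{\log k/k}$: whenever $q^i_{xy}$ falls within $O(\phi)$ \emph{below} the threshold $t$, the interval of width up to $\Theta(\phi)$ straddles $t$, and the supremum equals the global maximum $\Theta(1/\sqrt{k})$ of the pmf rather than anything resembling the mixture's pmf at $\lfloor tk\rfloor$. This straddling event can have probability of order $\phi$ (the threshold $\widehat q_\ast$ is a maximum over all three pairs, and the support minima $q^{(0)}_{uv}$ differ across pairs by $O(\phi)$ because of the errors in $\widehat{\Delta}$, so $t$ can sit $\Theta(\phi)$ above $q^{(0)}_{xy}$). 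Hence the expected supremum can be of order $\phi/\sqrt{k}$, and your final bound degrades to order $\phi^2\sqrt{k}$, which exceeds the target $\phi\sqrt{\log k}$ exactly when $\phi>\sqrt{\log k/k}$. That regime is not excluded by the hypotheses (they only force $\phi\ge k^{-C_2}$) and it genuinely occurs in the application: for long sequences with $f$ of constant order one takes $\phi=\Theta(f/\sqrt{\log k})\gg\sqrt{\log k/k}$.

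The paper avoids this trap by never letting the ``dangerous'' region depend on $\phi$: it splits according to which quantile band of the \emph{MSC} variable $p^i_{xy}$ falls in. The low band has probability $O(\sqrt{\log k/k})$ by construction (independent of $\phi$), and there Roos's total-variation bound $O(\phi\sqrt{k})$ is applied, giving $O(\phi\sqrt{\log k})$; the middle band contributes $O(\log m/m)$ and the high band $O(k^{-C_2})$ via Chernoff (the threshold sits at least a quantile gap below the parameter), which is precisely where the hypotheses on $m$ and $k$ are consumed. Alternatively, your derivative idea can be repaired, and in fact sharpened, by avoiding the supremum altogether: write $F_{\mathrm{Bin}(k,q)}(j)-F_{\mathrm{Bin}(k,q+\beta)}(j)=k\int_q^{q+\beta}b_{k-1,\theta}(j)\,\mathrm{d}\theta$, take expectations over the gene tree, and apply Fubini together with the bounded density of $q^i_{xy}$ near the threshold window and the identity $\int_0^1 b_{k-1,\theta}(j)\,\mathrm{d}\theta=1/k$; this yields $O(\phi)$ up to exponentially small tails, uniformly over the threshold's position in the window of Proposition~\ref{prop:quantileBehavior}. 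Either route closes the gap; as written, your key step does not establish the lemma.
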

	\noindent We prove this in Section~\ref{sec:proofOfRoos+MSC}.  
	
	Using the above lemma in~\eqref{eq:conditionedOnCoalescence}, we can now bound $s_{12}$ from below as follows
	\begin{align}
	s_{12} &\geq \Pcondred \left[ \mathcal{E}_{12|3}\right] \Pcondred \left[ \widehat{q}_{12} \leq \widehat{q}_\ast \middle | \mathcal{E}_{12|3}\right] + \Pcondred \left[\mathcal{E}_{12|3}^c  \right]\Pcondred \left[ \widehat{r}_{13} \leq \widehat{q}_\ast \right] - \phi_2\nonumber\\
	& = \Pcondred \left[ \mathcal{E}_{12|3}\right]\left( \Pcondred \left[ \widehat{q}_{12} \leq \widehat{q}_\ast \middle | \mathcal{E}_{12|3}\right] - s_{13}\right)  + s_{13} - \phi_2.\label{eq:pitstop1} 
	\end{align}
	This implies that $$s_{12} - s_{13}> \Pcondred \left[ \mathcal{E}_{12|3}\right]\left( \Pcondred \left[ \widehat{q}_{12} \leq \widehat{q}_\ast \middle | \mathcal{E}_{12|3}\right] - s_{13}\right) - \phi_2.$$ The expected version of the quantile test succeeds provided the latter quantity is bounded from below by $0$. We establish a better lower bound, which will be useful in the analysis of the sample version of the quantile test.  
	Towards this end, we will state the following lemma, which is proved in Section~\ref{sec:proofOfproposition1}. 
	\begin{lemma}[Bounds on tails]
		\label{prop:popLowerBoundCompletion}
		There exist positive constants $\const{6}'$ and $\const{6}''$ such that the following hold
		\begin{align*}
		\Pcondred \left[ \widehat{q}_{12} \leq \widehat{q}_\ast \middle | \mathcal{E}_{12|3}\right] &\geq \const{6}'\\
		s_{13} = \Pcondred \left[ \widehat{q}_{13} \leq \widehat{q}_\ast \right] &\leq \const{6}'' \alpha.
		\end{align*}	
	\end{lemma}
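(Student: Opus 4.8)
The plan is to use Proposition~\ref{prop:quantileBehavior} to pin down the threshold $\widehat{q}_\ast$ and then to analyze the two tail probabilities separately. First I would record a structural fact: since the Farris-transformed metric $(\dot{\mu}_{xy})$ is ultrametric and consistent with the topology $12|3$, we have $\dot{\mu}_{13} = \dot{\mu}_{23} > \dot{\mu}_{12}$, so $r_{13}^{(0)} = r_{23}^{(0)} = \max_{x,y} r_{xy}^{(0)}$. Taking the maximum over the three pairs in the two-sided containment of Proposition~\ref{prop:quantileBehavior} then localizes the threshold as $\widehat{q}_\ast \in [\,r_{13}^{(0)} - \const{5}\phi,\ r_{13}^{(0)} + \const{5}\phi + \const{5}\alpha\,]$, and since $|q_{13}^{(0)} - r_{13}^{(0)}| = O(\phi)$ this also reads $\widehat{q}_\ast = q_{13}^{(0)} + O(\phi + \alpha)$. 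Both claims of the lemma will follow from this localization together with control of the single-gene fluctuations.

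For the lower bound I would condition on $\mathcal{E}_{12|3}$, under which the lineages of $1$ and $2$ coalesce inside the internal branch. Exactly as in the proof of Proposition~\ref{lem:ident-fixing-top}, this gives $\delta_{12} = \mu_{12} + \Gamma_{12}$ with $0 \le \Gamma_{12} \le \dot{\mu}_{13} - \dot{\mu}_{12}$, the upper limit being twice the weighted internal-branch length $\tau_{rs}\mu_{rs} \ge \mu_L f$. Consequently $q_{12} = p(\delta_{12} + \widehat{\Delta}_{12}) \le p(\dot{\mu}_{13}) + O(\phi) = r_{13}^{(0)} + O(\phi)$ deterministically. Moreover, given $\mathcal{E}_{12|3}$, the variable $\Gamma_{12}$ is a truncated exponential with density bounded above, so the probability that $\Gamma_{12}$ falls short of its maximum by at least a fixed multiple of $\phi$ — which forces $q_{12} \le \widehat{q}_\ast$ — is $1 - O\!\left(\phi/(\dot{\mu}_{13}-\dot{\mu}_{12})\right) = 1 - O(\phi/f)$. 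Because $\phi = \Theta(f/\log f^{-1})$, this is bounded below by a constant (indeed tends to $1$); this is precisely where the logarithmic factor in $\phi$ is used. On this event $\widehat{q}_\ast \ge q_{12}$, and since $k\,\widehat{q}_{12}$ is $\mathrm{Bin}(k, q_{12})$ conditionally on the gene tree, anti-concentration of the binomial gives $\Pcondred[\widehat{q}_{12} \le \widehat{q}_\ast \mid \mathcal{E}_{12|3}] \ge \Pcondred[\mathrm{Bin}(k,q_{12}) \le k q_{12}] \ge \const{6}'$ for a universal constant $\const{6}'$.

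For the upper bound, using $\widehat{q}_\ast = q_{13}^{(0)} + O(\phi + \alpha)$ and the fact that $q_{13}^{(0)} = p(\mu_{13} + \widehat{\Delta}_{13})$ is the infimum of the support of $q_{13}$ (attained as $\delta_{13} \downarrow \mu_{13}$, since $1$ and $3$ can coalesce only in the root population), it suffices to bound $\Pcondred[\widehat{q}_{13} \le q_{13}^{(0)} + C(\phi + \alpha)]$ for a suitable constant $C$. I would split this over the random value of $q_{13}$: the MSC quantile-spacing estimate~\eqref{lem:mscQuantile} gives $\Pr[q_{13} \le q_{13}^{(0)} + t] = O(t)$, while conditionally on $q_{13}$ the sequence noise is a centered binomial of standard deviation $\Theta(1/\sqrt k)$, controlled by the CDF-behavior estimate (Lemma~\ref{lem:cdfBehavior}) together with Berry-Ess\'een. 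Combining the two sources yields $\Pcondred[\widehat{q}_{13} \le q_{13}^{(0)} + C(\phi+\alpha)] = O(1/\sqrt k) + O(\phi + \alpha)$; since $1/\sqrt k \le \alpha$ by definition of $\alpha$ and $\phi = O(\alpha)$ in the regime of interest (as $\phi = O(p(3f/4)/\sqrt{\log k})$ while $\alpha \ge \sqrt{(\log k)/k}$), the right-hand side is $O(\alpha)$, giving $s_{13} \le \const{6}'' \alpha$.

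The main obstacle is the second claim: one must marry the MSC randomness with the Jukes-Cantor sequence noise into a single tail estimate for $\widehat{q}_{13}$, i.e.\ bound the CDF of a mixture of binomials whose mixing law has only a bounded density at the left endpoint of its support. This is the step that generalizes the ultrametric analysis of~\cite{MosselRoch:15}, and the extra care here is to verify that the slack $\phi$ introduced by the stochastic Farris transform is dominated by $\alpha$. The choice $\phi = \Theta(f/\log f^{-1})$ (equivalently $O(p(3f/4)/\sqrt{\log k})$) is exactly what makes both the constant lower bound of the first claim and the $O(\alpha)$ upper bound of the second go through, and these two bounds are what feed into the expected-version estimate~\eqref{eq:conditionedOnCoalescence} in Proposition~\ref{prop:quantileTestPop}.
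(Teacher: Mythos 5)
Your overall route is the same as the paper's: both inequalities are obtained by localizing $\widehat{q}_\ast$ via Proposition~\ref{prop:quantileBehavior} and then treating the MSC randomness and the binomial sequence noise separately. For the lower bound, your steps---conditioning on $\mathcal{E}_{12|3}$, noting that on this event $q_{12} \leq r_{13}^{(0)} + O(\phi)$ deterministically, using the bounded density of the (truncated-exponential) coalescence height to show $q_{12}$ sits a fixed multiple of $\phi$ below $\widehat{q}_\ast$ with probability $1 - O(\phi/f)$, and finishing with binomial anti-concentration---are exactly the paper's Observations 1--4 and its chain $(a)$--$(d)$, merely phrased through $\Gamma_{12}$ rather than through the conditional quantile spacing~\eqref{lem:mscQuantile} for $r_{12}$ given $\{r_{12} \leq r_{13}^{(0)}\}$; both versions invoke $\phi \ll p(3f/4)$ (the log factor in $\phi$) at the same place.

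The step I must flag is in the second claim: your assertion that $\phi = O(\alpha)$ is not valid in general. From $\phi = O\left(p(3f/4)/\sqrt{\log k}\right)$ and $\alpha \geq \sqrt{(\log k)/k}$ one only gets $\phi/\alpha \lesssim f\sqrt{k}/\log k$, which is unbounded as soon as $k \gg f^{-2}\log^2 k$---and this ``long-sequence'' regime is squarely within scope (it is one of the two regimes of Theorem~\ref{thm:main}). The slack is real, not an artifact of the estimate: $\widehat{q}_\ast$ is a maximum over the three pairs, and the error in $\widehat{\Delta}_{12}$ can push $q_{23}^{(0)}$ a full $\Theta(\phi)$ above $q_{13}^{(0)}$, so $s_{13}$ can genuinely be of order $\phi$. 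What your argument correctly delivers is $s_{13} = O(\phi + \alpha)$, not $O(\alpha)$. In fairness, the paper's own proof of this inequality is a one-sentence appeal to~\eqref{lem:mscQuantile} that silently drops the same $\phi$ term, so you are no worse off than the source; and the weaker bound is all that is needed downstream: in~\eqref{eq:pitstop1} one only needs $s_{13}$ below a small constant (and $\phi, \alpha = o(1)$), while in the sample-version analysis the $\phi$ contribution is absorbed into $p(3f/4)$ (as in Lemma~\ref{lem:quantileTest-varianceUpperBound}), not into $\alpha$. So either add the regime restriction $k \lesssim f^{-2}\log^2 k$ under which your inference holds, or weaken the second claim to $s_{13} \leq \const{6}''(\phi + \alpha)$ and check (as above) that this suffices where the lemma is used.
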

	\noindent The first inequality captures the intuition that, conditioned on the coalescence, the probability of $\widehat{q}_{12}$ being small is high. The second inequality captures the intuition that since $\widehat{q}_\ast$ behaves roughly like  $q_{13}^{(\alpha)} = p(\delta_{13}^{(\alpha)} + \widehat{\Delta}_{13})$, the event that $\widehat{q}_{13}\leq \widehat{q}_\ast$ is dominated by the event that the underlying MSC random variable satisfies the same inequality (the deviations of the JC random variable on top of this being of order $k^{-0.5}$). 
	
	Notice that, if we use Lemma~\ref{prop:popLowerBoundCompletion} in \eqref{eq:pitstop1}, there is a constant $\const{6}>0$
	\begin{align*}
	s_{12} - s_{13} \geq \const{6}\, \Pcondred \left[ \mathcal{E}_{12|3} \right]
	\end{align*}
	provided $\phi_2 \leq \const{6} p(3f/4)$ for a large enough $c_6 > 0$, where we used that $\Pcondred \left[ \mathcal{E}_{12|3} \right]$ is lower bounded by $p(3f/4)$. This, along with a similar argument for $s_{23}$, concludes the proof of  Proposition~\ref{prop:quantileTestPop}. 
\end{prevproof}

\subsubsection{Proof of Lemma~\ref{prop:roos+msc}}
\label{sec:proofOfRoos+MSC}

In this section, we prove Lemma~\ref{prop:roos+msc} which is key to accounting for the slack added in the reduction phase of Algorithm~\ref{alg:questAdditiveReduction}. 

\bigskip

\begin{prevproof}{Lemma}{prop:roos+msc}
	First observe that $\Pcondred \left[ \widehat{r}_{13} \leq \widehat{q}^\ast \middle | \mathcal{E}_{12|3}^c \right] = \Pcondred \left[ \widehat{r}_{13} \leq \widehat{q}^\ast \right]$. We prove Lemma~\ref{prop:roos+msc} by arguing that 
	 $\Pcondred \left[ \widehat{q}_{12} \leq \widehat{q}^\ast \middle | \mathcal{E}_{12|3}^c \right]$ is close to $\Pcondred \left[ \widehat{r}_{13} \leq \widehat{q}^\ast \middle | \mathcal{E}_{12|3}^c \right]$, and that
	 $\Pcondred \left[  \widehat{q}_{13} \leq \widehat{q}^\ast  \right]$  is close to $\Pcondred \left[ \widehat{r}_{13} \leq \widehat{q}^\ast \right]$. Both these statements follow from Lemma~\ref{lem:roos+msc} below.
	
	For a pair of leaves $x, y \in \mathcal{X}$, let $\delta_{xy}$ is the distance between $x$ and $y$ on a random gene tree drawn according to the MSC, and let $p_{xy} = p(\delta_{xy})$ denote the corresponding expected $p$-distance. \textit{Conditioned on the value of $\delta_{xy}$}, suppose that we have two Bernoulli random variables $J_1\sim {\rm Bin}(k, p_{xy})$ and $J_2\sim {\rm Bin}(k,p_{xy} + \beta)$, for some fixed $\beta\in (0, 1-p_{xy})$. Then we have the following. 
	
	\begin{lemma}[Mixture of binomials: CDF perturbation]
		\label{lem:roos+msc}
		Suppose that we are given constants $\const{14},\gamma>0$ such that $\gamma < p_{xy}^{(\const{14} \alpha)}$. Then, for any constant $C_2>0$ there exist  constants $\const{13},\const{13}' >0$ such that the following holds
		\begin{align*}
		\left| \Pcondred \left[ J_1 \leq k \gamma \right] - \Pcondred \left[ J_2 \leq k \gamma \right]\right| \leq \const{13}' \beta \sqrt{\log k},
		\end{align*}
		provided 
		\begin{align*}
		m &\geq \const{13}\frac{1}{\beta\sqrt{\log k}} \log\left( \frac{1}{\beta \sqrt{\log k}} \right)\\
		k &\geq \left( \frac{1}{\beta} \right)^{1/C_2}.
		\end{align*}
		
	\end{lemma}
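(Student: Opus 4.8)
My plan is to separate the two sources of randomness. I first fix the gene tree, i.e.\ condition on $\delta_{xy}$ and hence on $p_{xy}$, so that the left-hand side becomes the average over the MSC law of the deterministic quantity $D(p_{xy}) \triangleq \left| \P[\mathrm{Bin}(k,p_{xy})\le k\gamma] - \P[\mathrm{Bin}(k,p_{xy}+\beta)\le k\gamma]\right|$. The engine is the exact identity $\frac{\partial}{\partial t}\P[\mathrm{Bin}(k,t)\le j] = -k\,\P[\mathrm{Bin}(k-1,t)=j]$, valid for $0\le j<k$. Integrating it from $p_{xy}$ to $p_{xy}+\beta$ gives $D(p_{xy}) = \int_{p_{xy}}^{p_{xy}+\beta} k\,\P[\mathrm{Bin}(k-1,t)=\lfloor k\gamma\rfloor]\,\mathrm{d}t \le \beta\,\max_{t\in[p_{xy},\,p_{xy}+\beta]} k\,\P[\mathrm{Bin}(k-1,t)=\lfloor k\gamma\rfloor]$. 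This is the Roos-type half of the argument: the sensitivity of a binomial CDF to its parameter is controlled by a single point mass.

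Next I would bound that point mass by two complementary Stirling estimates: a universal peak bound $\P[\mathrm{Bin}(k-1,t)=j]\le C/\sqrt{k}$, and a tail bound showing that once the location $\lfloor k\gamma\rfloor$ sits at least $\Delta_0\, k$ below the mean $\approx kt$, the mass decays like $C\,k^{-1/2}\exp(-c\,k\Delta_0^2)$. Consequently $k\,\P[\mathrm{Bin}(k-1,t)=\lfloor k\gamma\rfloor]$ is at most $C\sqrt{k}$ always, and becomes $O(1)$ once $t>\gamma+\Delta_0$ with a suitable margin. I would then take the expectation over $p_{xy}$ under the MSC and split the realizations at $p_{xy}=\gamma+\Delta_0$, with the margin chosen as $\Delta_0=\Theta(\sqrt{\log k/k})$. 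On the far event $\{p_{xy}\ge\gamma+\Delta_0\}$ the tail bound makes $k\,\P[\cdots]=O(1)$, so this event contributes $O(\beta)$. On the near event $\{p_{xy}<\gamma+\Delta_0\}$ I can only use the crude peak bound $k\,\P[\cdots]\le C\sqrt{k}$, but here the hypothesis $\gamma<p_{xy}^{(\const{14}\alpha)}$ pays off: combined with the quantile-spacing estimate~\eqref{lem:mscQuantile} it gives $\Pcondred[p_{xy}<\gamma+\Delta_0]=O(\const{14}\alpha+\Delta_0)=O(\alpha)$, so this event contributes $O(\beta\sqrt{k}\,\alpha)$.

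To finish, the two side conditions calibrate the scales so that both contributions land at $O(\beta\sqrt{\log k})$. The condition $k\ge(1/\beta)^{1/C_2}$ keeps $\log(1/\beta)=O(\log k)$, which is what is needed for the far-region tail exponent to beat the target accuracy while keeping the margin at $\Delta_0=\Theta(\sqrt{\log k/k})$, so that $\sqrt{k}\,\Delta_0=\Theta(\sqrt{\log k})$. The lower bound on $m$ controls $\alpha=\max\{m^{-1}\log m,\sqrt{\log k/k}\}$ so that $\sqrt{k}\,\alpha=O(\sqrt{\log k})$, taming the near-region term; tracking the resulting constant yields $\const{13}'$ and the stated bound $\le \const{13}'\beta\sqrt{\log k}$.

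The delicate point — and the reason one gets $\sqrt{\log k}$ rather than the $O(\beta)$ a naive Fubini computation might suggest — is the near-mode region. When $p_{xy}$ lies within $O(\sqrt{\log k/k})$ of $\gamma$, the point mass $\P[\mathrm{Bin}(k-1,t)=\lfloor k\gamma\rfloor]$ is of the maximal order $1/\sqrt{k}$, so $k$ times it is as large as $\sqrt{k}$ and cannot be improved pointwise. The whole argument therefore hinges on showing that this dangerous region carries only $O(\alpha)$ of the MSC mass, which is exactly where the low-quantile hypothesis $\gamma<p_{xy}^{(\const{14}\alpha)}$ and the spacing bound~\eqref{lem:mscQuantile} enter; balancing its $\sqrt{k}$ factor against the far-region tail decay is what pins the margin at $\Delta_0\asymp\sqrt{\log k/k}$ and produces the $\sqrt{\log k}$ in the final estimate. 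I expect the careful accounting of this near-mode contribution, together with matching it to the prescribed $m$ and $k$ conditions, to be the main obstacle.
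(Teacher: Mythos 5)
Your overall architecture---condition on $p_{xy}$, bound the pointwise sensitivity of the binomial CDF in its parameter by $O(\beta\sqrt{k})$ (your derivative identity playing the role of the paper's appeal to Roos's bound, Lemma~\ref{lem:roos}), then split the MSC realizations of $p_{xy}$ into a region near $\gamma$ and a region far from $\gamma$---parallels the paper's proof, and your far-region treatment is sound (your point-mass/Hoeffding argument even yields a contribution of $O(\beta)$ there directly, slightly cleaner than the paper's Chernoff estimate of $O(k^{-C_2})$). The genuine gap is in your near-region calibration. You claim that the hypothesis on $m$ forces $\sqrt{k}\,\alpha = O(\sqrt{\log k})$, i.e.\ $m^{-1}\log m = O(\sqrt{\log k/k})$. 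It does not: the hypothesis only yields $m^{-1}\log m \lesssim \beta\sqrt{\log k}$, and nothing in the lemma forces $\beta \lesssim 1/\sqrt{k}$ (the only constraint is $\beta \geq k^{-C_2}$; in the paper's application $\beta = \Theta(\phi)$ is typically polynomially larger than $1/\sqrt{k}$). Concretely, take $\beta = k^{-1/10}$, $C_2 = 1$, $\gamma$ just below $p_{xy}^{(\const{14}\alpha)}$, and $m$ at the smallest permitted value, so that $\alpha \asymp m^{-1}\log m \asymp \beta\sqrt{\log k} \gg \sqrt{\log k/k}$. The bound your argument produces for the near region is then of order
\begin{equation*}
\beta\sqrt{k}\cdot\Pcondred\left[p_{xy} < \gamma + \Delta_0\right]
\asymp \beta\sqrt{k}\,\alpha
\asymp \beta^2\sqrt{k\log k}
= k^{3/10}\sqrt{\log k},
\end{equation*}
which not only exceeds the target $\const{13}'\beta\sqrt{\log k} = \const{13}'k^{-1/10}\sqrt{\log k}$ by a polynomial factor but diverges, so your proof does not establish the stated inequality in this regime. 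The problem is that when $\alpha$ is driven by $m^{-1}\log m$ rather than by $\sqrt{\log k/k}$, the near region carries too much MSC mass for the crude $\beta\sqrt{k}$ sensitivity bound.

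The repair is exactly the paper's intermediate regime $I_2$, which is why its proof has three regions rather than your two. Split your near region at the $\Theta(\sqrt{\log k/k})$-quantile of $p_{xy}$: below that level (the paper's $I_1$) the MSC mass is $O(\sqrt{\log k/k})$ and your $\beta\sqrt{k}$ bound gives $O(\beta\sqrt{\log k})$, as in your accounting; in the band between that level and the $\Theta(\alpha)$-quantile (the paper's $I_2$, nonempty precisely when $m^{-1}\log m > \sqrt{\log k/k}$) you must abandon the sensitivity bound and use the trivial estimate $\left|\Pcondred\left[J_1 \leq k\gamma\right] - \Pcondred\left[J_2 \leq k\gamma\right]\right| \leq 1$, so that this band contributes only its MSC mass, $O(\alpha) = O(m^{-1}\log m)$, which the condition on $m$ then bounds by $O(\beta\sqrt{\log k})$. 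With this single change, all three contributions are $O(\beta\sqrt{\log k})$ and the lemma follows; note that the condition on $m$ enters the argument through the trivial bound on a probability band, not through any control of $\sqrt{k}\,\alpha$.
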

	\noindent Observe that, although $J_1$ and $J_2$ above do not depend on $m$, $\gamma$---through $\alpha$---does.
	While we stated the lemma in terms of $p_{xy}$, this lemma applies to Farris-transformed variables $q_{xy}$ and $r_{xy}$ as well. We prove this lemma at the end of this section. Notice, first, that this result implies that there exists a constant $\const{7}'>0$ such that 
	\begin{align*}
	\left| \Pcondred \left[ \widehat{r}_{13} \leq \widehat{q}^\ast \right] - \Pcondred \left[  \widehat{q}_{13} \leq \widehat{q}^\ast  \right] \right| &\leq \frac{\const{7}'}{2}\phi\sqrt{\log k}\nonumber\\
	\left| \Pcondred \left[ \widehat{q}_{12} \leq \widehat{q}^\ast \middle | \mathcal{E}_{12|3}^c \right] - \Pcondred \left[ \widehat{r}_{13} \leq \widehat{q}^\ast \middle | \mathcal{E}_{12|3}^c \right]\right| &\leq \frac{\const{7}'}{2}\phi\sqrt{\log k}\nonumber.
	\end{align*}
	To see why this is true, first observe that $\widehat{q}^\ast \leq q^{(0)}_{13} + \const{5} \alpha \leq q_{13}^{(\const{5}' \alpha)}$; the first inequality follows from Proposition~\ref{prop:quantileBehavior}, and the second inequality follows from~\eqref{lem:mscQuantile}. This is also true (up to a factor of $\phi$) if we replace the r.h.s.~random variables by $r$s. So we can take $\gamma = \widehat{q}^\ast$. Using this, and taking $\beta$ to be $\mathcal{O}(\phi)$), we get the above two inequalities. This concludes the proof of Lemma~\ref{prop:roos+msc}. 
\end{prevproof}

\begin{prevproof}{Lemma}{lem:roos+msc}
All that remains is to prove  Lemma~\ref{lem:roos+msc}. Before doing this, we prove an auxiliary lemma which characterizes the difference between two binomial distributions in terms of the difference of the underlying probabilities, i.e., we condition on $p_{xy}$. This follows from the work of Roos~\cite{roos2001binomial}. For $J_1$ and $J_2$ as defined above: 
\begin{lemma}[Binomial: CDF perturbation]
	\label{lem:roos}
	For any $\gamma\in (0,1)$, we have 
	\begin{equation*}
	\left|\Pcondred \left[ J_1 \leq k \gamma\middle | p_{xy}\right] - \Pcondred \left[ J_2 \leq k \gamma\middle | p_{xy}\right]\right| \leq \frac{2\sqrt{2} e \sqrt{k+2}}{\sqrt{ (p_{xy}+\beta) (1 -p_{xy}-\beta)}} \beta.
	\end{equation*}
\end{lemma}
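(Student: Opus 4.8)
The plan is to treat $p_{xy}$ as fixed, since we are conditioning on it, so that the statement reduces to a purely analytic comparison between the two fixed binomial laws $\mathrm{Bin}(k,p_{xy})$ and $\mathrm{Bin}(k,p_{xy}+\beta)$ evaluated at the common threshold $m=\lfloor k\gamma\rfloor$. Writing $F_s(m)=\sum_{j=0}^m\binom{k}{j}s^j(1-s)^{k-j}$ for the binomial tail as a function of the success parameter $s$, the natural idea is to bound the increment $F_{p_{xy}+\beta}(m)-F_{p_{xy}}(m)$ by integrating $\partial_s F_s(m)$ over $s\in[p_{xy},p_{xy}+\beta]$ (equivalently, by the mean value theorem applied in $s$).

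The key computational input is the classical telescoping identity for the derivative of the binomial tail. Using $\binom{k}{j}j=k\binom{k-1}{j-1}$ and $\binom{k}{j}(k-j)=k\binom{k-1}{j}$, the two reindexed sums in $\partial_s F_s(m)$ cancel except for a single surviving term, giving $\partial_s F_s(m)=-k\binom{k-1}{m}s^m(1-s)^{k-1-m}$. Consequently $|F_{p_{xy}+\beta}(m)-F_{p_{xy}}(m)|=\int_{p_{xy}}^{p_{xy}+\beta}k\binom{k-1}{m}s^m(1-s)^{k-1-m}\,\mathrm{d}s\le \beta\,\sup_{s\in[p_{xy},\,p_{xy}+\beta]}k\,g_{k-1,s}(m)$, where $g_{n,s}(\cdot)$ is the $\mathrm{Bin}(n,s)$ probability mass function. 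The remaining task is to bound a single binomial point mass. I would invoke a sharp non-asymptotic estimate—this is exactly where the work of Roos~\cite{roos2001binomial} enters—namely that the $\mathrm{Bin}(n,s)$ mass is at most of order $1/\sqrt{n\,s(1-s)}$; combined with the prefactor $k$ and $n=k-1$ this yields the shape $\sqrt{k+2}\,/\sqrt{(p_{xy}+\beta)(1-p_{xy}-\beta)}$, while Roos's explicit Krawtchouk-expansion constants supply the clean numerical factor $2\sqrt{2}\,e$.

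The main obstacle is obtaining the bound with the \emph{exact} denominator $(p_{xy}+\beta)(1-p_{xy}-\beta)$ and the explicit constant, rather than a crude Stirling estimate. Bounding the point mass by its global maximum introduces the variance at the worst $s\in[p_{xy},p_{xy}+\beta]$, which need not equal $(p_{xy}+\beta)(1-p_{xy}-\beta)$ (for instance when $p_{xy}+\beta<1/2$ the extremal variance is attained at the left endpoint instead); controlling the integrand uniformly over the integration interval and pinning down the constant is precisely what Roos's quantitative estimate provides. Since only the scaling $\lesssim \sqrt{k}\,\beta/\sqrt{(p_{xy}+\beta)(1-p_{xy}-\beta)}$ is needed downstream, the cleanest route is to cite Roos's bound directly; the derivative computation above makes transparent why an inequality of exactly this form must hold.
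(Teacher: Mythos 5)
Your derivative route is sound as far as it goes---the telescoping identity $\partial_s F_s(m)=-k\binom{k-1}{m}s^m(1-s)^{k-1-m}$ is correct, and integrating it over $s\in[p_{xy},p_{xy}+\beta]$ is a legitimate way to compare the two CDFs---but what it yields is a genuinely different inequality from the one stated, and the difference cannot be absorbed into constants. Bounding the integrand by the maximal point mass at each $s$ produces the denominator $\sqrt{\min_{s\in[p_{xy},p_{xy}+\beta]}s(1-s)}$, which equals $\sqrt{p_{xy}(1-p_{xy})}$ whenever $p_{xy}+\beta\le 1/2$, and this can be smaller than $\sqrt{(p_{xy}+\beta)(1-p_{xy}-\beta)}$ by an unbounded factor. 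Concretely, with $p_{xy}=k^{-2}$ and $\beta=k^{-3/2}$ your bound is $\Theta(1)$ while the lemma's right-hand side is $\Theta(k^{-1/4})$, so the stated estimate is strictly stronger than anything your argument produces; this is not a matter of ``pinning down the constant.'' Moreover, your proposed fix---``cite Roos's bound directly''---mischaracterizes what \cite{roos2001binomial} supplies: Roos does not give a uniform-in-$s$ bound on a binomial point mass; he gives a total variation bound between a Poisson-binomial distribution and an approximating binomial, obtained from a Krawtchouk expansion. So as written, the proposal proves a weaker lemma and defers the actual statement to a reference that does not contain it in the form you describe.

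The paper's proof is different and short: it bounds the CDF difference by $\left\| \mathrm{Bin}(k,p_{xy})-\mathrm{Bin}(k,p_{xy}+\beta) \right\|_1$ and applies equation (15) of \cite{roos2001binomial} (with the Poisson-binomial there specialized to $\mathrm{Bin}(k,p_{xy})$), which gives the bound $\sqrt{e}\,\sqrt{\theta}/(1-\sqrt{\theta})^2$ with $\theta=\beta^2(k+2)/[2(p_{xy}+\beta)(1-p_{xy}-\beta)]$, valid when $\theta<1$; it then splits into two cases. If $\beta\le\sqrt{(p_{xy}+\beta)(1-p_{xy}-\beta)/(2(k+2))}$, then $1-\sqrt{\theta}\ge 1/2$ and the stated bound follows; otherwise the stated right-hand side exceeds $2$ and the inequality holds trivially. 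This trivial-case observation is exactly what handles the regime $p_{xy}\ll\beta$ in which your point-mass bound degrades, and it is the step missing from your outline. That said, your approach is salvageable for the paper's purposes: in the only place the lemma is used (the proof of Lemma~\ref{lem:roos+msc}), $p_{xy}$ is bounded away from $0$ and $\beta=O(\phi)$ is small, so $p_{xy}(1-p_{xy})$ and $(p_{xy}+\beta)(1-p_{xy}-\beta)$ are comparable and your weaker inequality (supplemented by the same triviality observation) would suffice---but it is not a proof of Lemma~\ref{lem:roos} as stated.
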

\begin{proof}
	It holds that 
	\begin{align*}
	\left|\Pcondred \left[ J_1 \leq k \gamma\middle | p_{xy}\right] - \Pcondred \left[ J_2 \leq k \gamma\middle | p_{xy}\right]\right|
	&\leq \left\| {\rm Bin}(k,p_{xy}) - {\rm Bin}(k,p_{xy}+\beta) \right\|_1\\
	&\leq \sqrt{e} \frac{\sqrt{\theta(p_{xy}+\beta)}}{\left( 1 - \sqrt{\theta(p_{xy}+\beta)} \right)^2}, \;\; \mbox{if }\theta(p_{xy}+\beta) < 1
	\end{align*}
	where $\theta(p_{xy}+\beta) = \frac{\beta^2 (k+2)}{2 (p_{xy}+\beta) (1 - p_{xy}-\beta)}$, and the above inequality comes from \cite[(15)]{roos2001binomial},  by setting $s = 0$ there, and choosing the Poisson-Binomial distribution to simply be the binomial distribution Bin$(k,p_{xy})$. If $ \beta  \leq \sqrt{\frac{(p_{xy}+\beta) (1 -p_{xy}-\beta)}{2 (k+2)}}$, then $1 - \sqrt{\theta(p_{xy}+\beta)} \geq 0.5$. In this case, we have 
	\begin{align*}
	\left| \Pcondred \left[ J_1 \leq k \gamma\middle | p_{xy}\right] - \Pcondred \left[ J_2 \leq k \gamma\middle | p_{xy}\right]\right| \leq   \frac{2\sqrt{2} e \sqrt{k+2}}{\sqrt{ (p_{xy}+\beta) (1 -p_{xy}-\beta)}}\beta.
	\end{align*}
	On the other hand, if $\beta > \sqrt{\frac{(p_{xy}+\beta) (1 - p_{xy}-\beta)}{2 (k+2)}}$, then since the difference between two probabilities is upper bounded by $2$, the following upper bound holds trivially 
	\begin{align*}
	\left|\Pcondred \left[ J_1 \leq k \gamma\middle | p_{xy}\right] - \Pcondred \left[ J_2 \leq k \gamma\middle | p_{xy}\right]\right| \leq \frac{2\sqrt{2} \sqrt{k+2}}{\sqrt{(p_{xy}+\beta) (1 - p_{xy}+\beta)}} \beta.
	\end{align*}
	This concludes the proof. 
\end{proof}
We cannot directly apply Lemma~\ref{lem:roos} to prove Lemma~\ref{lem:roos+msc} since the $\sqrt{k+2}$ factor on the upper bound is too loose for our purposes. Instead, we employ a more careful argument that splits the domain of the underlying MSC random variables.

	First, recall that $\alpha = \max\left\{ \sqrt{\frac{\log k}{k}}, \frac{\log m}{m}\right\}$. Now, consider the following partition of the domain of $p_{xy}$; we will choose $\constf$ below.  
	\begin{align*}
	I_1 &= \left[ p_{xy}^{(0)}, p_{xy}^{(2\const{14} \sqrt{\frac{\log k}{k}})} \right],\; \mbox{\bf low substitution regime for small $k$}\\
	I_2 &= \left[p_{xy}^{(2\const{14} \sqrt{\frac{\log k}{k}})}, p_{xy}^{(2\const{14} \alpha)}\right], \; \mbox{\bf low substitution regime for large $k$; (empty if $\alpha = \sqrt{\frac{\log k}{k}}$)}\\
	I_3&= \left[ p_{xy}^{(2\const{14} \alpha)},0.5\right],\; \mbox{\bf high substitution regime}
	\end{align*}
	Now, we proceed by bounding the above difference in each of these intervals. \\

	\noindent\underline{\bf Low substitution regime, small $k$ ($p\in I_1$)}\\
	In this case, we use the fact that Lemma~\ref{lem:roos} guarantees that the binomial distributions are $\mathcal{O}(\beta\sqrt{k})$ apart. That is, there exists a constant $\const{15}' > 0$ such that 
	\begin{align}
	\Pcondred \left[ p_{xy}\in I_1 \right]\Econdred \left[\left| \Pcondred \left[ J_1 \leq k \gamma \right] - \Pcondred \left[ J_2 \leq k \gamma \right]\right|\middle | p_{xy}\in I_1\right]&\leq \Pcondred \left[ p_{xy}\in I_1 \right] \const{15}' \beta \sqrt{k+2}\nonumber\\
	&\leq \const{15}' \beta\sqrt{\log k},\nonumber
	\end{align}
	where the last step follows after appropriately increasing the constant $\const{15}'$. This follows from the definition of a quantile. 
	
	\bigskip
	
	\noindent\underline{\bf Low substitution regime, large $k$ ($p\in I_2$)}\\
	Notice that if $\alpha = \sqrt{\frac{\log k}{k}}$, then this interval is empty. In the case that it is not, there exists a constant $\const{15}'' >0$ such that 
	\begin{align}
	\Pcondred \left[ p_{xy}\in I_2 \right]\mathbb{E}\left[\left| \Pcondred \left[ J_1 \leq k\gamma \right] - \Pcondred \left[ J_2 \leq k\gamma \right]\right|\middle | p_{xy}\in I_2\right]&\leq \Pcondred [p_{xy}\in I_2]\nonumber\\
	&\stackrel{(a)}{\leq} \const{15}'' \alpha \stackrel{(b)}{\leq} \const{15}''\frac{\log m}{m}.\nonumber
	\end{align}
	$(a)$ follows from~\eqref{lem:mscQuantile}, and $(b)$ follows from the definition of $\alpha$.

	\bigskip
	
	\noindent\underline{\bf High substitution regime ($p\in I_3$)}\\
	In this case observe that, since $\gamma < p_{xy}^{(\const{14} \alpha)}$ (i.e., we are looking at a left tail below the mean), we can apply Chernoff's bound (see e.g.~\cite{MotwaniRaghavan:95}) on each of the two terms in the difference individually. In fact, depending on how large we want $C_2$ to be, we can choose $\const{14}>0$ so that the following inequality holds: 
	\begin{align}
	\Pcondred \left[ p_{xy}\in I_3 \right]\mathbb{E}\left[\left| \Pcondred \left[  J_1 \leq k \gamma \right] - \Pcondred \left[  J_2 \leq k \gamma \right]\right|\middle | p_{xy}\in I_3\right]&\leq \const{15}''' k^{-C_2},\nonumber
	\end{align}
	for some $\const{15}'''>0$. 
	
	\ \\ 
	
	Putting the bounds in the above three regimes together, we see that there is a constant $\const{7}'>0$ (that does not depend on $f, m, k$) such that 
	\begin{align*}
	\left| \Pcondred \left[ J_1 \leq k \gamma \right] - \Pcondred \left[ J_2 \leq k \gamma \right]\right| \leq \frac{\const{7}'}{3} \left( \beta \sqrt{\log k} + \frac{\log m}{m} + k^{-C_2} \right).
	\end{align*}
	The lemma follows by observing that there exists a constant $\const{7}''>0$ such that the following inequalities respectively imply that $m^{-1}\log m \leq \beta \sqrt{\log k}$, and that $k^{-C_2} \leq \beta\sqrt{\log k}$. 
	\begin{align*}
	m &\geq \const{7}\frac{1}{\beta\sqrt{\log k}} \log\left( \frac{1}{\beta \sqrt{\log k}} \right),\\
	k &\geq \left( \frac{1}{\beta} \right)^{1/C_2}.
	\end{align*}
	This concludes the proof of Lemma~\ref{lem:roos+msc}. 
\end{prevproof}

\subsubsection{Proof of Lemma~\ref{prop:popLowerBoundCompletion}}
\label{sec:proofOfproposition1}
In this section, we prove Lemma~\ref{prop:popLowerBoundCompletion}, which is the final piece needed to complete the proof of Proposition~\ref{prop:quantileTestPop}.

\begin{prevproof}{Lemma}{prop:popLowerBoundCompletion}
	Notice that, from Proposition~\ref{prop:quantileBehavior} (on which $\bar{\mathbb{P}}$ is conditioning), we know that $\widehat{q}_\ast \leq r_{13}^{(0)} + \const{5} \phi + \const{5} \alpha$. This implies the second inequality of the lemma, i.e., there exists a constant $\const{6}''>0$ such that $\Pcondred \left[ \widehat{r}_{13} \leq \widehat{q}_\ast \right]\leq \const{6}'' \alpha$ by~\eqref{lem:mscQuantile}.

	To see the first implication of the lemma, we reason as follows. First we make a few observations: 
	\begin{enumerate}
		\item Again, from Proposition~\ref{prop:quantileBehavior}, $\widehat{q}_\ast \geq \widehat{q}_{13}^{(\const{3} \alpha)}\geq {r}_{13}^{(0)}-\const{5}\phi$; the last equality follows from the definitions. 
		\item By definition $q_{12} = p(\delta_{12} + \widehat{\Delta}_{12})$ and, conditioned on $\delta_{12}$, $\widehat{q}_{12}$ is distributed as Bin$(k,q_{12})$.
		\item Because $q_{12} = p(\delta_{12} + \widehat{\Delta}_{12})$,
		$r_{12} = p(\delta_{12} + \Delta_{12})$ and by Proposition~\ref{prop:reduction-height-diff}, it follows that there is $c_{16} > 0$ such that the event $\{r_{12} \leq {r}_{13}^{(0)}-\const{5}\phi - \const{16} \phi\}$ implies the event $\{q_{12} \leq {r}_{13}^{(0)}-\const{5}\phi\}$ under $	\Pcondred$.
		\item The event $\mathcal{E}_{12|3}$ is equivalent to the condition that $r_{12} = p(\delta_{12} + \Delta_{12}) \leq p(\mu_{13} + \Delta_{13}) = r_{13}^{(0)}$.  
	\end{enumerate} 
	We use these facts in the following chain of inequalities, which will lead us to a lower bound on the desired quantity:
	\begin{align}
	\Pcondred \left[ \widehat{q}_{12} \leq \widehat{q}_\ast \middle | \mathcal{E}_{12|3}\right]&\stackrel{(a)}{\geq} \Pcondred \left[ \widehat{q}_{12} \leq {r}_{13}^{(0)}-\const{5}\phi \middle | \mathcal{E}_{12|3}\right]\nonumber\\
	&\stackrel{(b)}{\geq} \Pcondred \left[ {\rm Bin}(k, q_{12}) \leq {r}_{13}^{(0)}-\const{5}\phi\middle | q_{12} \leq {r}_{13}^{(0)}-\const{5}\phi, \mathcal{E}_{12|3} \right]\nonumber\\
	&\qquad \times \Pcondred \left[ q_{12} \leq {r}_{13}^{(0)}-\const{5}\phi \middle |  \mathcal{E}_{12|3}\right]\nonumber\\
	&\stackrel{(c)}{\geq} \Pcondred \left[ {\rm Bin}(k, q_{12}) \leq {r}_{13}^{(0)}-\const{5}\phi\middle | q_{12} \leq {r}_{13}^{(0)}-\const{5}\phi, \mathcal{E}_{12|3} \right]\nonumber\\
&\qquad \times \Pcondred \left[ r_{12} \leq {r}_{13}^{(0)}-\const{5}\phi - \const{16} \phi \middle |  r_{12} \leq r_{13}^{(0)} \right]\nonumber\\	
&\stackrel{(d)}{\geq} \constj' \nonumber,
	\end{align}
	for some constant $C'_j > 0$,
	where $(a)$ follows from Observation 1 above. Inequality $(b)$ follows after conditioning on the event that $q_{12} \leq {r}_{13}^{(0)}-\const{5}\phi$, and $(c)$ follows from Observations 3 and 4 above. Finally, the inequality $(d)$ follows from the Berry-Ess\'een theorem (see e.g.~\cite{Durrett:96}), which gives a constant lower bound on the first line, and~\eqref{lem:mscQuantile} together with the assumption that $\phi \ll p(3f/4)$ and the fact that
	the probability of $\mathcal{E}_{12|3}$ is lower bounded by $p(3f/4)$, which gives a constant lower bound on the second line.
\end{prevproof}


\subsection{Proof of Proposition~\ref{prop:sample-quantile-test}}
\label{sec:proof-sample-quantile-test}

We first prove the following lemma: 
\begin{lemma}[Variance upper bound] 
	\label{lem:quantileTest-varianceUpperBound}
	There is a constant $\const{8}'>0$ such that for $x,y \in \xcal$,
 $$\overline{{\rm Var}}(\widehat{s}_{xy}) \leq \frac{\const{8}'}{\left| \mquant{2} \right|} \left( \bar{\mathbb{P}}\left[ \mathcal{E}_{12|3} \right] + \phi + \alpha\right)$$
	where recall that $\mathcal{E}_{12|3}$ is the event that there is a coalescence in the internal branch.
\end{lemma}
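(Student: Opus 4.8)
The plan is to exploit the fact that the reduction uses disjoint gene sets: $\widehat{q}_\ast$ is a function of $\mquant{1}$ only, while $\widehat{s}_{xy}$ is computed from $\mquant{2}$, and the genes are drawn independently. Writing $X_i = \ind\{\widehat{q}^i_{xy}\le \widehat{q}_\ast\}$ for $i\in\mquant{2}$, so that $\widehat{s}_{xy} = |\mquant{2}|^{-1}\sum_{i\in\mquant{2}}X_i$, the $X_i$ are, conditionally on $\widehat{q}_\ast$, i.i.d.\ $\mathrm{Bernoulli}(p_\ast)$ with $p_\ast = \Pcondred[\widehat{q}^i_{xy}\le\widehat{q}_\ast\mid\widehat{q}_\ast]$. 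Hence the conditional variance is exactly $p_\ast(1-p_\ast)/|\mquant{2}|\le p_\ast/|\mquant{2}|$, and this is precisely the variance proxy that the Bernstein step of Proposition~\ref{prop:sample-quantile-test} needs (that step is itself carried out conditionally on $\widehat{q}_\ast$). Thus the content of the lemma reduces to a \emph{uniform} upper bound $p_\ast \le \const{8}'\big(\Pcondred[\mathcal{E}_{12|3}]+\phi+\alpha\big)$, valid for every value of $\widehat{q}_\ast$ in the support allowed under $\bar{\mathbb{P}}$.

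To establish this bound I would first invoke Proposition~\ref{prop:quantileBehavior} (on whose implications $\bar{\mathbb{P}}$ conditions), which confines $\widehat{q}_\ast=\max_{uv}\widehat{q}^{(\const{3}\alpha)}_{uv}$ to satisfy $\widehat{q}_\ast\le \max_{uv}q_{uv}^{(0)}+\const{5}\alpha$, so that $p_\ast\le\Pcondred[\widehat{q}^i_{xy}\le \max_{uv}q_{uv}^{(0)}+\const{5}\alpha]$. I would then decompose over $\mathcal{E}_{12|3}$ exactly as in~\eqref{eq:conditionedOnCoalescence}, giving $p_\ast\le \Pcondred[\mathcal{E}_{12|3}]+\Pcondred[\widehat{q}^i_{xy}\le\widehat{q}_\ast\mid\mathcal{E}_{12|3}^c]$. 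On $\mathcal{E}_{12|3}^c$ the exchangeability of the MSC lineages lets me replace $\widehat{q}_{xy}$ by $\widehat{r}_{13}$ up to the Roos slack $\phi_2=\const{7}'\phi\sqrt{\log k}$ via Lemma~\ref{prop:roos+msc}, after which $\Pcondred[\widehat{r}_{13}\le\widehat{q}_\ast]\le\const{6}''\alpha$ by the second inequality of Lemma~\ref{prop:popLowerBoundCompletion} together with the quantile-spacing bound~\eqref{lem:mscQuantile}. Combining these yields $p_\ast \le \Pcondred[\mathcal{E}_{12|3}]+\const{6}''\alpha+\phi_2 \lesssim \Pcondred[\mathcal{E}_{12|3}]+\phi+\alpha$, the $\sqrt{\log k}$ factor in $\phi_2$ being absorbed into $\lesssim$. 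For the pairs $(1,3)$ and $(2,3)$ the same bound is easier, since there $\widehat{q}_\ast\approx q^{(0)}_{xy}+O(\alpha)$ and $p_\ast=O(\alpha+\phi)$ follows directly from~\eqref{lem:mscQuantile} and Lemma~\ref{lem:cdfBehavior}.

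Assembling, the conditional variance is at most $|\mquant{2}|^{-1}\const{8}'\big(\Pcondred[\mathcal{E}_{12|3}]+\phi+\alpha\big)$, which is the asserted bound. If one wishes to read $\overline{{\rm Var}}(\widehat{s}_{xy})$ as the full (unconditional, under $\bar{\mathbb{P}}$) variance, I would add the law-of-total-variance decomposition: the conditional-variance term, after taking $\Econdred$ and using $\Econdred[p_\ast]=s_{xy}$, gives the stated expression, while the second term $\overline{{\rm Var}}(p_\ast)$ is a lower-order contribution controlled by the concentration of the empirical quantile $\widehat{q}_\ast$ (Proposition~\ref{prop:quantileBehavior} pins it to a window of width $O(\alpha+\phi)$) and the Lipschitzness of $t\mapsto\Pcondred[\widehat{q}^i_{xy}\le t]$ on that window.

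The hard part is the uniform control of the $\mathcal{E}_{12|3}^c$ contribution to $p_\ast$: transferring the ``closeness to symmetry'' estimate of Lemma~\ref{prop:roos+msc} (itself built on Roos's binomial-CDF perturbation bound) to the threshold $\widehat{q}_\ast$ rather than to a fixed quantile. This is the only route by which the slack $\phi$ from the approximate stochastic Farris transform enters the variance, and keeping it at order $\phi+\alpha$ — not larger — is exactly what makes the downstream Bernstein bound in Proposition~\ref{prop:sample-quantile-test} come out with the stated exponent $|\mquant{2}|\,p(3f/4)^2/[\const{8}(p(3f/4)+\alpha)]$.
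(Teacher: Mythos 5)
Your proposal is correct in its essentials but proves the key estimate by a genuinely different route from the paper. Both arguments reduce the lemma to a uniform bound on $p_\ast = \Pcondred[\widehat{q}_{12}\le \widehat{q}_\ast]$ (the Bernoulli variance step and the conditional i.i.d.\ observation are the same, and your explicit treatment of the easy pairs $(1,3)$, $(2,3)$ is fine). For the pair $(1,2)$ you split on the coalescence event, $p_\ast \le \Pcondred[\mathcal{E}_{12|3}] + \Pcondred[\widehat{q}_{12}\le\widehat{q}_\ast \mid \mathcal{E}_{12|3}^c]$, and control the second term by the symmetry-transfer Lemma~\ref{prop:roos+msc} followed by the tail bound of Lemma~\ref{prop:popLowerBoundCompletion}. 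The paper never conditions on $\mathcal{E}_{12|3}$: it writes $\Pcondred[\widehat{q}_{12}\le\widehat{q}_\ast]\le \Pcondred\bigl[q_{12}\le r_{13}^{(0)}+\const{5}\phi+\const{5}\alpha\bigr]+\const{3}'\alpha$ (separating the MSC randomness from the sequence noise via Lemma~\ref{lem:cdfBehavior}), then converts the window width into a probability through the quantile-density bound~\eqref{lem:mscQuantile}, using that $r_{13}^{(0)}-q_{12}^{(0)}\lesssim \Pcondred[\mathcal{E}_{12|3}]+\phi$ because the internal branch length and $\Pcondred[\mathcal{E}_{12|3}]$ are of the same order. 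The paper's argument is more elementary and self-contained: it needs neither the Roos machinery nor the hypotheses on $m$ and $k$ that Lemma~\ref{prop:roos+msc} carries, whereas your route recycles heavier, already-proved lemmas at the price of importing their conditions (available where this lemma is applied, inside Proposition~\ref{prop:sample-quantile-test}, but not needed by the paper's proof).

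Two steps need repair. First, absorbing $\phi_2=\const{7}'\phi\sqrt{\log k}$ ``into $\lesssim$'' is not legitimate: the lemma asserts a bound with an absolute constant $\const{8}'$, and $\sqrt{\log k}$ is not a constant, so as written your argument only yields $\Pcondred[\mathcal{E}_{12|3}]+\phi\sqrt{\log k}+\alpha$, which is strictly weaker than the claim. The fix is to absorb $\phi_2$ into the coalescence term rather than the $\phi$ term: under the standing hypothesis of Proposition~\ref{prop:quantileTestPop} that $\phi=\mathcal{O}\bigl(p(3f/4)/\sqrt{\log k}\bigr)$, one has $\phi_2\lesssim p(3f/4)\le \Pcondred[\mathcal{E}_{12|3}]$, which restores the stated form. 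Second, your law-of-total-variance remark about the unconditional variance is wrong as stated: the term $\overline{\mathrm{Var}}(p_\ast)$ carries no $1/\left|\mquant{2}\right|$ factor---it is of order $(\alpha+\phi)^2$ uniformly in $\left|\mquant{2}\right|$---so it is not ``lower order'' and cannot be dominated by $\frac{\const{8}'}{\left|\mquant{2}\right|}\bigl(\Pcondred[\mathcal{E}_{12|3}]+\phi+\alpha\bigr)$ once $\left|\mquant{2}\right|$ is large. Your primary, conditional-on-$\widehat{q}_\ast$ reading is the correct one (it is what the Bernstein step of Proposition~\ref{prop:sample-quantile-test} actually uses, and what the paper implicitly does by treating $\widehat{q}_\ast$ as fixed); the unconditional add-on should simply be dropped.
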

\begin{prevproof}{Lemma}{lem:quantileTest-varianceUpperBound}
	We begin by observing that the variance of $\widehat{s}_{12}$ is bounded from above by $\frac{1}{\left| \mquant{2} \right|}\Pcondred \left[ \widehat{q}_{12}\leq \widehat{q}_\ast \right]$. So we devote the rest of the proof to controlling this probability. 
	
	First, observe that by Proposition~\ref{prop:quantileBehavior}, $\widehat{q}_\ast\leq r_{13}^{(0)} + \const{5}\phi + \const{5} \alpha$.  Also, observe that conditioned on the random distance $\delta_{12}$, $\widehat{q}_{12}$ is distributed according to Bin$(k, q_{12})$, where $q_{12} = p\left( \delta_{12} + \widehat{\Delta}_{12} \right)$. Finally, we observe that from Lemma~\ref{lem:cdfBehavior},  we can conclude that 
	$$\Pcondred \left[ \widehat{q}_{12} \leq r_{13}^{(0)} + \const{5}\phi + \const{5} \alpha \middle | q_{12} > r_{13}^{(0)} + \const{5}\phi + \const{5}\alpha\right] \leq \const{3}' \alpha.$$
	Note that while this looks different from what Lemma~\ref{lem:cdfBehavior} guarantees, it follows given the memoryless property of the exponential distribution. 
	Therefore, we can bound $\Pcondred \left[ \widehat{q}_{12} \leq \widehat{q}_\ast \right]$ as follows 
	\begin{align}
	\Pcondred \left[ \widehat{q}_{12}\leq \widehat{q}_\ast \right]& \leq \Pcondred \left[ \widehat{q}_{12} \leq r_{13}^{(0)} + \const{5}\phi + \const{5} \alpha \right]\nonumber\\
	&\leq \Pcondred \left[ q_{12} \leq r_{13}^{(0)} + \const{5}\phi + \const{5} \alpha  \right] + \nonumber\\
	&\qquad\Pcondred \left[ \widehat{q}_{12} \leq r_{13}^{(0)} + \const{5}\phi + \const{5} \alpha \middle | q_{12} > r_{13}^{(0)} + \const{5}\phi + \const{5} \alpha \right]\nonumber\\
	&\leq \Pcondred \left[ q_{12} \leq r_{13}^{(0)} + \const{5}\phi + \const{5} \alpha  \right] + \const{3}' \alpha. \label{eq:varUpperBound1}
	\end{align}
	
	From~\eqref{lem:mscQuantile}, it follows that there is a constant $\const{8}''>0$ such that 
	\begin{equation}
	\Pcondred \left[q_{12} \leq r_{13}^{(0)} + \const{5}\phi + \const{5} \alpha   \right]\leq \const{8}'' \left(r_{13}^{(0)} + \const{5}\phi + \const{5} \alpha  - q_{12}^{(0)}\right).	\label{eq:varUpperBound2}
	\end{equation}
	Moreover,
	\begin{align*}
	r_{13}^{(0)} - q_{12}^{(0)} &= p\left( \delta_{13}^{(0)} + \Delta_{13} \right) - p\left( \delta_{12}^{(0)} + \widehat{\Delta}_{12} \right)\\
	&\leq  p\left( \delta_{13}^{(0)} + \Delta_{13} \right) - p\left( \delta_{12}^{(0)} + {\Delta}_{12} \right) + \const{8}''' \phi\\
	&\leq \const{8}''' \left(\Pcondred \left[ \mathcal{E}_{12|3} \right] + \phi\right).
	\end{align*}
	The last inequality follows from the fact that $\Pcondred \left[ \mathcal{E}_{12|3} \right]$ is of the order of the length of the internal branch; and so is the difference on the second line.
	Notice that this along with \eqref{eq:varUpperBound1} and \eqref{eq:varUpperBound2} imply that there is a constant $\const{8}'>0$ (after changing it appropriately) such that $\Pcondred \left[ \widehat{q}_{12} \leq \widehat{q}_\ast \right]\leq \const{8}'\left( \Pcondred \left[ \mathcal{E}_{12|3} \right] + \phi + \alpha \right)$, and concludes the proof.
	\end{prevproof}

\begin{prevproof}{Proposition}{prop:sample-quantile-test}
	Recall that we are restricting our attention to a particular triple of leaves $\mathcal{X} = \left\{1,2,3 \right\}$ which has a topology  $12|3$ with respect to the true species tree $S$. In this case, we know that an error in the quantile test implies that either $\widehat{s}_{13} > \widehat{s}_{12}$ or $\widehat{s}_{23} > \widehat{s}_{12}$. Therefore, we can control the probability that the algorithm makes an error in correctly identifying the topology of the triple at hand as follows. 
	\begin{align}
	\mathbb{P}\left[ {\rm error} \right]&\leq \Pcondred \left[ \widehat{s}_{13} \geq \widehat{s}_{12} \right] + \Pcondred \left[ \widehat{s}_{23} \geq \widehat{s}_{12} \right]\nonumber\\
	&\leq \Pcondred \left[ \widehat{s}_{13} - s_{13} \geq \frac{s_{12} - s_{13}}{2} \right] + \Pcondred \left[ s_{12} - \widehat{s}_{12} \geq \frac{s_{12} - s_{13}}{2} \right]\nonumber\\
	&\qquad + \Pcondred \left[ \widehat{s}_{23} - s_{23} \geq \frac{s_{12} - s_{23}}{2} \right] + \Pcondred \left[ s_{12} - \widehat{s}_{12} \geq \frac{s_{12} - s_{23}}{2} \right]\nonumber\\
	&\triangleq I_1 + I_2 + I_3 + I_4\label{eq:quantileTest-sampleVersion-unionBound}
	\end{align}	
	We will now use  concentration inequalities to control each of the above terms.
	
	Consider $I_2$ first. We need two ingredients for invoking Bernstein's inequality: (1) an effective lower bound on the ``gap'' $\frac{s_{12} - s_{13}}{2}$, and (2) an effective upper bound on the variance of the random variable $\widehat{s}_{12}$. We will use Proposition~\ref{prop:quantileTestPop} for (1), and Lemma~\ref{lem:quantileTest-varianceUpperBound} for (2). That is,
	\begin{align}
	I_2 = \Pcondred \left[ s_{12} - \widehat{s}_{12} \geq \frac{s_{12} - s_{13}}{2} \right] &\stackrel{(a)}{\leq} \exp\left( - \frac{0.5 \left( \frac{s_{12}-s_{13}}{2} \right)^2}{{\rm Var}(\widehat{s}_{12})  + \frac{1}{6}(s_{12}-s_{13})} \right)\nonumber\\
	&\stackrel{(b)}{\leq} \exp\left( - \frac{\left| \mquant{2} \right| \left(\const{6} \Pcondred \left[ \mathcal{E}_{12|3}\right]\right)^2}{\const{8}'\left( \alpha+ \phi +  \Pcondred \left[ \mathcal{E}_{12|3} \right] \right) + \frac{\const{6}}{6}\Pcondred \left[ \mathcal{E}_{12|3} \right]} \right)\nonumber\\
	&\stackrel{(c)}{\leq} \exp\left( - \frac{\left| \mquant{2}˜ \right| p(3f/4)^2}{\const{8}\left( p(3f/4) + \alpha \right)} \right),
	\end{align}
	where $(a)$ follows from Bernstein's inequality (see e.g.~\cite{boucheron2013concentration}); $(b)$ follows from the lower bound on $s_{12} - s_{13}$ provided by Proposition~\ref{prop:quantileTestPop} and the upper bound on Var$(\widehat{s}_{12})$ provided by Lemma~\ref{lem:quantileTest-varianceUpperBound}; and $(c)$ follows from the fact that $\Pcondred \left[ \mathcal{E}_{12|3} \right]$ is bounded from below by $p(3f/4)$. We have absorbed all constants into $\const{8}$. 
	
	We can similarly control $I_1, I_3, $and $I_4$. Putting all this back in \eqref{eq:quantileTest-sampleVersion-unionBound} concludes the proof. 
\end{prevproof}

\end{document}